\newcommand{\abs}[1]{\lvert #1 \rvert}
\newcommand{\AAbs}[1]{\left\lvert #1 \right\rvert}
\newcommand{\norm}[1]{\lVert #1 \rVert}
\newcommand{\Norm}[1]{\left\lVert #1 \right\rVert}
\newcommand{\brac}[1]{\langle #1 \rangle}
\newcommand{\cov}{\mathop{\rm Cov}}
\newcommand{\var}{\mathop{\rm Var}}
\newcommand{\ie}{{\em i.e.,~}}
\newcommand{\R}{\mathbb{R}}
\newcommand{\Obj}{\mathcal{O}}
\newcommand{\E}{\mathbb{E}}
\renewcommand{\P}{\mathbb{P}}
\newcommand{\N}{\mathbb{N}}
\newcommand{\Dmu}{\mathbf{\Delta_\mu}}
\newcommand{\Dmuu}{\mathbf{\Delta}^2_\mu}
\newcommand{\Dgam}{\mathbf{\Delta_\gamma}}
\newcommand{\Dgamm}{\mathbf{\Delta}^2_\gamma}
\newcommand{\Dlamb}{\mathbf{\Delta_\lambda}}
\newcommand{\thr}{\theta}
\newcommand{\hatj}{\hat{j}}
\newcommand{\W}{W}
\newcommand{\WW}{W}
\newcommand{\taupi}{\tau^{\Pi^j}_n}
\newcommand{\taun}{\tau^{N^j}_n}
\newcommand{\tauw}{\tau^{W^j}_n}
\newcommand{\taupibis}{\Tilde{\tau}^{\pi^j}_n}
\newcommand{\tauwbis}{\Tilde{\tau}^{W^j}_n}
\newcommand{\taunbis}{\Tilde{\tau}^{N^j}_n}
\theoremstyle{plain}
\newtheorem{theorem}{Theorem}[section]
\newtheorem{proposition}[theorem]{Proposition}
\newtheorem{lemma}[theorem]{Lemma}
\newtheorem{corollary}[theorem]{Corollary}
\theoremstyle{definition}
\newtheorem{definition}[theorem]{Definition}
\newtheorem{assumption}[theorem]{Assumption}
\theoremstyle{remark}
\newtheorem{remark}[theorem]{Remark}
\begin{document}

\title[Spiking Neural Models for Decision-Making Tasks with Learning]{Spiking Neural Models for Decision-Making Tasks with Learning}

\author*[1]{\fnm{Sophie} \sur{Jaffard}}\email{sophie.jaffard@univ-cotedazur.fr}

\author[2]{\fnm{Giulia} \sur{Mezzadri}}\email{gm3026@columbia.edu}

\author[3]{\fnm{Patricia} \sur{Reynaud-Bouret}}\email{patricia.reynaud-bouret@univ-cotedazur.fr}
\author[4]{\fnm{Etienne} \sur{Tanré}}\email{Etienne.Tanre@inria.fr}

\affil[1]{\orgdiv{Laboratoire J. A. Dieudonné}, \orgname{Université Côte d'Azur}, \city{Nice}, \country{France}}

\affil[2]{\orgdiv{Cognition and Decision Lab}, \orgname{Columbia University}, \city{New York}, \country{US}}

\affil[3]{\orgdiv{Laboratoire J. A. Dieudonné}, \orgname{CNRS}, \orgname{Université Côte d'Azur}, \city{Nice}, \country{France}}

\affil[4]{\orgdiv{Universit\'e C\^ote D'Azur}, \orgname{Inria, CNRS, LJAD}, \city{Nice}, \country{France}}

\abstract{
    In cognition, response times and choices in decision-making tasks are commonly modeled using Drift Diffusion Models (DDMs), which describe the accumulation of evidence for a decision as a stochastic process, specifically a Brownian motion, with the drift rate reflecting the strength of the evidence. In the same vein, the Poisson counter model describes the accumulation of evidence as discrete events whose counts over time are modeled as Poisson processes. This model has a spiking neurons interpretation as these processes are used to model neuronal activities. However, these models lack a learning mechanism and are limited to tasks where participants have prior knowledge of the categories. To bridge the gap between cognitive and biological models, we propose a biologically plausible Spiking Neural Network (SNN) model for decision-making that incorporates a learning mechanism and whose neurons activities are modeled by a multivariate Hawkes process. First, we show a coupling result between the DDM and the Poisson counter model, establishing that these two models provide similar categorizations and reaction times and that the DDM can be approximated by spiking Poisson neurons. To go further, we show that a particular DDM with correlated noise can be derived from a Hawkes network of spiking neurons governed by a local learning rule. In addition, we designed an online categorization task to evaluate the model predictions. This work provides a significant step toward integrating biologically relevant neural mechanisms into cognitive models, fostering a deeper understanding of the relationship between neural activity and behavior.
}

\keywords{Spiking Neural Network, Decision-making, Hawkes process, Local learning rule, Drift diffusion model}

\pacs[MSC Classification]{60G55, 68T05, 60F15, 92-10}

\maketitle
\subsection*{Contribution}
S. Jaffard did the writing, the proofs, collected the data and analyzed them. G. Mezzadri designed the experiment and participated to the writing. P. Reynaud-Bouret participated to the writing and the proofs, provided some of the initial ideas about the coupling and supervised the work. E. Tanré provided some of the initial ideas about the coupling and a thorough checking of all the mathematical details.

\section{Introduction}

One of the main challenges of the $21$st century is to bridge the gap between brain and behavior, and computational models are a fundamental tool 
researchers use to address this challenge. In this line,
developing biologically realistic models to understand the decision-making processes of humans and animals is a key focus in neuroscience and cognition. 

A prominent model in cognition is the Drift Diffusion Model (DDM), introduced by \cite{ratcliff1978theory,ratcliff1981theory,ratcliff1988continuous} 
who demonstrated how the DDM could accurately predict outcomes such as accuracy, reaction times, and error latencies across various decision-making tasks. 
The model achieves this by integrating evidence over time, with a drift rate reflecting evidence strength and decision boundaries representing the thresholds for making a decision. Initially designed for two-choice experiments, it extends to multiple choices by modeling evidence accumulation as a multidimensional drift-diffusion process, with each component competing against the others \citep{roxin2019drift}.
The reasons for the growing attention to the DDM are multiple. It has been shown to provide accurate descriptions of accuracy 
and reaction time data across a wide range of psychological tasks, including perceptual tasks 
\citep{ratcliff1998modeling, ditterich2006stochastic, brunton2013rats} and value-based choices 
\citep{ratcliff2008diffusion, philiastides2013influence, hutcherson2015neurocomputational}.

It is also supported by neurobiological evidence: in the 1990s and 2000s, neurobiologists like Shadlen and Newsome provided crucial neurobiological evidence supporting the DDM by showing that neural activity in brain areas such as the lateral intraparietal area (LIP) 
reflects the accumulation of sensory evidence, consistent with DDM predictions 
\citep{shadlen1996motion, gold2001neural, ratcliff2007dual, shadlen2013decision}. 
However, DDMs do not directly include the modeling of neuronal activity.

Biologically relevant decision-making models \citep{brody2003basic, machens2005flexible, deco2007deterministic} rely on perceptual information to guide the choice between behaviors. A straightforward approach involves two interacting groups of neurons, each defined by its average firing rate. Decision-making in this context is modeled as a transition from a spontaneous state (where both groups have similar firing rates) to a decision state (characterized by a high or low activity ratio between the groups), and this process is often represented as a stochastic dynamical system. Such models effectively approximate reaction times and can be used to compute biologically meaningful information. For example, \cite{carrillo2011decision} offer a mathematical analysis of these systems in nonlinear scenarios and computes the probability of reaching specific decisions and the average time required to do so. Other approaches than evidence accumulation can be implemented in such models: for instance, \cite{insabato2010confidence} propose a decision confidence mechanism. However none of these models are linked to the DDM nor show such accurate descriptions of real accuracy or reaction times.

Simpler than these models and more similar to the Drift Diffusion Model, the Poisson counter model \citep{laberge1994quantitative, ratcliff2004comparison} represents evidence accumulation as discrete events, with their counts over time modeled as homogeneous Poisson processes. This approach aligns with how neuroscientists often use Poisson processes to model neuronal activity based on firing rates, offering a spiking neuron interpretation: evidence accumulation can be seen as the spike counts of neurons coding for potential responses. \cite{smith2000time} extend the counter model by incorporating inhomogeneous Poisson processes. However, since the Poisson process is not well-suited for modeling the interactions between neurons, this approach is limited to representing individual neurons coding for categories and cannot capture the dynamics of more complex networks of spiking neurons.

A common tool to model spike trains of interacting neurons is the multivariate Hawkes process \citep{hawkes1971spectra}, which is a self-exciting and mutually exciting point process. Its field of application is not limited to neuroscience: it is well-adapted to model earthquake data \citep{turkyilmaz2013comparing}, financial transactions \citep{bacry2015hawkes}, health data \citep{pmlr-v68-bao17a}, social networks \citep{zhou2013learning}, or more generally, any sequences of events such that the occurrence of an event influences the probability of further events to occur.

Poisson and Hawkes processes seem to be promising microscopic processes to explain the power of the DDM prediction. Indeed diffusion approximations of Poisson and Hawkes processes have been investigated in several contexts. For instance, \cite{bacry2012scaling} provide a functional central limit theorem for multivariate Hawkes processes for finance applications, and \cite{ethier2009markov} as well as \cite{bretagnolle1989hungarian} provide strong approximations of the Poisson process by a Brownian process. %
Strong approximations by diffusive limits have also been derived to study mean field limits of networks of interacting neurons \citep{erny2023strong}, and \cite{besanccon2024diffusive} provide coupling results between Poisson and Hawkes processes in one dimension.

However, neither the DDM, the Poisson counter model, nor the other biologically relevant decision-making models discussed here incorporate a learning mechanism. These models are limited to tasks where participants already possess prior knowledge of the possible responses. In the DDM, for instance, this prior knowledge is represented by the value of the drift rate, while in stochastic dynamical systems, it is captured by the (fixed) strength of synaptic connections. For example, \cite{deco2007deterministic} assumes that synaptic connections between neurons are the result of a Hebbian learning mechanism that has taken place before the model they propose.

On the other hand, learning mechanisms are at the heart of artificial neural networks (ANNs), the first of which is the perceptron \citep{rosenblatt1957perceptron}, inspired by the structure and function of biological neurons and their networks. This biological inspiration has continued to influence the development of increasingly advanced machine learning models, such as convolutional neural networks \citep{lecun1989handwritten}, which draw their inspiration from biological processes observed in the visual cortex by \cite{hubel1962receptive}. Conversely, machine learning algorithms play a central role in implementing spiking neural networks (SNNs) \citep{Tavanaei_2019}. These networks, more complex than ANNs, model neuronal activity through sequences of spikes, mimicking the electrical pulses of biological neurons. Particularly relevant for studying the brain's neural code, SNNs also offer insights into the design of energy-efficient algorithms \citep{stone2018principles}. To be realistic, SNNs are typically trained using local learning rules such as Hebbian learning \citep{hebb2005organization} or spike-timing-dependent plasticity \citep{caporale2008spike}. However, spiking neural networks (SNNs) are typically not designed to model decision-making tasks. They do not inherently account for reaction times and are primarily optimized for achieving strong empirical performance by mimicking the functioning of the brain rather than making decisions that closely resemble human behavior.
\newline

The focus of the present work is to bridge the gap between decision-making models used in cognition and brain-inspired models of spiking neurons, and to establish that prior knowledge assumed in decision-making models may be the result of a biologically relevant learning mechanism. First, we introduce both a drift diffusion model and a Poisson counter model, outlining conditions under which each produces accurate decisions (Theorems~\ref{th ddm} and~\ref{th poisson correct categ}). As an initial step toward deriving the DDM from spiking neuron-based models, we establish a coupling between the two models, establishing that they provide similar categorizations and reaction times (Theorem~\ref{th coupling ddm poisson}).  

To go further, we introduce a biologically inspired Spiking Neural Network (SNN) model for decision-making. This model integrates a learning mechanism and is provably close to the drift diffusion model. Neuronal activity within the network is represented by a multivariate Hawkes process, which accounts for interactions between neurons. The output neurons encode possible decisions, with their spike counts representing evidence accumulation for respective responses, which makes of our model a Hawkes counter model. Synaptic connections are adjusted via a local learning rule provided by the expert aggregation framework \citep{cesa2006prediction}.  In \cite{jaffard2024provable}, we introduced a simpler version of the network. The focus of this work was to prove theoretically that a biologically plausible neural network could learn using local mechanisms only. Building on this foundation, \cite{jaffard2024chani} extended the network by incorporating hidden layers specifically designed to detect neuronal synchronizations. This extension not only advanced the theoretical results from \cite{jaffard2024provable} but also demonstrated that our algorithm automatically produces neuronal assemblies in the sense that the network can encode several concepts and that a same neuron in the intermediate layers might be activated by more than one concept. Additionally, we provided an in-depth analysis of the types of concepts the network can encode and numerical results on handwritten digits dataset. However, this initial version was not suited to model decision-making tasks as it was unable to model reaction times.

The model presented in the current work is a refined version of this network, which introduces two key differences from these earlier versions. First, neuronal activities are modeled in continuous time rather than discrete time, which is more realistic and closer to the drift diffusion model and Poisson counter model. Second, we have incorporated dynamic decision durations, allowing the model to account for reaction times alongside decision-making processes, unlike the fixed decision durations used previously. These improvements enable the model to be compared with other models such as the DDM or the Poisson counter model.

We analyze the network's asymptotic behavior (Proposition~\ref{prop conv weights}) and provide conditions under which the model delivers accurate decisions (Theorem~\ref{th correct classif hawkes}). Additionally, we establish a coupling between our model and the DDM (Theorem~\ref{th coupling hawkes ddm}) thanks to strong approximations similar to \cite{ethier2009markov}, demonstrating that a DDM with correlated noise can be approximated by a Hawkes network of spiking neurons capable of learning via local rules.

To evaluate our model, we designed an online categorization task in which participants were asked to classify stimuli -- rockets in this case -- defined by four characteristics, with one randomly selected feature serving as the basis for learning. Such feature-based tasks have long been central to cognitive science \citep{Nosofsky1986,Love2004}, providing valuable insights into how individuals learn to group stimuli based on shared characteristics. This well-established paradigm made it a natural starting point for testing our model. By analyzing participants' reaction times 
and choices, we demonstrate the utility of the Hawkes counter model in studying learning and 
decision-making processes.  We describe the experiment we designed and detail its implementation within our model. We prove that the network can learn to perform this task successfully (Proposition~\ref{prop cas particulier}). Furthermore, we estimate model parameters for individual participants, enabling a detailed analysis of their learning behaviors.  
\newline

Section~\ref{sec ddm and poisson} introduces the decision-making task we aim to model, along with the drift diffusion model and the Poisson counter model, including their mathematical analyses and coupling. Section~\ref{sec hawkes counter model} outlines the Hawkes counter model for decision-making tasks with learning and its corresponding mathematical analysis and coupling with the DDM. Section~\ref{sec expe} details our experiment and its analysis based on our model. All notations defined in the main text are listed in Appendix~\ref{sec notations}, and any proofs not included in the main text are provided in Appendix~\ref{sec proofs}.

\section{Drift Diffusion and Poisson counter models} \label{sec ddm and poisson}

\subsection{Framework}

We aim to model first the following decision-making task: a participant sequentially categorizes objects, 
each having a nature from a set $\Obj$\label{def:obj}, into one of several categories belonging to a set $J$. Each category 
$j\in J$ is a subset of $\Obj$, and together the categories form a partition of $\Obj$. Participants may 
encounter the same object nature multiple times; for example, the first object presented might be a blue 
circle, as might the tenth object. See Figure~\ref{fig objects} for an example.

In the present section, we present two models: the drift diffusion model and the Poisson counter model. 
They provide predictions for both categorizations and reaction times. These models assume that participants already possess prior knowledge about object natures and categories, and do not describe scenarios where participants must learn the categories of the presented objects.

For both models, we give explicit conditions under which they predict correct categorizations with high probability. Then, we prove a coupling result between the models indicating that these two models provide very similar reaction times and categorizations.

In the sequel, to simplify formula, we use the following notation: for parameters $p_1,\dots, p_k$, we denote by $\Box_{p_1,\dots,p_k}$ any positive constant depending only on parameters $p_1,\dots, p_k$. The value of $\Box_{p_1,\dots,p_k}$ can change from line to line and it is not necessarily equal on both sides of an inequality.

For $a,b\in \R$, we denote $a\vee b \coloneqq \max(a,b)$ and $a\wedge b \coloneqq \min(a,b)$. Also when quantities have several indices, the dropping of one index corresponds to the sequence on this index. For instance, $\gamma = (\gamma^j_o)_{j\in J, o\in \Obj}$ whereas $\gamma_o = (\gamma^j_o)_{j\in J}$ and $\gamma^j = (\gamma^j_o)_{o\in \Obj}$. If two sets of indices are available, typically $I$ and $J$, to avoid confusion, we use the set instead of the index. For instance, $\gamma^I=(\gamma^i_o)_{i\in I, o\in \Obj}$. Also $ \| \cdot\|_{\infty}$ is the infinite norm.

\begin{figure}
    \centering
    \includegraphics[width=0.8\linewidth]{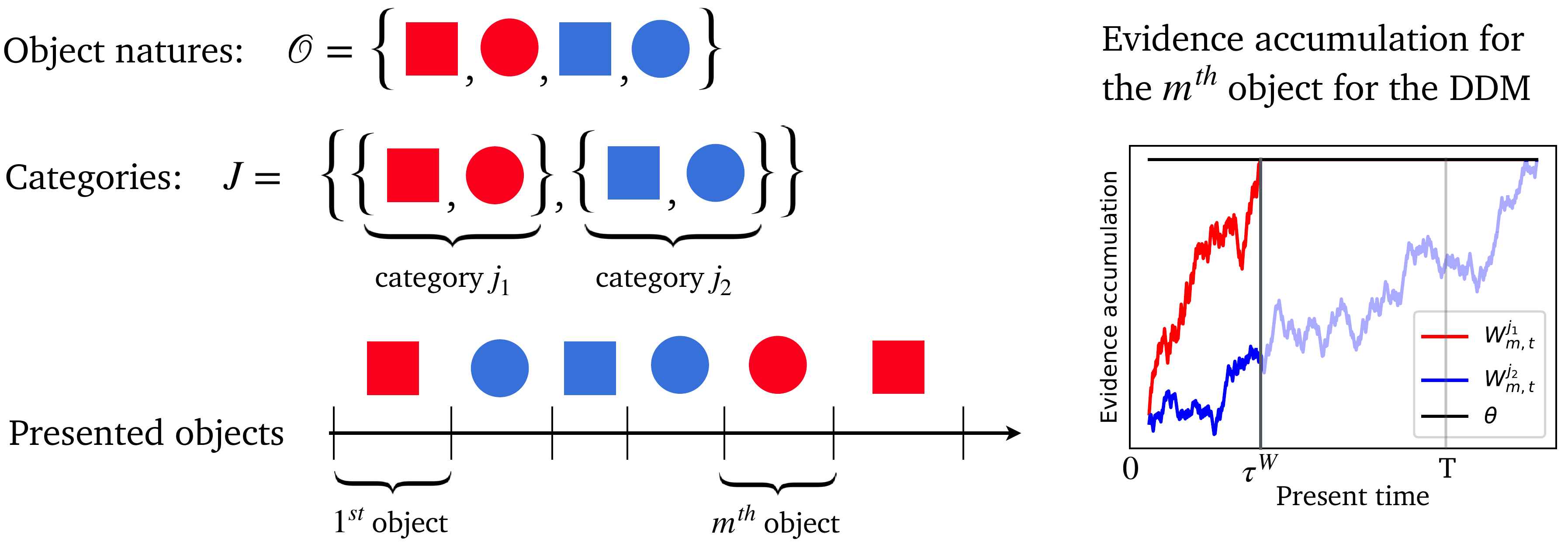}
    \caption{Illustrative example of objects natures, categories, presented objects and evidence accumulation 
    for the $m^{th}$ presented object for the DDM. 
    Here, there are $4$ object natures, $2$ categories and $6$ 
    presented objects. The chosen response $\hatj$ is the category coded by the first process to reach threshold $\thr$, \ie $\hatj = j_1$ and the corresponding reaction time is $\tau^W$.
    }
    \label{fig objects}
\end{figure}

\subsection{Drift Diffusion model} \label{sec ddm}

\subsubsection{The model}

During the presentation of the $m^{th}$ object\label{def:m}, the accumulation of evidence of the categories $j\in J$ is modeled by a 
$\abs{J}-$dimensional drifted Brownian motion \label{def:wm} $\WW_m = (\W^j_{m})_{j\in J}$ with mean vector defined by $\forall j\in J, \forall t\geq 0$,
$\E[\W^j_{m,t}] = \mu^j_o t$ and diagonal covariance matrix defined by \(\forall j_1,j_2 \in J\), $\forall t,s\geq 0$, 
$\cov(\W^{j_1}_{m,t}, \W^{j_2}_{m,s}) = \mathbbm{1}_{\{j_1 = j_2\}}
\mu^{j_1}_o (t\wedge s)$ 
where $o$ \label{def:petito} is the nature of the $m^{th}$ object and $\mu^j_o\geq 0$ is the drift. Therefore, the process coding for evidence accumulation in favor of category $j$ can be rewritten as 
\begin{equation} \label{eq def w}
   \forall t\geq 0, \quad \W^j_{m,t} \coloneqq \mu^j_o t + \sqrt{\mu^j_o} B^j_{m,t}
\end{equation}
where the $(B^j_{m,t})_{t\geq 0}$ are independent standard Brownian motions. 
The object is categorized in the category coded by the first process $(\W^j_{m,t})_{t\geq 0}$ among the processes of $\WW_m$ to reach 
a certain threshold $\thr>0$\label{def:theta} before a limit duration $T$\label{def:limitT}. 
We denote 
\begin{equation}\label{tau}
\tau^{\W^j}\coloneqq \inf\{u\geq 0; \W^j_{m,u} \geq \thr\}.
\end{equation}

\begin{remark}
The law of \(\tau^{\W^j}\) is the inverse Gaussian Distribution with parameters
\(\frac{\thr}{\mu^j_o}\) and \(\frac{\thr^2}{\mu^j_o}\)  \cite[p.687]{lovric2024international}.
\end{remark}
The winning process is then 
\begin{equation}\label{choice}
\hatj = \arg\min_{j\in J} \tau^{\W^j}.
\end{equation}
Note that $\hatj$ and $\tau^{\W^j}$ depend on $m$ but we drop the index for simplicity.
However, if 
$\tau^{\W^{\hatj}}\geq T$, it means that accumulation evidence is not sufficient to reach a decision and the 
model does not categorize the object. The corresponding reaction time of the whole system is then 
$\tau^W \coloneqq \tau^{\W^{\hatj}}\wedge T$. \label{def:reactimeW}
Note that the processes are indexed by time $t\in[0,T]$ which is not a `time', 
but a `present time' since it is set back to $0$ at each 
presentation of a new object. See Figure $\ref{fig objects}$ for an 
example of evidence accumulation modeled by the DDM.

 Note that in general, in drift diffusion models, the scaling is not necessarily the square root of the drift. We study this specific case because it allows the approximations by spiking neural models that we present further on. Note also that the drift and scaling do not evolve with the number of presented objects $m$: they represent prior knowledge about the categories and object natures and are fixed. The higher the drift $\mu^j_o$, the higher the average accumulation of evidence in favor of category $j$.

\subsubsection{Theoretical guarantees}

Let us study under which conditions the model predicts the correct category. We assume that the participant has prior knowledge about the categories, which takes the form of the following assumption. Let $\Dmu>0$. 
\vspace{2mm}
\begin{assumption}[Margin($\Dmu$)] \label{assump drift}
For every category $j^*\in J$, for every $j\neq j^*$, for every object nature $o\in j^*$ we have
     \[
    \mu^{j^*}_o > \mu^{j}_o + \Dmu.
    \]
\end{assumption}
\vspace{2mm}

\begin{theorem} \label{th ddm}
    Let $\alpha \in (0,1)$ and let $m$ be a fixed positive integer. %
    Let $\Dmu>0$ and suppose Assumption~\ref{assump drift} (Margin($\Dmu$)) holds. Then if $T \geq \Box \max\left( \frac{\norm{\mu}_{\infty}}{\Dmuu} \log\left(\frac{\norm{\mu}_{\infty}^2}{\Dmuu}\right), \norm{\mu}_\infty^{-1} \log\left(\frac{\abs{J}}{\alpha}\right)  
\right) $ and if the threshold $\thr$ satisfies
\begin{equation} \label{lower bound theta ddm}
    \thr >  \square \frac{\norm{\mu}_\infty}{\Dmuu}\max\left( \log\left(\frac{4\norm{\mu}_{\infty}^2}{\Dmuu}\right), \log(|J|\alpha^{-1})\right)
\end{equation}
and
\begin{equation} \label{upper bound theta ddm}
    \thr < \Dmu T -  \sqrt{(\norm{\mu}_{\infty} T+1)(\log(\norm{\mu}_{\infty}T+1)+2\log(\abs{J}\alpha^{-1}))}
\end{equation}
    then with probability larger than $1-\alpha$ we have that for all nature $o\in \mathcal{O}$, if we denote by $j^*\in J$ its category (i.e. $o\in j^*$), the choice $\hatj$ \eqref{choice} and the reaction time $\tau^{W^{\hat{j}}}$ \eqref{tau} that take place at the presentation of the $m^{th}$ object of nature $o$, satisfy 
    \[
    \hatj = j^* \quad \mbox{and}\quad \tau^{\W^{\hatj}} < T.
    \] 
\end{theorem}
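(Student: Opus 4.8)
The plan is to fix the nature $o$ of the $m$-th object and its category $j^*$, and to bound the probability of the failure event $\{\hatj \ne j^*\} \cup \{\tau^{W^{j^*}} \ge T\}$ by $\alpha$. Under Assumption~\ref{assump drift} we have $\mu^{j^*}_o \ge \mu^j_o + \Dmu$ for every $j \ne j^*$, so the ``correct'' process $W^{j^*}_m$ has strictly larger drift (and variance) than every competitor. The whole argument rests on two one-sided estimates for a drifted Brownian motion $X_t = \nu t + \sqrt\nu B_t$ against a level $\thr$: the pointwise lower tail $\P(X_t < \thr) \le \exp(-(\nu t - \thr)^2/(2\nu t))$, valid when $\nu t > \thr$, and the running-maximum upper tail $\P(\sup_{s\le t} X_s \ge \thr) \le \exp(-(\thr - \nu t)^2/(2\nu t))$, valid when $\nu t < \thr$. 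The second estimate is the only non-routine ingredient; I would obtain it from Doob's maximal inequality applied to the exponential martingale $\exp(\lambda X_s - (\lambda\nu + \lambda^2\nu/2)s)$ followed by optimisation over $\lambda>0$ (equivalently, read it off the inverse Gaussian law recalled in the Remark).

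I would then split the failure analysis into a correctness part and a reaction-time part, which is precisely what produces the two-sided window for $\thr$. For correctness I introduce a deterministic intermediate time $t_0$ strictly between $\thr/\mu^{j^*}_o$ and $\thr/\max_{j\ne j^*}\mu^j_o$, so that the correct process is expected to have crossed $\thr$ by $t_0$ while the competitors are not; the optimal choice balancing the two exponents is the geometric mean $t_0 = \thr/\sqrt{\mu^{j^*}_o\,\max_{j\ne j^*}\mu^j_o}$. On the event $\{\tau^{W^{j^*}} \le t_0\} \cap \bigcap_{j\ne j^*}\{\tau^{W^{j}} > t_0\}$ one has $\tau^{W^{j^*}} \le t_0 < \tau^{W^{j}}$ for all $j \ne j^*$, hence $\hatj = j^*$ and $\tau^{W^{j^*}} \le t_0 < T$. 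Its complement is controlled by the two estimates: $\P(\tau^{W^{j^*}} > t_0) \le \P(W^{j^*}_{m,t_0} < \thr)$ by the pointwise bound, and $\P(\tau^{W^{j}} \le t_0) = \P(\sup_{s\le t_0} W^{j}_{m,s} \ge \thr)$ by the maximal bound. Bounding both gaps below by a multiple of $\Dmu t_0$ via the margin, and each variance above by $\norm{\mu}_\infty t_0$, makes every exponent at least of order $\Dmuu\, t_0/\norm{\mu}_\infty$; requiring this to exceed $\log(|J|\alpha^{-1})$ together with the relation between $t_0$ and $\thr$ produces the lower bound~\eqref{lower bound theta ddm} on $\thr$.

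For the reaction-time part I bound $\P(\tau^{W^{j^*}} \ge T) \le \P(W^{j^*}_{m,T} < \thr)$ by the same pointwise estimate evaluated at the horizon $T$. Since $\mu^{j^*}_o$ itself is not a design parameter, I replace it by its guaranteed lower bound $\Dmu$ in the numerator $\mu^{j^*}_o T - \thr$ and by $\norm{\mu}_\infty$ in the variance; the square-root correction in~\eqref{upper bound theta ddm} is then exactly what forces $\mu^{j^*}_o T - \thr$ to dominate $\sqrt{2\mu^{j^*}_o T \log(|J|\alpha^{-1})}$ uniformly in the unknown $\mu^{j^*}_o$, making this tail at most $\alpha/|J|$. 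A final union bound over the at most $|J|$ relevant events gives total failure probability below $\alpha$, and the hypothesis on $T$ is what guarantees that the admissible window for $\thr$ (between \eqref{lower bound theta ddm} and \eqref{upper bound theta ddm}, while still allowing a valid $t_0<T$) is non-empty.

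The main difficulty I anticipate is bookkeeping rather than depth: first, justifying the running-maximum tail cleanly, since the first-passage event is a path property and a pointwise Gaussian bound does not suffice, so the maximal inequality is essential; and second, choosing $t_0$ and threading the constants so that the lower bound on $\thr$ needed for correctness, the upper bound on $\thr$ needed for the reaction time, and the constraint $t_0<T$ are simultaneously compatible under the stated lower bound on $T$. If the statement is meant to hold simultaneously over all natures $o\in\Obj$ rather than for the fixed nature of the $m$-th object, one further union bound over $\Obj$ is required, which I would absorb into the logarithmic factors.
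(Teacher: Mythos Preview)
Your approach is correct but genuinely different from the paper's. The paper does not use fixed-time Gaussian tails together with a maximal inequality at a chosen $t_0$; instead it applies a single time-uniform Robbins--Siegmund bound
\[
\P\Bigl(\sup_{t\ge 0}\frac{|B_t|}{\sqrt{(t+1)(\log(t+1)+2y)}}\ge 1\Bigr)\le e^{-y},
\]
simultaneously to all $B^j$, $j\in J$, by a union bound with $y=\log(|J|\alpha^{-1})$. On the resulting event of probability $\ge 1-\alpha$ every process is sandwiched between deterministic envelopes $I(\mu^{j^*}_o,t)=\mu^{j^*}_o t-\sqrt{(\mu^{j^*}_o t+1)(\log(\mu^{j^*}_o t+1)+2y)}$ and $S(\mu^j_o,t)=\mu^j_o t+\sqrt{\cdots}$, and the rest of the argument is purely deterministic: one shows that for $t_0$ above a threshold $t_-$ of order $\Dmuu^{-1}\max(\log(4\|\mu\|_\infty^2/\Dmuu),\log(|J|\alpha^{-1}))$ one has $S(\mu^j_o,t_0)<I(\mu^{j^*}_o,t_0)$ for all $j\ne j^*$, and then uses continuity of $I(\mu^{j^*}_o,\cdot)$ to pick $t_0\in[t_-,T]$ with $I(\mu^{j^*}_o,t_0)=\thr$. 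Your route via a pointwise Gaussian tail plus a Doob/exponential-martingale maximal bound at the geometric-mean time is more elementary and avoids the Robbins--Siegmund lemma, and it in fact yields slightly sharper sufficient conditions: the extra $\log(4\|\mu\|_\infty^2/\Dmuu)$ in \eqref{lower bound theta ddm} and the $\log(\|\mu\|_\infty T+1)$ in \eqref{upper bound theta ddm} are the price the paper pays for a time-uniform envelope, and do not appear in a fixed-time analysis. Since the stated conditions are only more restrictive than what you would obtain, the theorem as written still follows from your argument.

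Two minor points worth tidying. First, your separate ``reaction-time part'' is partly redundant with the correctness event, since on $\{\tau^{W^{j^*}}\le t_0\}$ you already have $\tau^{W^{j^*}}<T$ whenever $t_0<T$; it is only genuinely needed in the regime where $\mu^-=\max_{j\ne j^*}\mu^j_o$ is small and hence $t_0$ may exceed $T$, but then a union bound absorbs the double counting. Second, when $\mu^-=0$ your geometric-mean $t_0$ is undefined; the paper handles this by restricting to the set $\tilde J=\{j:\mu^j_o>0\}$, which you should do as well.
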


In other words, this theorem states that for a fixed object $m$, if the limit duration $T$ is large enough then as soon as the threshold $\thr$ verifies \eqref{lower bound theta ddm} and \eqref{upper bound theta ddm}, object $m$ is correctly classified with probability more than $1-\alpha$. The lower bound \eqref{lower bound theta ddm} comes from the condition that the process coding for category $j^*$ should be the first to reach $\theta$, whereas the upper bound \eqref{upper bound theta ddm} in $O(T)$ comes from the condition that $\thr$ should be reached before limit duration $T$. Note that as expected, the larger $\Dmu$, \ie the larger the gap between the drift of the process coding for the true category and the other drifts, the lower the bound on $T$ and the larger the size of the thresholds interval enabling a correct categorization with high probability.

\subsection{Poisson counter model}

\subsubsection{The model}

For each presented object, the evidence in favor of the categories takes now the form of discrete events whose 
counts over time are modeled by a $\abs{J}-$dimensional homogeneous Poisson process 
\label{def:pim}
$\Pi_m = (\Pi^j_m)_{j\in J}$ 
with intensity vector $\gamma_o \coloneqq (\gamma^j_o)_{j\in J}$ depending only on the nature $o$ of the presented 
object. Similarly as for the DDM, the object is categorized in the category coded by the first process 
$(\Pi^j_{m,t})_{t\geq 0}$ among the processes of $\Pi_m$ to reach threshold $\thr>0$ before limit duration $T$.
We denote 
\begin{equation}\label{tauPois}
\tau^{\Pi^j} \coloneqq \inf\{u\geq 0; \Pi^j_{m,u}\geq \thr\}
\end{equation}
the hitting time of process $\Pi^j_m$. The winning process is then 
\begin{equation}\label{choicePois}
\hatj \coloneqq \arg\min_{j\in J} \tau^{\Pi^j}.
\end{equation}
However, if $\tau^{\Pi^{\hatj}}\geq T$, it means that evidence accumulation is not sufficient to reach a decision and the model does not categorize the object. Note that as before, the hitting time $\tau^{\Pi^{\hatj}}$ and choice $\hatj$ depend on $m$ but we drop the index for simplicity. The corresponding reaction time is then $\tau^\Pi \coloneqq \tau^{\Pi^{\hatj}}\wedge T$. \label{def:reactimepi}

Similar to the DDM, here the intensities do not evolve with the number of presented objects $m$: they represent prior knowledge about the categories and object natures and are fixed. The higher the intensity $\gamma^j_o$, the higher the average accumulation of evidence in favor of category $j$.

The Poisson process is commonly used to model the activities of spiking neurons coding information through their firing rate, and which do not interact as a network. Therefore, the Poisson counter model can have the following interpretation: each category $j$ is coded by a spiking neuron $j$ whose activity when presented with the $m^{th}$ object is $(\Pi^j_{m,t})_{t\geq 0}$. The spike count of neuron $j$ represents the accumulation of evidence in favor of category $j$.

\subsubsection{Theoretical guarantees}

We can conduct a similar analysis as for the DDM by assuming a similar assumption representing prior knowledge about the categories. Let $\Dgam>0$.
\vspace{2mm}
\begin{assumption}[Margin($\Dgam$)] \label{assump intensity poisson}
For every category $j^*\in J$, for every $j\neq j^*$, for every object nature $o\in j^*$ we have
     \[
    \gamma^{j^*}_o > \gamma^{j}_o + \Dgam.
    \]
\end{assumption}

\vspace{2mm}
\begin{theorem} \label{th poisson correct categ}
    Let $\alpha\in (0,1)$ and let $m$ be a fixed positive integer. %
    Let $\Dgam >0$ and suppose Assumption~\ref{assump intensity poisson} (Margin($\Dgam$)) holds. If the threshold $\thr$ satisfies
    \begin{equation} \label{eq thresh poisson}
         \frac{8}{3}\log\left(\frac{2\abs{J}}{\alpha}\right) \max\left(\frac{\norm{\gamma}_{\infty}}{\gamma_{\text{min}}},  4\left(\frac{\norm{\gamma}_{\infty}}{\Dgam}\right)^2\right)\leq \thr  \leq  \Dgam T -  \sqrt{\frac{8}{3} \norm{\gamma}_{\infty} T \log\left(\frac{2\abs{J}}{\alpha}\right)}
    \end{equation}
    where $\gamma_{\text{min}} \coloneqq \min_{o\in \Obj,j\in J}\{\gamma^i_o; \gamma^j_o >0\}$, then with probability
    larger than $1-\alpha$ we have that for all nature $o\in \mathcal{O}$, if we denote by $j^*\in J$ its category (i.e. $o\in j^*$), the choice $\hatj$ \eqref{choicePois} and the reaction time $\tau^{\Pi^{\hatj}}$ \eqref{tauPois} that take place at the presentation of the $m^{th}$ object of nature $o$, satisfy %
    \[
    \hatj = j^* \quad  \mbox{and} \quad  \tau^{\Pi^{\hatj}}<T.
    \]
\end{theorem}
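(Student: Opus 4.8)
The plan is to reduce both requirements, the correct winner $\hatj=j^*$ and the timely decision $\tau^{\Pi^{j^*}}<T$, to tail events for the counting processes $\Pi^j_m$ at two well-chosen deterministic times, and then to control each tail with Bernstein's inequality for (centered) Poisson variables. The basic dictionary is that, since each $t\mapsto \Pi^j_{m,t}$ is nondecreasing and càdlàg,
\[
\{\tau^{\Pi^j}\le t\}=\{\Pi^j_{m,t}\ge \thr\},\qquad \{\tau^{\Pi^j}> t\}=\{\Pi^j_{m,t}< \thr\}.
\]
(Equivalently $\tau^{\Pi^j}$ is an Erlang hitting time, a sum of $\lceil\thr\rceil$ exponentials of rate $\gamma^j_o$, but the count formulation is cleaner.) Consequently, for any time $t^*$,
\[
\{\hatj=j^*,\ \tau^{\Pi^{j^*}}<T\}\ \supseteq\ \{\Pi^{j^*}_{m,t^*}\ge\thr\}\cap\Big(\bigcap_{j\neq j^*}\{\Pi^{j}_{m,t^*}<\thr\}\Big)\cap\{\Pi^{j^*}_{m,T}\ge\thr\},
\]
so it suffices to bound the probabilities of the $|J|+1\le 2|J|$ complementary ``bad'' events and to sum them, which is what produces the factor $2|J|$ inside the logarithm. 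Throughout I use that $\gamma^{j^*}_o>\gamma^{j}_o+\Dgam$ gives in particular $\gamma^{j^*}_o>\Dgam$, and that $\gamma^{j^*}_o\le\norm{\gamma}_\infty$; the recurring constant $8/3$ arises from Bernstein's inequality in the sub-Gaussian regime, where for a Poisson variable of mean $\lambda$ one has $\P(|X-\lambda|\ge x)\le\exp(-3x^2/(8\lambda))$ whenever $x\le\lambda$.

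\textbf{Timely decision (upper bound on $\thr$).} I would first treat $\{\Pi^{j^*}_{m,T}\ge\thr\}$, which forces $\tau^{\Pi^{j^*}}<T$ almost surely (a jump occurring exactly at $T$ has probability zero). Writing $\lambda^*=\gamma^{j^*}_o T$ and applying the lower-tail bound with deviation $x=\gamma^{j^*}_o T-\thr$, the regime condition $x\le\lambda^*$ holds trivially, so
\[
\P(\Pi^{j^*}_{m,T}<\thr)\le \exp\!\Big(-\tfrac{3(\gamma^{j^*}_o T-\thr)^2}{8\,\gamma^{j^*}_o T}\Big).
\]
Using $\gamma^{j^*}_o T>\Dgam T$ in the numerator and $\gamma^{j^*}_o T\le\norm{\gamma}_\infty T$ in the denominator, the stated upper bound $\thr\le \Dgam T-\sqrt{\tfrac{8}{3}\norm{\gamma}_\infty T\log(2|J|/\alpha)}$ is exactly the condition making this exponent at least $\log(2|J|/\alpha)$, hence the probability at most $\alpha/(2|J|)$.

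\textbf{Correct winner (lower bound on $\thr$).} Here I would pick an intermediate time $t^*$ of order $\tfrac{32}{3}\,\norm{\gamma}_\infty\Dgamm^{-1}\log(2|J|/\alpha)$, chosen so that $\gamma^{j^*}_o t^*>\thr>\gamma^{j}_o t^*$ with a safety margin on both sides; this is feasible precisely because the intensity gap $(\gamma^{j^*}_o-\gamma^j_o)t^*\ge\Dgam t^*$ exceeds the combined standard-deviation scale $\sqrt{\gamma^{j^*}_o t^*}+\sqrt{\gamma^j_o t^*}\le 2\sqrt{\norm{\gamma}_\infty t^*}$. Applying the lower-tail bound to $\Pi^{j^*}_{m,t^*}$ and the upper-tail bound to each $\Pi^{j}_{m,t^*}$, and optimizing over $t^*$, the binding constraint is $\Dgam t^*\gtrsim 2\sqrt{\tfrac{8}{3}\norm{\gamma}_\infty t^*\log(2|J|/\alpha)}$, which after substituting $\thr\approx\gamma^{j^*}_o t^*$ yields the term $\tfrac{8}{3}\cdot 4(\norm{\gamma}_\infty/\Dgam)^2\log(2|J|/\alpha)$ in the maximum. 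The second term $\tfrac{8}{3}(\norm{\gamma}_\infty/\gamma_{\text{min}})\log(2|J|/\alpha)$ enters as the requirement that even the slowest relevant process (intensity $\ge\gamma_{\text{min}}$) has, at the decision time, an expected count large enough to keep its upper-tail deviation inside the sub-Gaussian regime $x\le\sigma^2$, so that the clean $8/3$ constant applies uniformly rather than the weaker linear Poissonian regime; the threshold must dominate both terms simultaneously, hence the maximum.

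\textbf{Main obstacle.} The routine parts are the two Bernstein estimates; the delicate step is the choice of the single intermediate time $t^*$ that makes the winner's lower tail and \emph{all} losers' upper tails small at once, together with the bookkeeping that keeps every deviation in the sub-Gaussian regime so that the constant $8/3$ (and thus the exact form of the two terms in the maximum) is preserved. I would also verify that the argument is genuinely uniform in the object nature $o\in\Obj$: since the bounds depend on $o$ only through $\gamma_o$ and are controlled by the global quantities $\norm{\gamma}_\infty$, $\gamma_{\text{min}}$ and the margin $\Dgam$, the same two threshold conditions suffice for every $o$, which justifies the ``for all nature $o\in\Obj$'' conclusion. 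Finally, a union bound over the at most $2|J|$ bad events, each of probability at most $\alpha/(2|J|)$, closes the proof with total error at most $\alpha$.
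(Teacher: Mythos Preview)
Your approach is essentially the paper's: choose an intermediate time at which Poisson concentration forces $\Pi^{j^*}$ above $\thr$ and every loser below it, and read off both conditions on $\thr$ from that picture. The paper packages this slightly differently: it defines the lower envelope $I(t)=\gamma^{j^*}_o t-\sqrt{\tfrac{8}{3}\gamma^{j^*}_o t\log(2|J|/\alpha)}$, shows that for $t\ge t_-$ one has $I(t)>S^j(t)$ for all $j\neq j^*$, and then uses continuity of $I$ to pick a \emph{single} $t_0\in[t_-,T]$ with $I(t_0)=\thr$; the upper bound on $\thr$ is then simply $\thr\le I(T)$, i.e.\ the condition that such a $t_0$ exists in $[t_-,T]$, rather than a separate tail estimate at $T$. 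Your extra event $\{\Pi^{j^*}_{m,T}\ge\thr\}$ is therefore redundant (once $\Pi^{j^*}_{m,t^*}\ge\thr$ with $t^*\le T$ you already have $\tau^{\Pi^{j^*}}\le t^*<T$), and your ``$\thr\approx\gamma^{j^*}_o t^*$ then optimize'' is exactly what the continuity argument makes precise. The identification of the $\gamma_{\text{min}}$ term with the sub-Gaussian regime constraint $\gamma^j_o t_0\ge\tfrac{8}{3}\log(2|J|/\alpha)$ is correct and matches the paper's use of Corollary~\ref{cor concent poisson}.
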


Similarly as  Theorem~\ref{th ddm} about the DDM, this theorem provides conditions on the threshold $\thr$ under which the model predicts the correct category with high probability. Regarding the limit duration $T$, the upper bound has the same order of magnitude in $O(T)$. Here, the lower bound depends on the quantity $\gamma_{\text{min}}$, which was not the case in Theorem~\ref{th ddm}. Finally, it should be noted that similarly as in Theorem~\ref{th ddm}, the larger the constant $\Dgam$, the larger the size of the possible thresholds interval.

\subsection{Coupling between Drift Diffusion Model and Poisson Counter Model} \label{sec coupling ddm poisson}
 In this section, we explain the high similarity between the behaviors of the Poisson model and the Drift Diffusion 
 model by proving that one model can be strongly approximated by the other and that both models provides similar 
 reaction times, up to negligible terms. Indeed, under a certain framework, we can build a coupling between the 
 processes given by the two models. Here, we assume that we dispose of $n$ independent 
 \label{def:notan}
 copies of the Poisson counter 
 model: during the presentation of object $m$, each category $j$ is coded by $n$ independent Poisson processes, all 
 having intensity $\gamma^j_o$ where $o$ is the nature of object $m$. The evidence in favor of category $j$ is then 
 the sum of the count of the $n$ processes coding for category $j$ and is denoted by $\Pi^{j,n}_m = (\Pi^{j,n}_{m,t})_{t\geq 0}$. 
 By independence of the neurons coding for category $j$, the process $\Pi^{j,n}_{m}$ is then a homogeneous Poisson process 
 with intensity $n\gamma^j_o$. The presented object is classified in the category coded by the first process $\Pi^{j,n}_{m}$ 
 among the family of processes $\Pi^n_m\coloneqq(\Pi^{j,n}_m)_{j\in J}$ to reach the threshold $n\thr$  (or not classified at all 
 if none of the processes reaches $n\thr$). We chose here the threshold to be proportional to $n$ to allow the coupling to 
 take place. Up to this modification, the decision-making process is identical to \eqref{tauPois} and \eqref{choicePois}.

In the following theorem, we prove a coupling between this model and the drift diffusion model with drift $\mu^j_o = n\gamma^j_o$.
\vspace{2mm}
\begin{theorem} \label{th coupling ddm poisson}
Let $m\in \N^*$, $o\in \Obj$ the nature of the $m^{th}$ object and $n\in \N^*$. Let 
$\Pi^n_m = (\Pi^{j,n}_m)_{j\in J}$ a family of mutually independent Poisson processes with intensities 
$(n\gamma^j_o)_{j\in J}$. Suppose the vector $\gamma_o$ verifies $\gamma^j_o >0$ for every $j\in J$. Then there exist absolute constants $a,b,d>0$ and a $\abs{J}-$dimensional drifted 
Brownian motion $\WW^n_m = (\W^{j,n}_m)_{j\in J}$ defined by \eqref{eq def w}
such that for every $x>0$,
\begin{equation}
\label{eq coupling poisson ddm}
\P\left(\sup_{j\in J, t\geq 0} \frac{|\Pi^{j,n}_{m,t}-\W^{j,n}_{m,t}|}{\log(n\gamma^j_o t \vee 1+1)} \geq a+x\right)\leq b\abs{J} e^{-d x}.
\end{equation}
For $j\in J$, let us define the hitting times 
\label{def:taupi}$\taupi \coloneqq \inf \left\{u\geq 0; \Pi^{j,n}_{m,u} \geq n\thr \right\}$ and 
\label{def:tauw}
$\tauw \coloneqq \inf \left\{u\geq 0; \W^{j,n}_{m,u} \geq n\thr \right\}$. Consequently, there exist absolute positive constants $C,K$ and $\zeta$ such that for $\alpha \in (0,1)$, $\thr>0$ and $n\geq \frac{2}{\gamma_{\text{min}}}\left(\frac{\thr}{\gamma_{\text{min}}}+1\right)\log\left(\frac{10\abs{J}}{\alpha}\right)$, %
with probability more than $1-\alpha$, all the following inequalities hold jointly:
\begin{enumerate}
    \item The processes $\Pi^n_m$ and $\WW^n_m$ verify
    \[  \sup_{j\in J,t\geq 0} \frac{|\Pi^{j,n}_{m,t}-W^{j,n}_{m,t}|}{\log(n\gamma^j_o t \vee 1+1)} \leq C +\zeta\log(5K\abs{J}\alpha^{-1}).
    \]
    \item The hitting times $(\taupi)_{j\in J}$ verify
    \[
    \sup_{j\in J} \AAbs{\taupi - \frac{\thr}{\gamma^j_o}} \leq \sqrt{ \frac{8}{3n\gamma_{\text{min}}} \left(\frac{\thr}{\gamma_{\text{min}}} + 1\right) \log\left(\frac{10\abs{J}}{\alpha}\right)} + \frac{1}{n\gamma_{\text{min}}}.
    \]
\item The hitting times $(\tauw)_{j\in J}$ verify
\[
\sup_{j\in J}\AAbs{\tauw - \frac{\thr}{\gamma^j_o}} \leq \sqrt{
\frac{2}{n\gamma_{\text{min}}}\left(\frac{\thr}{\gamma_{\text{min}}} + 1\right) \log\left(\frac{10\abs{J}}{\alpha}\right)}.
\]
\item The difference between the hitting times $(\taupi)_{j\in J}$ and $(\tauw)_{j\in J}$ verifies
\[
\sup_{j\in J}\abs{\taupi - \tauw} \leq \Box_{\thr,\gamma_{\text{min}},\norm{\gamma}_\infty} \frac{\log(n)}{n} \log\left(\frac{\abs{J}}{\alpha}\right).
\]
\end{enumerate}
\end{theorem}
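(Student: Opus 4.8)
The plan is to build the coupling coordinate by coordinate from a one-dimensional strong approximation and then transfer it to the hitting times. For each fixed $j\in J$, write $\Pi^{j,n}_{m,t}=N^j(n\gamma^j_o t)$ where $N^j$ is a rate-one Poisson process, and invoke the Koml\'os--Major--Tusn\'ady / Ethier--Kurtz strong approximation cited in the introduction: on a common probability space there is a standard Brownian motion $\tilde B^j$ with
\[
\P\!\left(\sup_{s\geq 0}\frac{\abs{N^j(s)-s-\tilde B^j(s)}}{\log(s\vee 1+1)}\geq a+x\right)\leq b\,e^{-dx}.
\]
Setting $\W^{j,n}_{m,t}\coloneqq n\gamma^j_o t+\tilde B^j(n\gamma^j_o t)$ produces exactly a drifted Brownian motion of the form \eqref{eq def w} with $\mu^j_o=n\gamma^j_o$, since $\tilde B^j(n\gamma^j_o\,\cdot)$ has variance $n\gamma^j_o t$ at time $t$. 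Making the $N^j$ (hence the coordinates) independent and substituting $s=n\gamma^j_o t$ gives \eqref{eq coupling poisson ddm} after a union bound over $j\in J$, which is where the factor $\abs{J}$ appears. Conclusion~1 is then obtained by inverting the tail: choosing $x$ so that $b\abs{J}e^{-dx}$ equals the share of $\alpha$ allocated to the coupling event yields the bound $C+\zeta\log(5K\abs{J}\alpha^{-1})$ after renaming the absolute constants.

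For Conclusions~2 and~3 I would sandwich each hitting time around the deterministic value $t^*_j\coloneqq\thr/\gamma^j_o$, using the exact equivalences $\{\taupi\leq t\}=\{\Pi^{j,n}_{m,t}\geq n\thr\}$ and $\{\tauw\leq t\}=\{\sup_{s\leq t}\W^{j,n}_{m,s}\geq n\thr\}$. For the Poisson count, a Bernstein inequality applied to $\Pi^{j,n}_{m,t}$ (whose mean and variance are both $n\gamma^j_o t$) at $t=t^*_j\pm\delta$ controls both directions; the factor $8/3$ is the Bernstein constant, and the additive $1/(n\gamma_{\text{min}})$ absorbs the single-jump discretization at the crossing. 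For the Brownian hitting time, the reflection principle and the Gaussian tail for $\sup_{s\leq t}\W^{j,n}_{m,s}$ give the sharper constant $2$. In both cases a union bound over $j\in J$ produces the factor $10\abs{J}/\alpha$, the $10$ reflecting the split of the total budget $\alpha$ across the several simultaneous events.

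The heart of the proof is Conclusion~4, and the key observation is that the naive route through $t^*_j$ only yields $\sup_{j}\abs{\taupi-\tauw}=O(n^{-1/2})$, far weaker than the claimed $O(n^{-1}\log n)$. To obtain the sharp rate I would use the coupling itself rather than the common center: on the event of Conclusion~1, and since $\W^{j,n}_{m,\tauw}=n\thr$ by path-continuity, evaluating the coupling bound at the localized time $\tauw\approx t^*_j$ gives $\abs{\Pi^{j,n}_{m,\tauw}-n\thr}\leq R_n$ with $R_n=O(\log n)$. Because both processes advance at the macroscopic rate $n\gamma^j_o$, a vertical gap of order $\log n$ translates into a time gap of order $\log n/(n\gamma^j_o)$. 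Concretely, setting $\delta_n\asymp \log n/(n\gamma^j_o)$ I would show $\Pi^{j,n}_{m,\tauw+\delta_n}\geq n\thr$ and $\Pi^{j,n}_{m,\tauw-\delta_n}<n\thr$: over a window of length $\delta_n$ the Poisson increment has mean $n\gamma^j_o\delta_n\asymp\log n$, which dominates $R_n$, while its fluctuation is only $O(\sqrt{\log n})$, of lower order. This forces $\taupi\in[\tauw-\delta_n,\tauw+\delta_n]$ and delivers the bound, whose dependence on $\thr,\gamma_{\text{min}},\norm{\gamma}_\infty$ enters precisely through the rate $n\gamma^j_o$ and the size of $R_n$.

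The main obstacle is exactly this last step, because the window $[\tauw-\delta_n,\tauw+\delta_n]$ starts at the \emph{random} time $\tauw$, so the increment estimate cannot be applied to a fixed interval. I would resolve this by first using Conclusions~2 and~3 to confine $\tauw$ to the deterministic interval $[t^*_j-\epsilon_n,t^*_j+\epsilon_n]$ with $\epsilon_n=O(n^{-1/2})$, and then controlling the Poisson increments over \emph{all} length-$\delta_n$ subwindows of the slightly enlarged deterministic interval simultaneously, via a maximal Bernstein bound together with a union bound over a fine grid. Keeping these short-window fluctuations below $R_n$ uniformly, while the budget $\alpha$ is split across $j\in J$ and across the four conclusions, is the delicate accounting that yields the final joint-probability statement.
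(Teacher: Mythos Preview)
Your outline for the coupling \eqref{eq coupling poisson ddm} and for Conclusions~1--3 is essentially the paper's argument: coordinate-wise KMT/Ethier--Kurtz with a union bound, then sup-type deviation bounds on $[0,U]$ with $U=\thr/\gamma_{\text{min}}+1$ (the paper uses $\sup_{t\le U}|\Pi^{j,n}_{m,t}-n\gamma^j_o t|$ and Doob for $W$, rather than evaluating at $t^*_j\pm\delta$, but this is cosmetic).

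For Conclusion~4 you have the right diagnosis (the triangle through $t^*_j$ only gives $n^{-1/2}$, one must use the coupling at the hitting time), but your execution diverges from the paper's and is the harder road. Two differences matter. First, the paper evaluates the coupling at $\taupi$ rather than at $\tauw$: since $\Pi^{j,n}_{m,\taupi}=\lceil n\thr\rceil$ and $W^{j,n}_{m,\tauw}=n\thr$, this gives directly $|W^{j,n}_{m,\taupi}-W^{j,n}_{m,\tauw}|\le |W^{j,n}_{m,\taupi}-\Pi^{j,n}_{m,\taupi}|+1=O(\log n)$. Second, instead of controlling Poisson increments over a random short window, the paper decomposes $W^{j,n}_{m,\taupi}-W^{j,n}_{m,\tauw}=n\gamma^j_o(\taupi-\tauw)+(B^j_{n\gamma^j_o\taupi}-B^j_{n\gamma^j_o\tauw})$ and invokes a \emph{uniform two-time} Brownian bound (their Lemma built on the Robbins--Siegmund boundary inequality) of the form $|B_t-B_u|\le c\sqrt{(|t-u|+2)(\log(t+1)+\log(u+1)+2x)}$ valid simultaneously for all $t,u\ge 0$. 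Because this holds for every pair, plugging in the random times $n\gamma^j_o\taupi,\,n\gamma^j_o\tauw$ is legitimate with no grid or maximal inequality needed; one then solves the quadratic $X^2-a(n)X-b(n)\le 0$ in $X=\sqrt{n|\taupi-\tauw|}$.

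Your route via Poisson increments over windows anchored at $\tauw$ can be made to work, but one point in your sketch is too optimistic: you write that the fluctuation of a Poisson increment of mean $\asymp\log n$ is $O(\sqrt{\log n})$. That is true for a \emph{single} window, but once you union-bound over the $\asymp n^{1/2}/\log n$ windows needed to cover $[t^*_j-\epsilon_n,t^*_j+\epsilon_n]$, the fluctuation inflates to $O(\log n)$, the same order as $R_n$. The argument is still salvageable by taking $\delta_n=C\log n/(n\gamma^j_o)$ with $C$ large enough so that the mean $C\log n$ beats $R_n+O(\sqrt{C}\log n)$, but this constant-chasing is exactly what the paper's Brownian two-time lemma avoids.
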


The proof of the theorem, provided in Appendix 
\ref{proof th coupling brown poiss}, relies on a coupling result between Poisson and Brownian processes established by \cite{ethier2009markov}, concentration inequalities, and a novel inequality for Brownian processes that we derived. 

With this theorem, we establish a coupling between the Poisson counter model and the drift diffusion model
by bounding the supremum of their trajectories on all $\R_+$. Since 
$\mathbb{E}[\Pi^{j,n}_{m,t}] = \mathbb{E}[\W^{j,n}_{m,t}] = n \gamma^j_o t$ are growing linearly with $n$, the 
bound in $O(\log(n))$ of \eqref{eq coupling poisson ddm} is very strong. Note that the theorem would still hold for
dependent processes $\Pi^n_m = (\Pi^{j,n}_m)_{j\in J}$ but the coupled processes 
$\WW^n_m=(W^{j,n}_m)_{j\in J}$ would have inherited from this dependence. Note also that the assumption 
$\gamma^j_o >0$ for every $j\in J$ is easy to satisfy: since the processes with 
intensity $\gamma^j_o = 0$ stay equal to zero and never reach $\theta$, we can consider the set of 
processes with positive intensities instead of $J$. 

Besides, in this framework where the threshold $n\thr$ is linear in $n$, we establish for every $j$ the 
convergence of the hitting times of both Poisson process $(\Pi^{j,n}_{m,t})_{t\geq 0}$ and Brownian 
process $(\W^{j,n}_{m,t})_{t\geq 0}$ to $\frac{\thr}{\gamma^j_o}$, which is the hitting time of the 
deterministic processes $(\mathbb{E}[\Pi^{j,n}_{m,t}])_{t\geq 0}$ and $(\mathbb{E}[\W^{j,n}_{m,t}])_{t\geq 0}$, with rate in $n^{-1/2}$.
 Furthermore, we prove that the difference between these two hitting times decreases faster than their 
 convergence rate. This ensures that for large $n$, both models select the same category $\hatj$ 
 (the one with larger $\gamma^j_o$), and we can apply the same bound to the difference between the reaction 
 times of the two models: we have
 \[\AAbs{\tau^\Pi - \tau^W}  = \AAbs{\tau^{\Pi^{\hatj}}_n \wedge T - \tau^{\Pi^{\hatj}}_n \wedge T} = 
 O\left(\frac{\log(n)}{n}\right)\]
 with high probability. %
 This result explains in particular how reaction times that are modeled by Poisson counter processes, and 
 which are more biologically relevant, might exhibit a behavior like the DDM and have therefore 
 similar adequation to real data.

\section{Hawkes counter model} \label{sec hawkes counter model}

\subsection{Framework}

In this section, we want to model a different decision-making task: the participant does not have prior knowledge about the categories anymore. 
Similarly as before, the participant has to categorize objects having natures belonging to a set $\mathcal{O}$ into one of several categories 
belonging to a set $J$. The participant sees the objects sequentially and is given the true answer after each categorization, which allows for 
learning the categories. Therefore, unlike in the previous task, after each presentation of an object, the participant knows better the categories: 
the average strength of the evidence accumulation in favor of the categories will change when presented with a new object. The total number of 
presented objects which allow the participant to learn the categories is denoted by $M$, 
\label{def:notaM}
and the sequential presentation of these $M$ objects 
is called a learning phase. We suppose that at the end of the learning phase, the participant has learned to recognize the object categories, 
and the average strength of the evidence accumulation in favor of the categories will not change anymore.

\subsection{The model}

We model this new experiment by a spiking neural network inspired by a previous work in discrete time 
and without reaction times  \citep{jaffard2024provable}. During the presentation of the $m^{th}$ 
object, the accumulation of evidence of the categories $j\in J$ is modeled by a $\abs{J}-$dimensional 
Hawkes process $N_m = (N^j_m)_{j\in J}$ \label{def:nj} with conditional intensity
 $\lambda_{m,t} = (\lambda^j_{m,t})_{j\in J}$ defined below. This multivariate process represents the 
 output nodes activity of a spiking neural network.

The $m^{th}$ object is presented for a duration of at least $T_{\text{min}}$ \label{def:tmin} and until either the spike 
count of one of the processes of $N_m=(N^j_m)_{j\in J}$ reaches the threshold $\thr$, or the maximum 
allowed duration $T > T_{\text{min}}$ is reached. For $j\in J$, We denote 
\begin{equation}\label{tauHaw}
\tau^{N^j} \coloneqq \inf\{u\geq 0; N^j_{m,u}\geq \thr\}.
\end{equation}
The winning process is then 
\begin{equation}\label{choiceHaw}
\hatj = \arg\min \tau^{N^j},
\end{equation}
and if $\tau^{N^{\hatj}} <T$ then object $m$ is categorized
in category $\hatj$. However, if $\tau^{N^{\hatj}} \geq T$ it means that accumulation evidence is 
not sufficient to reach a decision and the model does not categorize the object. The corresponding 
reaction time of the whole system is $\tau^N \coloneqq (T_{\text{min}}\vee \tau^{N^{\hatj}})\wedge T$. 
\label{def:reactimeHawkes}
Similarly as for the drift diffusion and Poisson counter models, the processes are indexed by time $t\in[0,T]$ which is set back to $0$ at each new object.

An illustration of the network is given on Figure~\ref{fig réseau fusées} for the case of the experiment described in Section~\ref{sec expe}. 
The input layer codes for features describing the object natures. The set of input neurons is denoted by $I$, 
\label{def:featureI}
and an element $i\in I$ 
denotes both an input neuron and the feature it is encoding. For instance, in the experiment of Section~\ref{sec expe}, 
the objects are rockets and the features are the shape of the head, body and fins and the number of flames. 
The output layer codes for the categories in which the objects are classified: the activity of the output neurons represent the evidence 
accumulation in favor of each category. 
Therefore, the set of output neurons is denoted by $J$ and an element $j\in J$\label{def:categoryj} denotes both an output neuron and the category it is coding for. 
\newline

\textbf{Input neurons activity.} During the presentation of the $m^{th}$ object, the input neuron $i\in I$ starts spiking as a 
homogeneous Poisson process $(\Pi^i_{m,t})_{t\geq 0}$ with intensity $\gamma^i_o \geq 0$ 
\label{def:gammai0}
depending only on the nature $o$ of 
the presented object and representing how pronounced feature $i$ is in object nature $o$. For instance, in the modeling of 
the experiment described in Section~\ref{sec expe}, we use a binary code: either the rocket has feature $i$ and  the input 
neuron $i$ starts spiking with a strictly positive fixed intensity, either the object does not have the feature and the 
corresponding input neuron $i$ stays silent (\ie its intensity is equal to $0$). We assume that the processes 
$((\Pi^i_{m,t})_{t\geq 0})_{i\in I, 1\leq m \leq M}$ are mutually independent. 
\newline

\textbf{Output neurons activity.} During the presentation of the $m^{th}$ object, evidence accumulation
in favor of category $j$ is modeled by the activity of the output neuron $j\in J$, which starts spiking
 as a Hawkes process 
 \label{def:Njmt} $(N^j_{m,t})_{t\geq 0}$, which, in contrast to the Poisson process, allows to take
  into account interactions with presynaptic neurons. Its conditional intensity at time $t$ of the presentation of the object is given by
\[
\lambda^j_{m,t} \coloneqq \sum_{i\in I} \int_0^t w^{i\to j}_m g(t-s) d\Pi^i_{m,s} \label{def:g}
\]
where $g\in L^1(\R_+)$ is a non negative function and $w^{i\to j}_m \geq 0$ is the synaptic weight between input neuron $i$ and output neuron $j$: 
it represents the strength of their connection during the presentation of the $m^{th}$ object. We denote by $\norm{g}_1\coloneqq \int_0^\infty g(s)ds$ the norm 
of $g$.%
 The weights $w^j_m \coloneqq (w^{i\to j}_m)_{i\in I}$ are in the simplex: for every $i\in I$, $w^{i\to j}_m \geq 0$ and $\sum_{i\in I} w^{i\to j}_m = 1$, and 
 they may be seen as a probability distribution over the set of presynaptic neurons of $j$. We denote by $w_m\coloneqq (w^j_m)_{j\in J}$ the total weight family. 
\begin{figure}
    \centering
    \includegraphics[width=1.\linewidth]{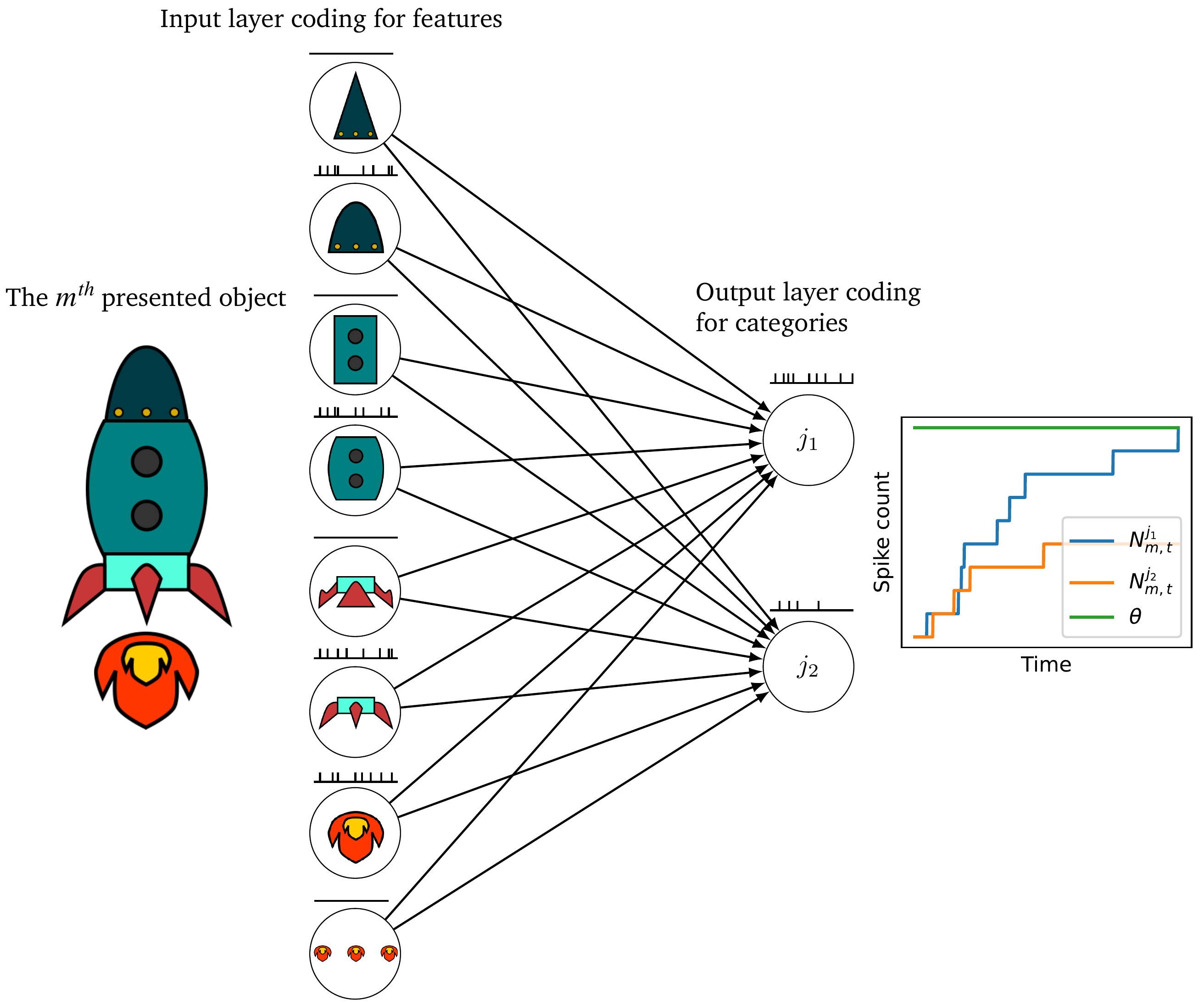}
    \caption{Illustrative example of the network used to model the experiment described in section. The task is to classify rockets into two categories, $j_1$ and $j_2$. The input neurons code for different shapes of head, body, fins and flames. The presented object excites the input neurons coding for its features, whereas other input neurons stay silent. The output neurons start spiking with a conditional intensity depending on their connection with the input neurons. The spike train of each neuron is represented on top of it. The first spike count to reach the threshold $\thr$ is the one of neuron $j_1$: therefore, the rocket is categorized in category $j_1$. }
    \label{fig réseau fusées}
\end{figure}
\newline

\textbf{Learning rule.} The network learns by updating its synaptic weights after each presented object by using an expert aggregation algorithm \citep{cesa2006prediction} 
thanks to this local paradigm: each presynaptic neuron $i$ can be seen as an expert and the strength of the connection $w^{i\to j}_m$ between $i$ and the postsynaptic 
neuron $j$ varies based on gains derived from these connections. After the presentation of the $m^{th}$ object, the postsynaptic neuron $j$ attributes the following 
gain to the presynaptic neuron $i$:
\[\label{def:gijm}
g^{i\to j}_m = \left\{
    \begin{array}{lll}
         \widehat{\gamma^i_m} \times \frac{M}{M^j} &\text{if } o \in j  \\ 
          \\
        - \widehat{\gamma^i_m} \!\times\! \frac{M}{M^{j'}} \!\times\! \frac{1}{\abs{J}-1} &\text{if } o \in j'\neq j 
    \end{array}
\right.
\]
where $o$ is the nature of the presented object, $M^j$ is the total number of presented objects belonging to category $j$ during the learning phase and
\begin{equation} \label{eq gamma hat}
    \widehat{\gamma^i_m} \coloneqq \Pi^i_{m,T_{\text{min}}} / T_{\text{min}}
\end{equation}
is the empirical firing rate of input neuron $i$ during duration $T_{\text{min}}$. This gain can be interpreted as follows: if the presented object belongs to category $j$, neuron $j$ should spike frequently to be the first to reach the threshold $\thr$. Therefore, it assigns positive gains to its presynaptic neurons, with these gain being proportional to their firing rates, thereby strengthening its connections with the input neurons that encode the most relevant features describing category $j$. Conversely, if the presented object does not belong to category $j$, neuron $j$ should spike less to avoid reaching the threshold $\thr$ before the neuron coding for the object's true category. In this case, it assigns negative gains (\ie losses) to its presynaptic neurons in order to reduce the strength of its connections with the most active neurons when presented with objects it should not encode. Note that neuron $j$ does not need to know what the other neurons $j'$ are doing to attribute the gains.  Also note that the gain depends on the ratio $M/M_j$, that is roughly speaking the proportion of objects in each category, and the larger $T_{\text{min}}$, the better the estimation of the input neurons firing rates.

We denote by $G^{i\to j}_m \coloneqq \sum_{m'=1}^m g^{i\to j}_{m'}$ 
\label{def:Gitojm}
the cumulated gain of input neuron $i$ w.r.t. output neuron $j$ until the $m^{th}$ object. Then after the presentation of the object, the weights of neuron $j$ are updated using the expert aggregation EWA (Exponentially Weighted Average)  \citep{cesa2006prediction}:
\begin{equation} \label{eq update w}
    w^{i\to j}_{m+1} \coloneqq \frac{\exp(\eta G^{i\to j}_m)}{\sum_{l\in I} \exp(\eta G^{l\to j}_m)}
\end{equation}
where the parameter $\eta >0$ is called the learning rate and determines how fast the network learns. In what follows, we choose 
\begin{equation}
\label{eq:defeta}
    \eta = \frac{\eta_0}{\sqrt{M}} \mbox{ where } \eta_0>0.
\end{equation}
Note that this learning rule is local: the gains used by neuron $j$ and therefore the update of the synaptic weights depend only on the activity of its presynaptic neurons.

We denote by 
\label{def:mathcalFm}$\mathcal{F}_m$ the $\sigma$-algebra generated by every event which happened until the end of the presentation of the $m^{th}$ object and $\mathcal{F}_0$ the trivial $\sigma-$algebra. Note that the weights $w^{i\to j}_{m+1}$ are $\mathcal{F}_m$-measurable.
\newline

\textbf{Additions over the previous model.} This new version differs from the model that we proposed in \cite{jaffard2024provable} by two main features. In this previous work, the neurons activities were defined in discrete time, and every object was presented during a fixed duration: the presented object was classified in the category of the output neuron with the highest spike count at the end of the presentation. Here, the neurons activities are defined in continuous time and the objects are presented for dynamic durations. To model behavior, we introduced a threshold and a limit duration, which allowed to model participant reaction times and the possibility that the participant will not decide between the different categories if not enough evidence has been accumulated.
\newline

\textbf{Comparison with the Poisson counter model and interpretation.} Similarly to the Poisson counter model, the evidence accumulation in favor of the different categories are interpreted as spike counts of neurons coding for these categories. However, unlike the Poisson counter model, the intensities of the output neurons are not given arbitrarily as prior knowledge about the categories, but are learned with a local rule as the network sees new objects. Our model can have the following interpretation: the input layer is a Poisson counter model which can recognize simple features, and an output layer is added and trained to recognize more complex concepts. This is biologically relevant since in the brain, elementary concepts can be depicted by the responses of individual neurons, whereas more intricate ones are represented by groups of interconnected neurons that work together \citep{singer1997neuronal}.

\subsection{Mathematical analysis}

In this section, we establish theoretical guarantees for our model. We begin by analyzing the asymptotic behavior of the network weights, which provides insights into the network behavior after the learning phase. Next, we outline the conditions necessary for the network to accurately categorize new objects following the learning process. This successful performance stems not from prior knowledge of the categories but from the learning phase itself. Finally, we demonstrate that the drift diffusion model can be derived from our model, establishing that it can be approximated by complex networks of interacting neurons having a learning mechanism.

\subsubsection{Asymptotic behavior}

Let us study the network's asymptotic behavior. We introduce a notation: given a set $E$ and a quantity $x_e \in \R$ indexed by $e\in E$, we denote its mean by $\brac{x_e}_{e\in E} \coloneqq \frac{1}{\abs{E}}\sum_{e\in E} x_e$.
\vspace{2mm}
\begin{definition}[Feature discrepancy]
    Let $i\in I$ and $j\in J$. The feature discrepancy of input neuron $i$ w.r.t. output neuron $j$ is 
    \[\label{def:ditoj}
    d^{i\to j} \coloneqq \brac{\gamma^i_o}_{o\in j} - \brac{\brac{\gamma^i_o}_{o\in j'}}_{j'\in J\setminus\{j\}}.
    \]
\end{definition}

The feature discrepancy of input neuron $i$ w.r.t. output neuron $j$ measures how sensible neuron $i$ is to category $j$ by comparing its average firing rate when presented with objects of category $j$ to its average firing rate when presented with objects of other categories. We can now define the set of input neurons which are the most sensible to category $j$ by $I^j \coloneqq \arg\max_{i\in I} d^{i\to j}$, 
\label{def:set of input most sensible to j} as well as the gap in discrepancy when $I^j \subsetneq I$
\[
\delta^j \coloneqq \max_{i\in I} d^{i\to j} - \max_{i\in I\setminus I^j} d^{i\to j}
\]
which measures how sensible the best input neurons are with respect to the others.
\vspace{2mm}
\begin{assumption} \label{assump nb obj}
    During the learning phase, each nature $o$ has been presented the same amount of times to the network, \ie $\frac{M}{\abs{\Obj}}$ times. 
\end{assumption}
\vspace{2mm}
\begin{proposition} \label{prop conv weights}
    Let $\alpha \in (0,1)$ and $\gamma_{\text{min}} \coloneqq \min_{o\in \Obj, i\in I}\{\gamma^i_o; \gamma^i_o>0\}$. 
    Suppose Assumption~\ref{assump nb obj} holds and
    \begin{equation} \label{eq M tmin}
          M T_{\text{min}} \geq \frac{8\abs{\Obj}}{3\gamma_{\text{min}}} \log\left(\frac{2\abs{I}\abs{J}}{\alpha}\right) .
    \end{equation}
    For $j\in J$, let $w^j_\infty = (w^{i\to j}_\infty)_{i\in I}$ the weight family such that $w^{i\to j}_\infty \coloneqq \frac{1}{\abs{I^j}}\mathbb{1}_{\{i\in I^j\}}$.\label{def:witojinfty} 
    Then with probability $1-\alpha$, at the end of the learning phase, the synaptic weights verify for all $j\in J$
    \begin{align*}
        \Norm{w^j_{M+1} - w^j_\infty}_2 \leq  2 \abs{\Obj}\eta_0 &\sqrt{\frac{8 \abs{I}}{3 T_{\text{min}}} \norm{\gamma}_{\infty} \log(2\abs{I}\abs{J}\alpha^{-1})} \\
        &+ \mathbb{1}_{\{I^j \subsetneq I\}}  \frac{\sqrt{\abs{I}}}{\abs{I^j}}\max\left(1, \frac{\abs{I} - \abs{I^j}}{\abs{I^j}} \right)\exp(- \eta_0\delta^j \sqrt{M}),
    \end{align*}
    where $\eta_0$ is defined by \eqref{eq:defeta}.
\end{proposition}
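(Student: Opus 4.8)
The plan is to recognize the updated weights \eqref{eq update w} as a softmax, at inverse temperature $\eta_0\sqrt M$, of the \emph{empirical} feature discrepancies, and then to control two independent sources of error: a statistical one (the empirical discrepancies fluctuate around their means) and a deterministic one (at finite $M$ the softmax is only an approximate indicator of the $\arg\max$ set $I^j$). First I would compute the conditional means of the gains. Since $\widehat{\gamma^i_m}=\Pi^i_{m,T_{\text{min}}}/T_{\text{min}}$ satisfies $\E[\widehat{\gamma^i_m}]=\gamma^i_o$ when the $m$-th object has nature $o$, and since under Assumption~\ref{assump nb obj} each nature is shown $M/\abs{\Obj}$ times (so $M^j=\abs{j}\,M/\abs{\Obj}$ and $M/M^j=\abs{\Obj}/\abs{j}$), summing the gains \eqref{def:gijm} over the learning phase and grouping by nature gives
\[
\tfrac1M G^{i\to j}_M=\widehat d^{\,i\to j}:=\brac{\widehat{\Gamma^i_o}}_{o\in j}-\BBrac{\brac{\widehat{\Gamma^i_o}}_{o\in j'}}_{j'\in J\setminus\{j\}},
\]
the empirical analogue of $d^{i\to j}$ built from the per-nature rate estimates $\widehat{\Gamma^i_o}$, with $\E[\widehat d^{\,i\to j}]=d^{i\to j}$. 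Thus \eqref{eq update w} reads $w^{i\to j}_{M+1}=\sigma(\eta_0\sqrt M\,\widehat d^{\,\cdot\to j})_i$, where $\sigma$ is the softmax, while $w^j_\infty$ is exactly $\lim_{M\to\infty}\sigma(\eta_0\sqrt M\, d^{\,\cdot\to j})$. I would then split, by the triangle inequality,
\[
\Norm{w^j_{M+1}-w^j_\infty}_2\le \Norm{\sigma(\eta_0\sqrt M\,\widehat d^{\,\cdot\to j})-\sigma(\eta_0\sqrt M\, d^{\,\cdot\to j})}_2+\Norm{\sigma(\eta_0\sqrt M\, d^{\,\cdot\to j})-w^j_\infty}_2.
\]

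For the first (statistical) term, I would use the Lipschitz continuity of the softmax in $\ell_2$ together with $\norm{\cdot}_2\le\sqrt{\abs I}\,\norm{\cdot}_\infty$, reducing the problem to controlling $\max_i\abs{\widehat d^{\,i\to j}-d^{i\to j}}$. Writing each $\widehat d^{\,i\to j}-d^{i\to j}$ as a signed combination, with weights bounded by $1$ in absolute value, of the centered independent Poisson counts $S^i_o:=\sum_{m:\,o_m=o}\Pi^i_{m,T_{\text{min}}}$ (mean and variance $\tfrac{M}{\abs\Obj}\gamma^i_o T_{\text{min}}$), a Bernstein inequality for such Poisson sums—whose variance proxy is at most $MT_{\text{min}}\norm{\gamma}_\infty$—together with a union bound over the $\abs I\abs J$ pairs $(i,j)$ yields, on an event of probability at least $1-\alpha$,
\[
\abs{\widehat d^{\,i\to j}-d^{i\to j}}\le \abs\Obj\sqrt{\tfrac{8\norm{\gamma}_\infty}{3MT_{\text{min}}}\log\!\big(2\abs I\abs J\alpha^{-1}\big)} .
\]
Condition \eqref{eq M tmin} is precisely what keeps the Bernstein bound in its sub-Gaussian ($8/3$) regime; multiplying through by $\eta_0\sqrt M\,\sqrt{\abs I}$ then produces the first term of the stated bound.

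For the second (approximation) term, present only when $I^j\subsetneq I$, I would estimate the softmax at the \emph{true} discrepancies directly. Every $i\in I^j$ attains the maximum while every $i\notin I^j$ lies below it by at least $\delta^j$; hence each mass on $I^j$ is within $\lesssim e^{-\eta_0\sqrt M\delta^j}$ of $1/\abs{I^j}$, and each mass off $I^j$ is at most $\tfrac{1}{\abs{I^j}}e^{-\eta_0\sqrt M\delta^j}$. Summing these squared deviations over $i\in I$ and taking the square root gives a contribution of the form $\tfrac{\sqrt{\abs I}}{\abs{I^j}}\max\!\big(1,\tfrac{\abs I-\abs{I^j}}{\abs{I^j}}\big)e^{-\eta_0\delta^j\sqrt M}$, the two regimes of the $\max$ reflecting whether $\abs I-\abs{I^j}$ dominates $\abs{I^j}$ or not. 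When $I^j=I$ the softmax at $d^{\,\cdot\to j}$ is already uniform and equal to $w^j_\infty$, which is why this term carries the indicator $\mathbb{1}_{\{I^j\subsetneq I\}}$.

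The main obstacle I anticipate is the statistical step: the gains are built from unbounded Poisson counts rather than bounded increments, so one must aggregate them by nature into genuine Poisson variables to invoke the sharp $8/3$ Bernstein bound, keep careful track of the weights $a^j_o$ (which satisfy $\abs{a^j_o}\le 1$ and $\sum_o\abs{a^j_o}=2$) so that the variance proxy remains $O(MT_{\text{min}}\norm{\gamma}_\infty)$, and verify that \eqref{eq M tmin} guarantees the deviation never leaves the variance-dominated regime—all while threading a single union bound over $(i,j)$ to land the factor $\log(2\abs I\abs J\alpha^{-1})$ rather than a coarser cardinality. By comparison, the finite-$M$ term is routine once the softmax concentration is organized around the gap $\delta^j$.
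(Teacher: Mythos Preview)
Your proposal is correct and follows essentially the same route as the paper: split $w^j_{M+1}-w^j_\infty$ through the intermediate point $\bar w^j_{M+1}:=\sigma(\eta\bar G^{\cdot\to j}_M)=\sigma(\eta_0\sqrt M\,d^{\cdot\to j})$, use the $\eta$-Lipschitz property of the softmax together with a Poisson/Bernstein concentration (the paper aggregates the Poisson counts by category $j'$ rather than by nature $o$, but this is the same computation up to bookkeeping and yields the same $8/3$ constant under \eqref{eq M tmin}), and control the deterministic term $\|\bar w^j_{M+1}-w^j_\infty\|_2$ via the gap $\delta^j$ (the paper quotes this last estimate from \cite{jaffard2024chani}, whereas you outline a direct derivation).
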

In Proposition~\ref{prop conv weights}, we prove that the network weights converge to a limit family $w_\infty \coloneqq (w^{i\to j}_\infty)_{i\in I, j\in J}$ 
and we provide rates of convergence. The error is twofold. The first part, in $O(\frac{1}{\sqrt{T_{\text{min}}}})$ where $T_{\text{min}}$ is 
the minimum duration for which an object is presented, comes from the randomness of the system and is derived using concentration inequalities 
to control how good is the estimation of $\gamma^i_o$ by $\widehat{\gamma^i_m}$ \eqref{eq gamma hat}. The second part, in $\exp(O(-\delta^j \sqrt{M})$ 
where $M$ is the number of objects presented during the learning phase, comes from the use of the expert aggregation algorithm EWA: the larger the gap 
in discrepancy, the faster the convergence. This limit family has the following interpretation: at the limit, output neuron $j$ is connected only and 
with equal strength to input neurons which are the most sensible to category $j$.

This result can be compared to Theorem $3.4$ of \cite{jaffard2024provable} which describes the asymptotic behavior of the network in discrete time with fixed object presentation durations. The present error has the same order of magnitude, where the number of time steps of each presented objects is replaced by the minimum duration of each object presentation $T_{\text{min}}$.

\subsubsection{Theoretical guarantees}

At the limit ($M$ and $T_{min}$ tend to infinity), the network's weights are the deterministic family $w_\infty$ given by Proposition~\ref{prop conv weights}. For $o\in \Obj$, let
\[
\Bar{\lambda}^j_{\infty, o} \coloneqq \sum_{i\in I}  w^{i\to j}_\infty \gamma^i_o
\]
which is the asymptotic average firing rate of neuron $j$ when objects with nature $o$ are presented. Therefore, the average number of spikes of neuron $j$ on $[0,t]$ with weights $w^j_\infty$ when presented with an object of nature $o$ is 
\[
\Bar{\lambda}^j_{\infty, o} \mathbb{G}(t)
\]
where $\mathbb{G} : t \mapsto \int_0^t \int_0^s g(s-u)du ds$. 
\label{def:mathbbG} Hence, on average, the first neuron whose spike count reaches threshold $\thr$ is the one with highest $\Bar{\lambda}^j_{\infty, o}$. This leads to this new notion of margin. Let $\Dlamb>0$.
\vspace{2mm}
\begin{assumption}[Margin($\Dlamb$)] \label{assump limit fam}
For every category $j^*\in J$, for every $j\neq j^*$, for every object nature $o\in j^*$ we have
      \[
    \Bar{\lambda}^{j^*}_{\infty, o} > \Bar{\lambda}^j_{\infty, o} + \Dlamb.
    \]
\end{assumption}
It means that at least in the limit the same gap as the ones for DDM and Poisson counter model holds.
Since we explicitly know the limit family $w_\infty$, this assumption can be interpreted as a condition on the categories $j \in J$ and the features represented by the input neurons  $i \in I$. This condition holds when the categories can be described as combinations of relevant features. In Section~\ref{sec expe modeling}, we prove that this assumption is satisfied in the task performed by the participants of the experiment. For a detailed characterization of the categories learnable by the discrete time network, we refer the reader to Section $4.2.4$ of \cite{jaffard2024chani}, where we describe the categories learnable by an extended version of our network that includes hidden layers, but no reaction time. 

This assumption can be related to Assumption~\ref{assump drift} (Margin($\Dmu$)) on the drift of the DDM
 and Assumption~\ref{assump intensity poisson} (Margin($\Dgam$)) on the intensities of the Poisson counter model. Indeed, similarly as in the theoretical studies of these two models, we need an assumption on the average evidence accumulation in favor of the several categories. However, unlike for the DDM and the Poisson counter model, this assumption does not reflect prior knowledge on the categories, but the ability of our network to code for the categories. It is only after the network has learned that it reaches these firing rates.
 
 Under these assumptions, we can formulate this result.

\vspace{2mm}
\begin{theorem} \label{th correct classif hawkes}
Let $\alpha\in(0,1)$ and $\eta_0>0$, and $\mathbb{G} : t \mapsto \int_0^t \int_0^s g(s-u)du ds$. Let $\Dlamb>0$ and suppose Assumptions~\ref{assump nb obj} and~\ref{assump limit fam} (Margin ($\Dlamb$)) hold. Let the parameters $M$, $T_{\text{min}}$, $T$, and $\thr$ verify
\begin{equation} \label{cond M}
    M\geq \Box \frac{1}{\eta_0^2 \delta_{\text{min}}^2} \log\left( \frac{\abs{I}}{\abs{I}_{\text{min}}} \right)^2 + \log\left(\frac{\norm{\gamma}_\infty}{\Dlamb}\right)^2
\end{equation}
where $\abs{I}_{\text{min}} \coloneqq \min_{j\in J} \abs{I^j}$ and $\delta_{\text{min}} \coloneqq \min_{j\in J} \delta^j$, 
\begin{equation} \label{cond Tmin}
    T_{\text{min}} \geq \Box \eta_0^2 \abs{\Obj}^2  \abs{I}^2 \frac{\norm{\gamma}_\infty^3}{ \Dlamb^2} \log\left(\frac{\abs{I}\abs{J}}{\alpha}\right),
\end{equation}
\begin{equation} \label{cond T}
   \mathbb{G}(T) > \Box_{\norm{g}_1} \log\left(\frac{\abs{J}}{\alpha}\right) \left(\frac{1}{\sqrt{\Dlamb}}\vee \frac{\norm{\gamma}_{\infty}}{\Dlamb}\right) ,
\end{equation}
\begin{equation} \label{cond thr inf}
    \thr \geq \left( \Box_{\norm{g}_1} \log\left(\frac{\abs{J}}{\alpha}\right) \left(\frac{\norm{\gamma}_\infty}{\sqrt{\Dlamb}}\vee \frac{\norm{\gamma}_{\infty}^2}{\Dlamb}\right)\right) \vee  \norm{\gamma}_\infty \mathbb{G}(T_{\text{min}})
\end{equation}
and
\begin{equation} \label{cond thr sup}
     \thr < \Box \Dlamb \mathbb{G}(T) - \Box_{\norm{\gamma}_\infty,\norm{g}_1} \left(\sqrt{ \mathbb{G}(T)\log\left(\frac{\abs{J}}{\alpha}\right)} + \log\left(\frac{\abs{J}}{\alpha}\right)\right).
\end{equation}
Then with probability larger than $1-\alpha$, we have that for all nature $o\in \mathcal{O}$ of the object presented after the learning phase, that is the $M+1$ object, if we denote by $j^*$ its category, the choice $\hatj$ \eqref{choiceHaw} and the reaction time $\tau^{N^{\hatj}}$ \eqref{tauHaw} satisfy
   \[
   \hatj = j^* \quad \mbox{and} \quad \tau^{N^{\hatj}} < T.
   \]
\end{theorem}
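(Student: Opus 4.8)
The plan is to turn the asymptotic description of the network given by Proposition~\ref{prop conv weights} into a quantitative statement at step $M+1$, and then to show via concentration that the output neuron coding for the true category $j^*$ is, with high probability, the first to cross $\thr$ within $[0,T]$. The key preliminary observation is that the conditional intensity $\lambda^j_{M+1,t}=\sum_{i\in I}\int_0^t w^{i\to j}_{M+1}g(t-s)\,d\Pi^i_{m,s}$ does not depend on $N^j_{M+1}$ itself: conditionally on $\mathcal{F}_M$ and on the input processes, each $N^j_{M+1}$ is an inhomogeneous Poisson (Cox) process, with mean count $\E[N^j_{M+1,t}\mid\mathcal{F}_M]=\bar\lambda^j_{M+1,o}\,\mathbb{G}(t)$ where $\bar\lambda^j_{M+1,o}:=\sum_{i\in I}w^{i\to j}_{M+1}\gamma^i_o$, by Fubini applied to $\mathbb{G}(t)=\int_0^t\int_0^s g(s-u)\,du\,ds$. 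This absence of self-feedback is what makes the concentration analysis tractable.

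First I would condition on $\mathcal{F}_M$ and apply Proposition~\ref{prop conv weights}: under \eqref{cond M} and \eqref{cond Tmin}, with probability at least $1-\alpha/2$ the weights satisfy $\norm{w^j_{M+1}-w^j_\infty}_2$ small for all $j\in J$. Since $\AAbs{\bar\lambda^j_{M+1,o}-\bar\lambda^j_{\infty,o}}\leq\sqrt{\abs{I}}\,\norm{\gamma}_{\infty}\norm{w^j_{M+1}-w^j_\infty}_2$, plugging in the two error terms of Proposition~\ref{prop conv weights} and using exactly the scalings $T_{\text{min}}\gtrsim\eta_0^2\abs{\Obj}^2\abs{I}^2\norm{\gamma}_{\infty}^3/\Dlamb^2$ and $M\gtrsim\eta_0^{-2}\delta_{\text{min}}^{-2}(\ldots)$ of \eqref{cond Tmin}–\eqref{cond M}, I would obtain that Margin($\Dlamb$) transfers to the realized rates with constant loss, e.g.\ $\bar\lambda^{j^*}_{M+1,o}>\bar\lambda^j_{M+1,o}+\Dlamb/2$ and $\bar\lambda^{j^*}_{M+1,o}\geq\Dlamb/2$ (the latter because the competing rates are nonnegative). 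This converts the idealized separation of Assumption~\ref{assump limit fam} into a usable separation for the network actually run at step $M+1$.

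Next comes the core estimate: a uniform-in-time concentration of $N^j_{M+1,t}$ around $\bar\lambda^j_{M+1,o}\,\mathbb{G}(t)$. I would split the deviation into the point-process martingale $N^j_{M+1,t}-\int_0^t\lambda^j_{M+1,s}\,ds$ and the input fluctuation $\int_0^t\lambda^j_{M+1,s}\,ds-\bar\lambda^j_{M+1,o}\,\mathbb{G}(t)$ (the latter a linear functional of the input Poisson processes, after rewriting $\int_0^t\lambda^j_{M+1,s}\,ds=\sum_i w^{i\to j}_{M+1}\int_0^t G(t-u)\,d\Pi^i_{m,u}$ with $G(x)=\int_0^x g$), and control both by exponential Bennett/Bernstein-type inequalities for Poisson processes. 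After a union bound over $j\in J$ this yields deviations of order $\sqrt{\mathbb{G}(t)\log(\abs{J}/\alpha)}+\log(\abs{J}/\alpha)$, uniformly on $[0,T]$, which is exactly the fluctuation scale appearing in \eqref{cond thr inf} and \eqref{cond thr sup}. Finally I would convert these count bounds into hitting-time comparisons. The lower bound \eqref{cond thr inf} forces the crossing to occur in the regime where $\mathbb{G}$ is large, so that the mean gap $\Dlamb\,\mathbb{G}(t)$ between $j^*$ and any $j\neq j^*$ dominates the $\sqrt{\mathbb{G}(t)\log}$ fluctuation (the term $\norm{\gamma}_{\infty}\mathbb{G}(T_{\text{min}})$ additionally rules out any spurious crossing before $T_{\text{min}}$), giving $\tau^{N^{j^*}}<\tau^{N^j}$ for all $j\neq j^*$; the upper bound \eqref{cond thr sup} combined with $\bar\lambda^{j^*}_{M+1,o}\geq\Dlamb/2$ and \eqref{cond T} forces $\bar\lambda^{j^*}_{M+1,o}\mathbb{G}(T)$ to exceed $\thr$ by more than the fluctuation, so $N^{j^*}_{M+1,T}\geq\thr$ and $\tau^{N^{j^*}}<T$. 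On the intersection of these events, of probability at least $1-\alpha$, one concludes $\hatj=j^*$ and $\tau^{N^{\hatj}}<T$.

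I expect the main obstacle to be the uniform-in-time concentration of the Hawkes counts, handling the two nested sources of randomness (the input processes driving the random intensity, and the output given the intensity), and especially the passage from count deviations to hitting-time comparisons: one must invert $\mathbb{G}$ and verify that $\thr$ lands in the window where the margin-driven separation $\Dlamb\,\mathbb{G}(t)$ beats the $\sqrt{\mathbb{G}(t)\log(\abs{J}/\alpha)}$ fluctuation, which is precisely the constraint coupling the four parameter conditions \eqref{cond M}–\eqref{cond thr sup} together.
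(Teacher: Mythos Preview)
Your proposal is correct and follows essentially the same route as the paper: transfer the margin to the realized rates $\bar\lambda^j_{M+1,o}$ via Proposition~\ref{prop conv weights}, concentrate $N^j_{M+1,t}$ around $\bar\lambda^j\,\mathbb{G}(t)$ by splitting into the point-process martingale $N-\Lambda$ and the Poisson-driven compensator fluctuation $\Lambda-\bar\Lambda$ (the paper handles the first piece via the exponential martingale inequality of \cite{hansen2015lasso} and the second via a Cram\'er--Chernoff argument on Poisson integrals), then convert to hitting times. One simplification the paper exploits that removes what you flag as the main obstacle: because each $N^j_{M+1}$ is a counting process and hence nondecreasing, no uniform-in-time concentration is needed; it suffices to apply the pointwise bounds at a single deterministic time $t_0\in[T_{\min},T]$ chosen so that the lower envelope of $N^{j^*}_{M+1,t_0}$ equals $\thr$, after which monotonicity gives $\tau^{N^{j^*}}\leq t_0<\tau^{N^j}$ for all $j\neq j^*$.
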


This theorem provides conditions on the minimal duration $T_{\text{min}}$, maximal duration $T$, number of presented objects during the learning phase $M$ and threshold $\thr$ required for the model to correctly categorize objects after learning. Let us start by commenting on these conditions. 
\begin{enumerate}
    \item \textbf{Sufficient learning phase duration:} The network effectively learns the categories if $M$ verifies \eqref{cond M}, ensuring a long enough learning phase.
    \item \textbf{Sufficient minimal observation time for learning objects:} The network has sufficient time to infer the input neurons firing rate to update its synaptic weights if $T_{\text{min}}$ verifies \eqref{cond Tmin}.
    \item \textbf{Sufficient threshold:} The spike count difference between the true category process $N^{j^*}_{M+1}$ and others is significant at decision time if $\thr$ verifies \eqref{cond thr inf}.
    \item \textbf{Sufficient maximal observation time:} 
    The network has sufficient time to detect differences in spike counts between the true category process $N^{j^*}_{M+1}$ and others if $T$ verifies \eqref{cond T}.
    \item \textbf{Sufficiently low threshold:} The network has a sufficiently low threshold to be able to take a decision before limit duration $T$ if $\thr$ verifies \eqref{cond thr sup}.
\end{enumerate}
Note that we always have $\mathbb{G}(t)\leq \norm{g}_1 t$ and if the function $g$ has compact support of negligible length compared $T$, as it is assumed in the following section, then $\mathbb{G}(T)\approx \norm{g}_1 T$. In this case, the upper bound on $\thr$ \eqref{cond thr sup} has the same order of magnitude regarding the limit duration $T$ as the those in Theorem~\ref{th ddm} for the DDM and Theorem~\ref{th poisson correct categ} for the Poisson counter model, \ie in $O(T)$. It is reasonable to make such an assumption because a participant's reaction time is typically around a few seconds and in practice we chose $T=5s$ in the experiment described in Section~\ref{sec expe}, whereas the effect of a presynaptic neuron's spike on a postsynaptic neuron, represented by the support of the function $g$, lasts only a few milliseconds.

The same way as for the constant $\Dmu$ quantifying the larger drift for the DDM and the constant $\Dgam$ quantifying the larger gap in intensity for the Poisson counter model, here, the larger $\Dlamb$, \ie the larger the gap between the limit firing rate of the output neuron coding for the true category and the other output neurons, the larger the size of the thresholds interval enabling a correct categorization with high probability. Similarly as in Theorem~\ref{th ddm}, we need the maximal duration $T$ to be sufficiently large \eqref{cond T} and unlike in Theorems~\ref{th ddm} and~\ref{th poisson correct categ} where there was no learning mechanism, additional assumptions are needed here regarding $T_{\text{min}}$ and $M$ to ensure a correct learning. Note that all these assumptions become weaker as $\Dlamb$ increases. 

The proof, detailed in Appendix~\ref{proof th correct classif hawkes}, relies on several key components. First, concentration inequalities are employed to prove that the processes $(N^j_{M+1,t})_{t\geq 0}$ are closely approximated by their compensators. Proposition~\ref{prop conv weights} is then used to show that these compensators are close to the average spike counts with limit weights. Finally, Assumption~\ref{assump limit fam} is applied to derive conditions under which the network correctly categorizes the new object.

\subsection{Coupling between the Drift Diffusion Model and the Hawkes Counter Model} 

Similarly as for the Poisson counter model (Section~\ref{sec coupling ddm poisson}), we can build a 
coupling between the processes given by the drift diffusion model and the Hawkes counter model under a 
certain framework. Here, we work conditionally to %
what happened until object $m$, whether it is during learning or post learning: the synaptic weights $w^{i\to j}_m$ are therefore
assumed to be deterministic. We suppose that we dispose of $n$ independent copies of the network, and 
each independent network is governed by the same weights $(w^{i\to j}_m)_{i\in I,j\in J}$. During 
the presentation of object $m$, for each feature $i$, there are $n$ input neurons spiking as a 
Poisson processes with intensity $\gamma^i_o$ where $o$ is the nature of object $m$: we denote by 
$(\Pi^{i,n}_{m,t})_{t\geq 0}$ the sum of their count, which is a Poisson process with intensity 
$n\gamma^i_o$ by independence. In the same way, there are $n$ output neurons coding for the same 
category $j\in J$. The evidence in favor of category $j$ is then the sum of their counts and is 
denoted $(N^{j,n}_{m,t})_{t\geq 0}$, which is a Hawkes process with intensity
\begin{equation} \label{eq intensity hawkes n}
    \lambda^{j,n}_{m,t} = \sum_{i\in I} \int_0^t w^{i\to j}_m g(t-s) d\Pi^{i,n}_{m,s}.
\end{equation}
Note that two other frameworks lead to the same processes: %
only one network of size $|I|\times |J|$ with 
input neurons intensities $(n\gamma^i_o)_{i\in I,o\in \Obj}$, but also one network with 
$n\times \abs{I}$ input neurons and only $\abs{J}$ output neurons.

During the presentation of the $m^{th}$ object, the average evidence accumulation in favor of category $j$ at present time $t$ of object $m$ is then
\[
\E[N^{j,n}_{m,t}] = n\Bar{\lambda}^j_m \mathbb{G}(t)
\]
where $\Bar{\lambda}^j_m \coloneqq \sum_{i\in I} w^{i\to j}_m \gamma^i_o$ with $o$ the nature of the presented object.
\vspace{2mm}
\begin{theorem} \label{th coupling hawkes ddm}
Let $\alpha\in (0,1)$, $m\in \N^*$, $n\in \N^*$ and $o\in \Obj$ the nature of the $m^{th}$ object. Let $N^n_m = (N^{j,n}_m)_{j\in J}$ be a $\abs{J}-$dimensional Hawkes
 process with intensity function defined by \eqref{eq intensity hawkes n} and let $\Bar{\lambda}_{\text{min}}\coloneqq \min_{j\in J} \Bar{\lambda}^j_m$. We assume that the function $g$ has compact support included in $[0,n^{-1/2}]$, that its norm $\norm{g}_1$ does not depend on $n$, and that $\Bar{\lambda}_{\text{min}}>0$. Then if $ n\geq \Box_{\norm{\gamma}_\infty,\norm{g}_1, \Bar{\lambda}_{\text{min}},\thr} \log\left(\frac{\abs{I}+\abs{J}}{\alpha}\right)^{3/2}$, there exists a $\abs{J}$-dimensional drifted Brownian motion with correlated noise 
 $\WW_m^n = (W^{j,n}_m)_{j\in J}$ with mean vector defined by $\forall j\in J, \forall t\geq 0, \quad \E[W^{j,n}_{m,t}] = n\Bar{\lambda}^j_m\mathbb{G}(t)$,
and covariance matrix defined by $\forall j_1,j_2\in J, \forall t,s\geq 0$,
\[
\cov(W^{j_1,n}_{m,t}, W^{j_2,n}_{m,s}) = n (t\wedge s) \left( \sum_{i\in I}w^{i\to j_1}_m w^{i\to j_2}_m \gamma^i_o + \mathbb{1}_{j_1 = j_2} \norm{g}_1 \Bar{\lambda}^{j_1}_m \right )
\]
such that with probability more than $1-\alpha$, all the following inequalities hold jointly:
\begin{enumerate}
    \item The processes $\Pi^n_m$ and $W^n_m$ verify
   \begin{equation} \label{eq sup N - W th}
    \sup_{j\in J, t> 0}\frac{ \AAbs{N^{j,n}_{m,t} - W^{j,n}_{m,t}}}{n^{1/4}(t\vee 1)^{1/4}\log(nt + 1)^{3/4}} \leq \Box_{\norm{g}_1,\norm{\gamma}_\infty} \log\left(\frac{\abs{I}+\abs{J}}{\alpha}\right)^{3/4}.
\end{equation}
\item The hitting times $(\taun)_{j\in J}$ defined for $j\in J$ by 
\label{def:taun}$\taun \coloneqq \inf\{u\geq 0; N^{j,n}_{m,u} \geq n\thr\}$ verify
\[
\sup_{j\in J}\AAbs{\taun - \frac{\thr}{\Bar{\lambda}^j_m \norm{g}_1}} \leq \Box_{\norm{\gamma}_\infty,\norm{g}_1, \Bar{\lambda}_{\text{min}},\thr}  n^{-1/2}\log\left(\frac{\abs{I}+\abs{J}}{\alpha}\right)^{3/4}.
\]
\item The hitting times $(\tauw)_{j\in J}$ defined for $j\in J$ by $\tauw \coloneqq \inf\{u\geq 0; W^{j,n}_{m,u} \geq n\thr\}$ verify
\[
\sup_{j\in J}\AAbs{\tauw - \frac{\thr}{\Bar{\lambda}^j_m \norm{g}_1}} \leq\Box_{\norm{\gamma}_\infty,\norm{g}_1, \Bar{\lambda}_{\text{min}},\thr} n^{-1/2}\log\left(\frac{\abs{J}}{\alpha}\right)
.
\]
\item The difference between the hitting times $(\taun)_{j\in J}$ and $(\tauw)_{j\in J}$ verifies
\begin{align} \label{eq taun - tauw}
   \sup_{j\in J}  \abs{\taun - \tauw} &\leq \Box_{\norm{\gamma}_\infty,\norm{g}_1, \Bar{\lambda}_{\text{min}},\thr}   \left(\frac{\log(n)}{n}\right)^{3/4} \log\left(\frac{\abs{I}+\abs{J}}{\alpha}\right)^{3/4}  .
\end{align}
\end{enumerate}

\end{theorem}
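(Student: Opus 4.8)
The plan is to use the fact that, conditionally on the input spike trains $(\Pi^{i,n}_m)_{i\in I}$, each output process $N^{j,n}_m$ is an inhomogeneous Poisson (Cox) process: its intensity $\lambda^{j,n}_{m,t}$ in \eqref{eq intensity hawkes n} depends on the inputs but never on $N^{j,n}_m$ itself. I would first write the Doob--Meyer decomposition $N^{j,n}_{m,t}=\Lambda^{j,n}_{m,t}+M^{j,n}_{m,t}$ with $M^{j,n}_m$ a martingale, and use Fubini to rewrite the compensator as $\Lambda^{j,n}_{m,t}=\sum_{i\in I}w^{i\to j}_m\int_0^t G(t-u)\,d\Pi^{i,n}_{m,u}$, where $G(v)\coloneqq\int_0^v g$. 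Since $g$ is supported in $[0,n^{-1/2}]$, one has $G(t-u)=\norm{g}_1$ off a boundary layer of width $n^{-1/2}$, and an integration by parts shows that passing the kernel through a coupling of the inputs only costs $\norm{g}_1\sup_u|\Pi^{i,n}_{m,u}-\tilde W^{i,n}_{m,u}|$.

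Second, I would couple each ingredient to a Brownian motion. For each input $i$ I apply the Poisson-to-Brownian strong approximation of \cite{ethier2009markov} used in Theorem~\ref{th coupling ddm poisson}, producing a Brownian motion $\tilde W^{i,n}_m$ with $\sup_u|\Pi^{i,n}_{m,u}-\tilde W^{i,n}_{m,u}|=O(\log(n\gamma^i_o u))$, and I write $\tilde W^{i,n}_{m,u}=n\gamma^i_o u+\sqrt{n\gamma^i_o}\,\beta^i_{m,u}$. For the martingale I use the conditional time-change $N^{j,n}_{m,t}=P^j(\Lambda^{j,n}_{m,t})$ on an enlarged space, with $P^j$ a unit-rate Poisson process independent of the inputs, and couple $P^j$ to a standard Brownian motion $\hat B^j$ by the same strong approximation, so that $M^{j,n}_{m,t}=\hat B^j(\Lambda^{j,n}_{m,t})+O(\log(nt))$. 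I then define $W^{j,n}_m$ as the sum of the drift $n\Bar{\lambda}^j_m\mathbb{G}(t)$, a shared input-noise term built from $(\beta^i)_{i\in I}$ weighted by $w^{i\to j}_m\sqrt{n\gamma^i_o}$ and scaled to reproduce the off-diagonal covariance $\sum_i w^{i\to j_1}_m w^{i\to j_2}_m\gamma^i_o$, and an independent output-noise term $\sqrt{n\norm{g}_1\Bar{\lambda}^j_m}\,\hat B^j_{m,t}$ reproducing the diagonal term $\norm{g}_1\Bar{\lambda}^j_m$. By construction $W^n_m$ has exactly the stated mean and covariance.

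Third, I bound $N^{j,n}_{m,t}-W^{j,n}_{m,t}$ as a sum of three contributions and show each is $O\!\big(n^{1/4}(t\vee1)^{1/4}\log(nt+1)^{3/4}\big)$ uniformly in $t$: (a) the input coupling error, $O(\log(nt))$ by the previous step; (b) the boundary error from replacing the $G$-weighted stochastic integral by its Brownian approximation, a centered Gaussian living on $[t-n^{-1/2},t]$ of variance $O(\sqrt{n})$ and hence size $O(n^{1/4})$; and (c) the time-change error $|\hat B^j(\Lambda^{j,n}_{m,t})-\hat B^j(n\norm{g}_1\Bar{\lambda}^j_m t)|$. The main obstacle is (c). Here I first control the clock discrepancy $|\Lambda^{j,n}_{m,t}-n\norm{g}_1\Bar{\lambda}^j_m t|=O(\sqrt{n(t\vee1)})$ — the compensator's input-driven fluctuation plus the $O(\sqrt n)$ boundary bias in the mean, the latter absorbed by using the drift $n\Bar{\lambda}^j_m\mathbb{G}(t)$ rather than $n\norm{g}_1\Bar{\lambda}^j_m t$ — and then bound the oscillation of $\hat B^j$ over a window of that length by Lévy's modulus of continuity, which produces exactly the $(n(t\vee1))^{1/4}$ rate together with the extra $\log^{3/4}$ factor. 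Making this uniform over the infinite horizon $t>0$ is delicate: it requires a modulus-of-continuity/maximal inequality for $\hat B^j$ holding simultaneously across all dyadic scales, combined with a uniform deviation bound for $\Lambda^{j,n}_m$. The union bounds over $I$, $J$ and the scales are what generate the $\log((\abs{I}+\abs{J})/\alpha)^{3/4}$ dependence in \eqref{eq sup N - W th} and force the lower bound $n\geq\Box_{\norm{\gamma}_\infty,\norm{g}_1,\Bar{\lambda}_{\text{min}},\thr}\log((\abs{I}+\abs{J})/\alpha)^{3/2}$.

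Finally, the hitting-time statements follow from \eqref{eq sup N - W th} together with the linear drift. Past the boundary layer both $N^{j,n}_m$ and $W^{j,n}_m$ have mean growing at rate $n\Bar{\lambda}^j_m\norm{g}_1$, so inverting the deviation of the process value at the deterministic crossing time $\thr/(\Bar{\lambda}^j_m\norm{g}_1)$ yields items (2) and (3): the intrinsic $O(\sqrt n)$ fluctuation of a count (resp. Gaussian) of variance $O(n)$, divided by the drift rate $O(n)$, gives the $n^{-1/2}$ rate. For (4) I would not route through the common limit but compare the two hitting times directly: on the event of \eqref{eq sup N - W th} the trajectories differ by $O(n^{1/4}\log^{3/4})$ near level $n\thr$, and dividing by the drift rate $O(n)$ gives the sharper $(\log n/n)^{3/4}$ of \eqref{eq taun - tauw}. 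Throughout one checks, using $\Bar{\lambda}_{\text{min}}>0$ and the lower bound on $n$, that the drift is bounded below at the crossing so that the inversion is licit.
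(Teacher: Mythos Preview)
Your plan is essentially the paper's own proof: the time-change $N^{j,n}_{m,t}=\pi^j(\Lambda^{j,n}_{m,t})$, KMT coupling of both the input Poisson processes and the $\pi^j$'s to Brownians, the sandwich $\|g\|_1\Pi^{i,n}_{m,t-S}\le \Lambda^{j,n}_{m,t}/w^{i\to j}_m\le \|g\|_1\Pi^{i,n}_{m,t}$ from the small support of $g$, and a uniform Brownian-increment inequality (the paper's Lemma~\ref{Control_brownian_2times}, which is exactly your ``modulus-of-continuity/maximal inequality across dyadic scales'') to handle $|\hat B^j(\Lambda^{j,n}_{m,t})-\hat B^j(n\|g\|_1\Bar{\lambda}^j_m t)|$. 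The only place your sketch is a little too quick is item~(4): ``dividing by the drift rate'' is not enough because the Brownian noise of $W^{j,n}_m$ between $\taun$ and $\tauw$ is of order $\sqrt{n|\taun-\tauw|}$; the paper closes this by writing $|W^{j,n}_{m,\taun}-W^{j,n}_{m,\tauw}|\ge n\Bar{\lambda}^j_m\|g\|_1|\taun-\tauw|-\Box\sqrt{n|\taun-\tauw|\log(n|J|/\alpha)}$ and solving the resulting quadratic in $\sqrt{n|\taun-\tauw|}$.
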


With this theorem, we establish a coupling between the Hawkes counter model and the drift diffusion model by bounding the supremum of their trajectories over all $\mathbb{R}_+$.
 
The proof, provided in 
Appendix~\ref{sec proof the coupling hawkes ddm}, builds on the same coupling result between Poisson 
and Brownian processes established by \cite{ethier2009markov}, which we previously used to prove 
Theorem~\ref{th coupling ddm poisson} and new inequalities that we derived for Brownian and Hawkes 
processes. 

To achieve this, we had to assume that $\Bar{\lambda}_{\text{min}}>0$, \ie that $\Bar{\lambda}^j_m>0$ for every $j\in J$. This is easy to satisfy: indeed, if $\Bar{\lambda}^j_m =0$, it means that neurons coding for category $j$ are not linked to any input neuron currently active, therefore we are in the non interesting case where the process $N^{j,n}_{m}$ stays equal to zero and never reach $n\thr$. We can then consider the set of processes such that $\Bar{\lambda}^j_m>0$ instead of $J$ with no loss of generality. We also had to assume that $g$ has compact support decreasing at a rate of $n^{-1/2}$. This 
implies that the number of spikes influencing a group of $n$ neurons encoding the same category $j$ does
not scale linearly with $n$, as it would if $g$ had a fixed support, but rather grows at a rate of 
$n^{1/2}$. This assumption further ensures that $\mathbb{G}(t) \approx \norm{g}_1 t$, meaning the mean vector of the 
drifted Brownian motion given by the theorem closely resembles that of classical diffusion processes. 
Unlike in the model described in Section~\ref{sec ddm} and in Theorem~\ref{th coupling ddm poisson}, where we established a coupling between the DDM and the Poisson counter model, the drift $\mu^j_m \coloneqq  n\Bar{\lambda}^j_m$ of process $W^{j,n}_m$ does not depend only on the nature $o$ of the presented object, but also on the presentation number $m$ through its dependency in the weights $w^j_m$. The covariance matrix of the process $W^n_m$ is also not diagonal: the processes of $(W^{j,n}_m)_{j\in J}$ are correlated, reflecting the natural dependencies within neural networks. They can be expressed as solutions of the $|J|$-dimensional stochastic differential equation, which is defined for each process $j$ as follows:
\[
dW^{j,n}_{m,t} =  n\Bar{\lambda}^j_m d\mathbb{G}(t) + \sqrt{n\Bar{\lambda}^j_m\norm{g}_1} dB^j_t + \sum_{i\in I} w^{i\to j}_m \sqrt{n\gamma^i_o} dB^i_t .
\]
Each $B^j_t$ represents an independent noise source, while the collection $(B^i_t)_{i \in I}$ introduces correlated noise terms across all processes. This correlation arises from the shared influence of the same input neurons on each output neuron in our model. The obtained bound on the trajectories \eqref{eq sup N - W th} is in $O(n^{1/4} \log(n)^{3/4})$, which is less tight than the $O(\log(n))$ bound established in Theorem~\ref{th coupling ddm poisson} for the trajectories of the Poisson counter model and the drift diffusion model. This is due to the greater complexity of the Hawkes counter model.

Besides, similarly as in Theorem~\ref{th coupling ddm poisson} for Poisson and Brownian processes, we establish for every $j$ the convergence of the hitting times of both Hawkes process $(N^{j,n}_{m,t})_{t\geq 0}$ and Brownian process $(W^{j,n}_{m,t})_{t\geq 0}$ to $\frac{\thr}{\Bar{\lambda}^j_m\norm{g}_1}$, which is the hitting time of the deterministic processes $(\E[N^{j,n}_{m,t}])_{t\geq 0}$ and $(\E[\W^{j,n}_{m,t}])_{t\geq 0}$. However, unlike in Theorem~\ref{th coupling ddm poisson} where the limit hitting time depends on prior knowledge $\gamma^j_o$, here the limit hitting time depends on $w^j_m$, \ie on the learning stage of the model. 
Furthermore, we prove that the difference between these two hitting times decreases faster than $n^{-1/2}$, which is an upper bound on  their respective  convergence rate to their common limit. The resulting bound \eqref{eq taun - tauw} is less tight than the similar bound of Theorem~\ref{th coupling ddm poisson} due to the fact that the bound on the trajectories \eqref{eq sup N - W th} is also less tight in this case.

Therefore, this theorem ensures that both models select the same category $\hat{j}$, and we can apply the same bound to the difference between the reaction times of the two models: we have
\[
\AAbs{\tau^N - \tau^W}= \AAbs{ (T_{\text{min}}\vee\tau^{N^{\hatj}_n}) \wedge T- (T_{\text{min}}\vee\tau^{W^{\hatj}}_n)\wedge T} = O\left(\left(\frac{\log(n)}{n}\right)^{3/4}\right)
\]
establishing that both models provide similar reaction times and categorizations.

Consequently, this theorem confirms that the dynamics of evidence accumulation for category 
decisions are similar in both models. Extending beyond the results established in 
Section~\ref{sec coupling ddm poisson}, this finding demonstrates that DDMs can emerge from complex models of interacting spiking neurons with a learning mechanism. This provides new insights into the biological interpretation of DDMs and supports the idea that the prior knowledge assumed in these models can arise through a learning process. %

\section{The Experiment} \label{sec expe}

To validate our model, we designed a categorization task in the form of an online video game, accessible at \url{https://3ia-demos.inria.fr/mel/en/}. The goal of the game was for participants to learn which rockets displayed on the screen were capable of reaching the moon and which were not. We opted for a video game format rather than a classical categorization task to make the activity more engaging, particularly for younger participants. The game was coded to allow flexibility in the difficulty of the categorization task. For this experiment, participants were presented with the easiest level (details provided in the Stimuli and Categories Section~\ref{sec stim and cat}).

\subsection{Participants}

A total of $99$ participants, including $45$ middle school students and $54$ undergraduate students, were recruited for the study. Middle school students ($23$ male, $22$ female) ranged in age from $11$ to $15$ years (M = $12.13$, SD = $1.15$), while undergraduate students were all aged between $18$ and $25$. All participants reported normal or corrected-to-normal vision.
The study was approved by the ethics committee of Université Côte d’Azur and was conducted in accordance with relevant guidelines and regulations (ethics approval number $2022-098$). All participants provided informed consent prior to participation. Participants were recruited via one of their teachers. Middle school students performed the task at school whereas undergraduate students performed it at home.

\subsection{Stimuli and Categories} \label{sec stim and cat}

The set of stimuli consisted of 16 unique rockets, each defined by four characteristics, with two possible features for each: the shape of the head (sharp or round), the shape of the body (straight or round), the shape of the fins (straight or curved), and the number of flames (one or three). These rockets are illustrated in Figure~\ref{fig 16 rockets}.
As mentioned above, participants were administered the easiest level of the categorization task. At this difficulty level, the categories were formed based on a single characteristic selected randomly at the beginning of the task. Rockets were divided into two groups: those with one feature of the chosen characteristic and those with the alternative feature. For example, if the selected characteristic was the shape of the head, rockets with a sharp head were categorized as capable of landing on the moon, while rockets with a round head were categorized as unable to land on the moon. Alternatively, rockets with a round head could also have been assigned to the ``capable of landing on the moon" category, as the assignment was made randomly.
From these two categories, five rockets from each group were randomly selected, resulting in 10 unique rockets presented during the learning phase. The remaining six rockets were used in the transfer phase.
\begin{figure}
    \hspace{1cm}
    \includegraphics[width=0.8\linewidth]{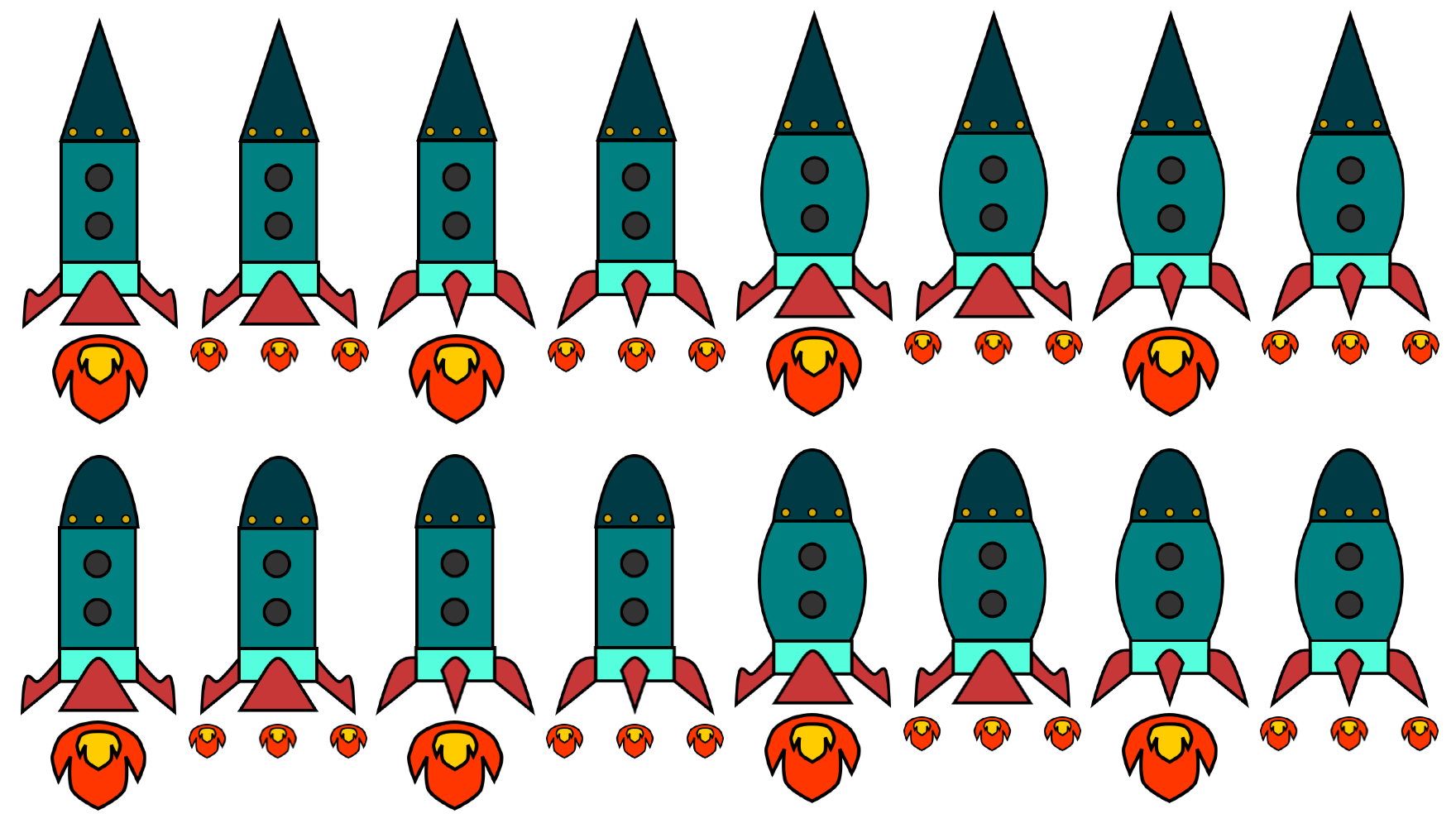}
    \caption{The $16$ objects.}
    \label{fig 16 rockets}
\end{figure}

\subsection{Procedure}

Participants performed the task on a computer. Middle school students were supervised by their teacher, and undergraduate students performed the task at home. Prior to the experiment, participants completed a series of demographic questions and received detailed instructions on how to perform the task. The experiment consisted of two phases: a learning phase and a transfer phase.
In the learning phase, participants were asked to determine which rockets were capable of landing on the moon and received immediate feedback on the correctness of their responses. Stimuli were presented one at a time at the center of the screen for 5 seconds. To help participants track time, a clock icon was displayed. After viewing each stimulus, participants made their decision by selecting one of two buttons labeled `Yes' or `No' to indicate whether the rocket could land on the moon. Upon selecting a button, feedback on the correctness of the response was presented for $2$ seconds, including a visual cue in the form of a green checkmark for correct responses or a red cross for incorrect ones. Additionally, an animation briefly depicted whether the rocket was able to land on the moon. If participants did not provide a response within 5 seconds, a `time is up' icon appeared on the screen.
To reinforce learning, a progression bar was displayed at the top left of the screen, with a green checkmark appearing each time a correct response was given. Participants were required to correctly classify 15 consecutive rockets to pass the learning phase. The 10 unique rockets were presented in a random order and repeated until the participant reached the learning criterion, with a new random order at each repetition.
After the learning phase, participants proceeded to the transfer phase, where they were instructed to classify new rockets that had not been presented in the learning phase. Stimuli were presented in the same manner as in the learning phase. In this phase, six new rockets were presented three times in a random order, resulting in a total of 18 trials. No feedback was provided during the transfer phase.
Participants could choose to withdraw from the experiment at any time by selecting a cross icon located at the top left of the screen.

\subsection{Modeling} \label{sec expe modeling}

We modeled this experiment with the model as described in Figure~\ref{fig réseau fusées}. Object natures are separated between a set of learning natures $\Obj_l$ containing the $10$ learning rockets and a set of transfer natures $\Obj_{tr}$ containing the $6$ transfer rockets. The set of feature $I$ contains the $8$ possible features of the rockets (two shapes for the head, two shapes for the body, two shapes for the fins and two possible type of flames). The two categories denoted $j_1$ and $j_2$ are characterized by a certain characteristic declined in two features $i_1$ and $i_2\in I$: every object nature $o$ has either feature $i_1$ or feature $i_2$, and category $j_1$ contains every nature $o$ with $i_1$ whereas category $j_2$ contains every nature $o$ with $i_2$. 

When an input neuron $i$ is presented with a rocket having nature $o\in \Obj_l\cup\Obj_{tr}$, it starts spiking with intensity $\gamma^i_o \coloneqq \mathbb{1}_{\{\text{object $o$ has feature $i$}\}}\gamma$ with $\gamma>0$. In other words, input neuron $i$ spikes with firing rate $\gamma$ if the object has feature $i$ and stays silent otherwise.

\begin{proposition} \label{prop cas particulier}
    For the set of object natures $\Obj_l$ and input neuron intensities $\gamma^I$ defined above, the sets defined in Proposition~\ref{prop conv weights} are $I^{j_1} = \{i_1\}$ and $I^{j_2} = \{i_2\}$, and the limit family $w_{\infty}$ verifies Assumption~\ref{assump limit fam}: it enables to categorize well the objects and we can choose any constant in $(0,\gamma)$ for $\Dlamb$.
\end{proposition}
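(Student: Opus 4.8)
The plan is to evaluate the feature discrepancies $d^{i\to j}$ explicitly from the binary intensity structure $\gamma^i_o = \mathbb{1}_{\{o\text{ has }i\}}\gamma$, identify the maximizing input neurons, and then read off the limit firing rates to verify the margin. Since there are only two categories, for each feature $i\in I$ the discrepancy collapses to $d^{i\to j_1} = \brac{\gamma^i_o}_{o\in j_1} - \brac{\gamma^i_o}_{o\in j_2} = \gamma(p^i_{j_1} - p^i_{j_2})$, where $p^i_{j}$ denotes the proportion of learning objects of category $j$ carrying feature $i$; the expression for $d^{i\to j_2}$ is symmetric. For the discriminating features this is immediate: every object of $j_1$ carries $i_1$ and none carries $i_2$, so $p^{i_1}_{j_1}=1$, $p^{i_1}_{j_2}=0$, giving $d^{i_1\to j_1}=\gamma$, while $d^{i_2\to j_1}=-\gamma$.

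The main step, and the only one that requires care, is to bound $d^{i\to j_1}$ for the six non-discriminating features and show it stays strictly below $\gamma$, so that $i_1$ is the \emph{unique} maximizer. Here I would invoke the combinatorial structure of the $16$ rockets: among the eight objects sharing the discriminating feature $i_1$, the remaining three binary characteristics vary freely, so for any fixed non-discriminating feature $i$ exactly four of these eight carry $i$ and four do not. Since $\Obj_l$ is obtained by selecting only five of these eight, at most four of the five selected objects of $j_1$ can carry feature $i$, whence $p^i_{j_1}\leq 4/5<1$ and therefore $d^{i\to j_1}\leq \gamma\, p^i_{j_1}\leq 4\gamma/5<\gamma$. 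This bound holds regardless of which five objects are drawn during the random selection, so $I^{j_1}=\arg\max_{i\in I}d^{i\to j_1}=\{i_1\}$, and the identical argument yields $I^{j_2}=\{i_2\}$. I expect this counting argument to be the crux, since the claim $I^{j_1}=\{i_1\}$ is not automatic — it is precisely the impossibility of a spurious feature being perfectly aligned with the category (a consequence of selecting five from eight) that forces the discrepancy gap.

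With the maximizing sets identified, the limit weights of Proposition~\ref{prop conv weights} collapse to singletons: $w^{i\to j_1}_\infty = \mathbb{1}_{\{i=i_1\}}$ and $w^{i\to j_2}_\infty = \mathbb{1}_{\{i=i_2\}}$, since $\abs{I^{j_1}}=\abs{I^{j_2}}=1$. Consequently $\Bar{\lambda}^{j_1}_{\infty,o} = \gamma^{i_1}_o = \mathbb{1}_{\{o\text{ has }i_1\}}\gamma$ and $\Bar{\lambda}^{j_2}_{\infty,o} = \mathbb{1}_{\{o\text{ has }i_2\}}\gamma$. To close the argument I would check Assumption~\ref{assump limit fam}: for an object nature $o\in j_1$ (which carries $i_1$ but not $i_2$) one has $\Bar{\lambda}^{j_1}_{\infty,o}=\gamma$ and $\Bar{\lambda}^{j_2}_{\infty,o}=0$, so the gap is exactly $\gamma$, and the symmetric computation handles $o\in j_2$. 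Hence the margin condition $\Bar{\lambda}^{j^*}_{\infty,o}>\Bar{\lambda}^{j}_{\infty,o}+\Dlamb$ holds precisely when $\Dlamb<\gamma$, giving the claimed admissible range $\Dlamb\in(0,\gamma)$.
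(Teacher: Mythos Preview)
Your proposal is correct and follows essentially the same approach as the paper's proof: both exploit the combinatorial fact that among the eight rockets carrying the discriminating feature $i_1$, exactly four carry any given non-discriminating feature $i$, so selecting five of them forces at least one without $i$, yielding $\brac{\gamma^i_o}_{o\in j_1}<\gamma$ and hence $d^{i\to j_1}<\gamma=d^{i_1\to j_1}$. Your bound $p^i_{j_1}\leq 4/5$ is slightly more explicit than the paper's bare strict inequality, but the argument and the subsequent computation of the limit weights and margin are identical.
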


\subsection{Analysis}

To analyze the learning behavior of participants, we infer two parameters of our model for each participant: the learning rate $\eta$ used to update the synaptic weights as described in \eqref{eq update w} and indicates the participant's learning speed, and the threshold $\thr$, which reflects their response speed. For this estimation, we employ the Simulation-Based Inference method \citep{papamakarios2016fast} and we use data from both the learning phase and the transfer phase. This approach is tailored for models with intractable likelihoods that can nonetheless be simulated. The process involves providing the model as a probabilistic program that is easy to simulate, along with a prior distribution over the parameters to be estimated. A Bayesian neural network is then trained via maximum likelihood on samples generated by the program, enabling the learning of a posterior distribution from which samples can be drawn. To simulate data for the transfer phase, we freeze the weights at their final update, $w^j_{M+1}$, as participants have already completed learning to categorize the objects. Because the lengths of participant data and simulated data did not always match, we extended the transfer phase data by appending the mean answer time of the transfer phase. This ensured that the data had the required length for applying the SBI method. We denote by $\hat{\eta}$ and $\hat{\thr}$ the median value of samples of the posterior given by the SBI method for each participant. The method is illustrated using both simulated and real data in Figures~\ref{fig hist sim data} and~\ref{fig hist real data}, respectively. On simulated data, the method accurately recovers the true parameters.

\begin{figure}
    \centering
    \includegraphics[width=1\linewidth]{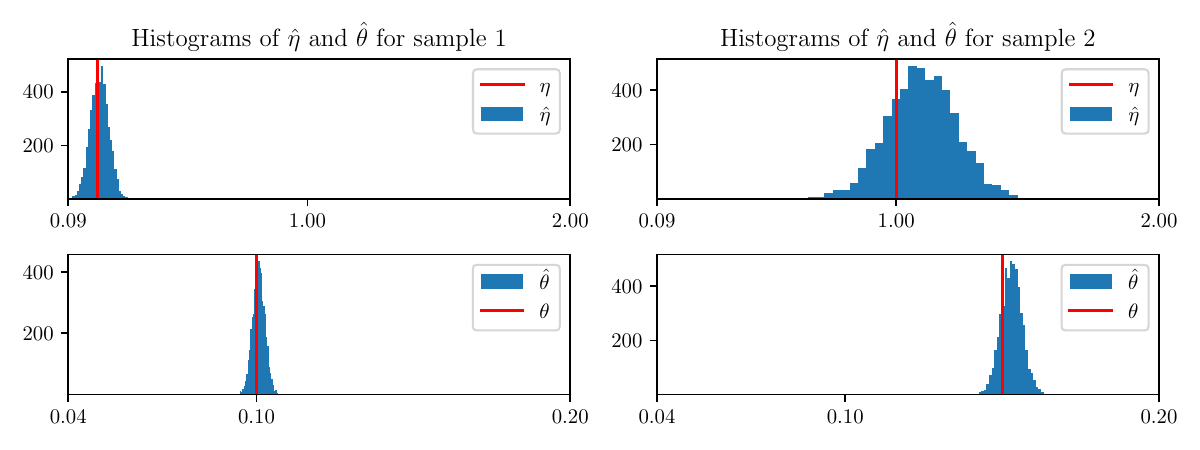}
    \caption{Samples of the posteriors given by the SBI method for two sets of simulated data. The prior used for the parameters tuple $(\eta,\thr)$ is a uniform distribution on $[0.09,2]\times[0.04,0.2]$. In red, the true values of $\eta$ and $\thr$ used to simulate data. In blue, the histogram of the values $\hat{\eta}$ and $\hat{\thr}$ given by the posteriors given by the SBI method.}
    \label{fig hist sim data}
\end{figure}

\begin{figure}
    \centering
    \includegraphics[width=1\linewidth]{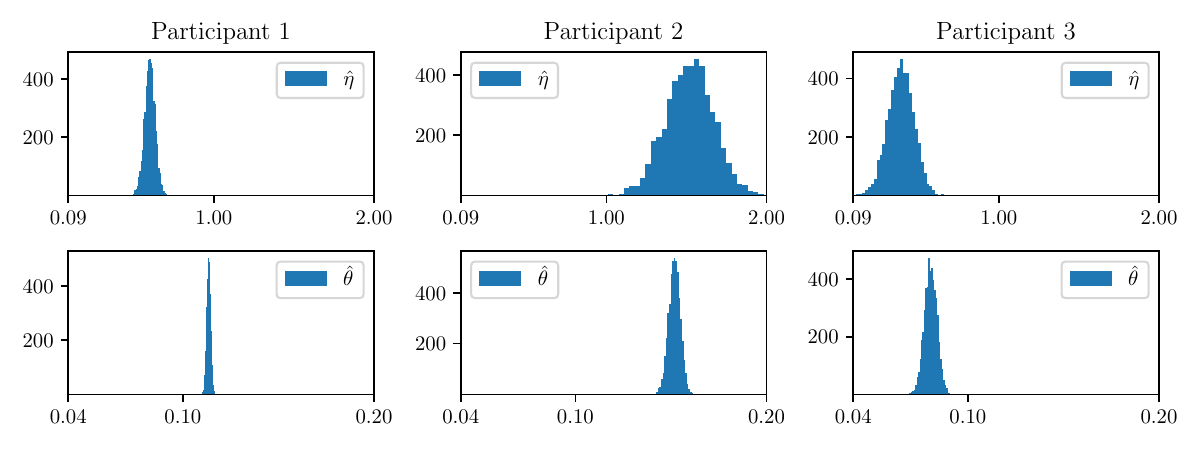}
    \caption{Samples of the posteriors given by the SBI method for three sets of real data. The same priors as the ones of Figure~\ref{fig hist sim data} are used for the SBI method.}
    \label{fig hist real data}
\end{figure}

\subsection{Single participant analysis}

In Figure~\ref{fig cum times}, the cumulative answer times as a function of the number of answers are compared to simulated cumulative answer times for six participants, using parameters estimated by the SBI method. Here, we used data from both the learning and transfer phase. The results indicate that the method predicts answer times accurately.

To see more precisely if one data simulation given by our network is close to real data, we represented the answer times of three participants (on top) and of three sets of simulated data (on the bottom) during the learning phase, using parameters inferred by the SBI method on Figure~\ref{fig eventplot}. Each blue tick represents a correct answer, and each red tick represents a wrong answer. The model again predicts answer times similar to those of the participants, albeit with greater regularity. However, it sometimes seems to make fewer errors than the participants, as in the case of participant $5$. This can be attributed to the model’s reliance on synaptic weight updates: once sufficient weight has been assigned to relevant feature neurons, the probability to make a mistake diminishes significantly. In contrast, humans may continue to make errors even after grasping the underlying rule thanks to other cognitive mechanisms such as distraction. This difference also explains why the model sometimes underestimate the total number of answers during the learning phase. Each error resets the count of consecutive correct answers to zero, requiring the participant to categorize at least 15 new objects to complete the learning phase. Consequently, each mistake significantly elongates the learning phase.

\begin{figure}
    \centering
    \includegraphics[width=1\linewidth]{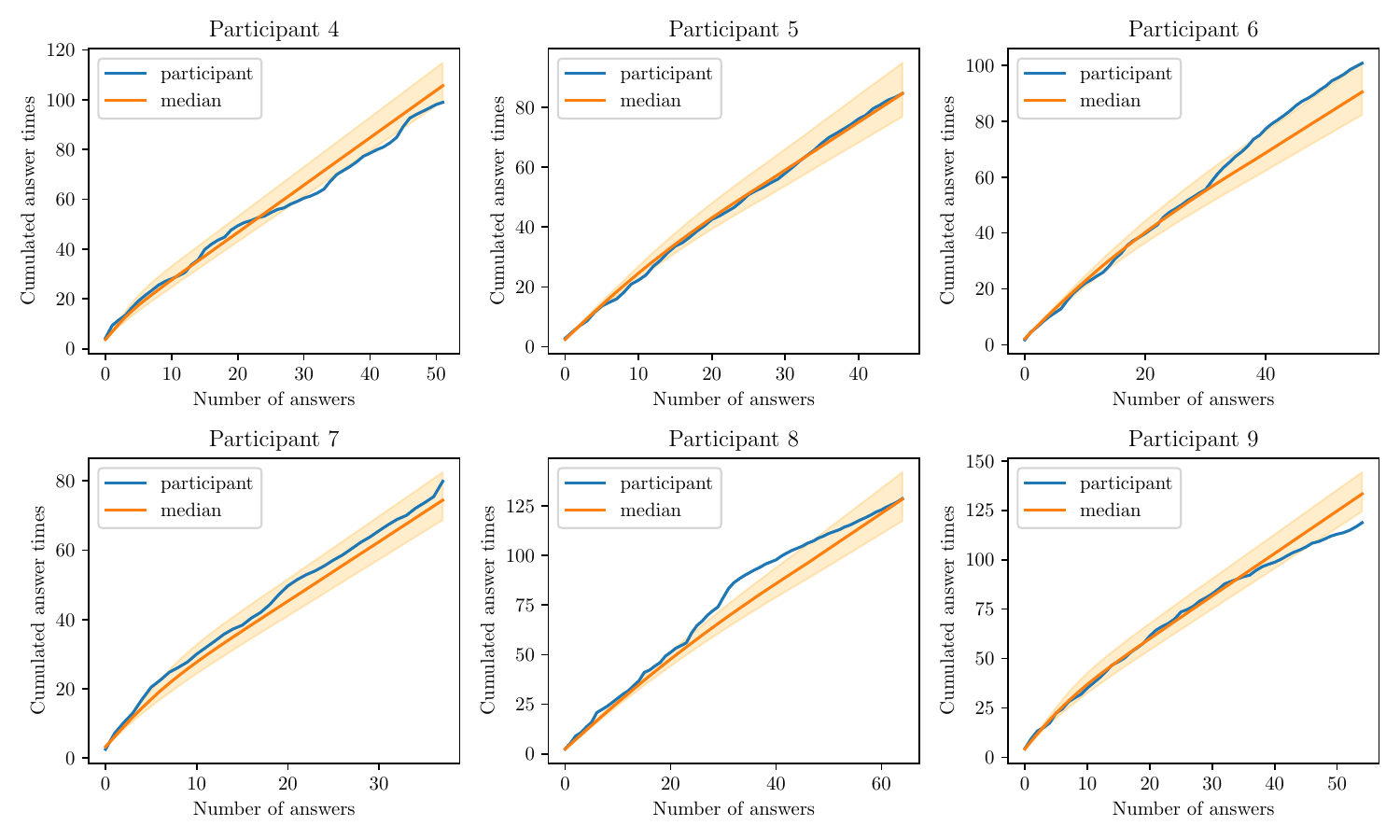}
    \caption{For each participant, the blue curve represents the cumulated answer times in function of the number of presented objects. The orange curves represent the median of the cumulated answer times of $100$ simulations of the model with confidence interval of level $10\%$, with parameters $\hat{\eta}$ and $\hat{\thr}$ being the median value of samples of the posterior given by the SBI method. The same priors as the ones of Figure~\ref{fig hist sim data} are used for the SBI method.}
    \label{fig cum times}
\end{figure}

\begin{figure}
    \centering
    \includegraphics[width=1\linewidth]{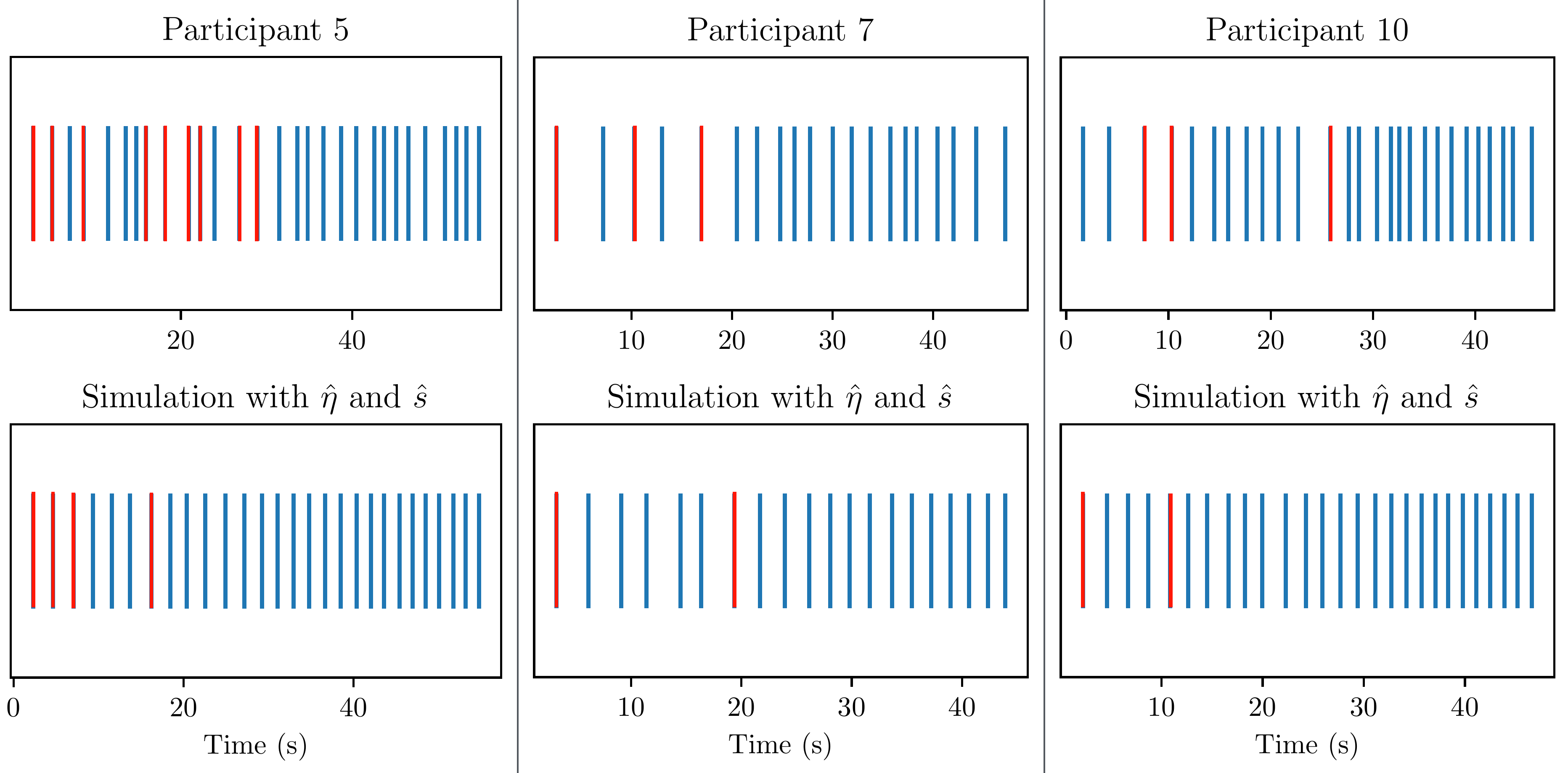}
    \caption{On the top, answer times of three participants during their learning phase. Blue ticks represent correct answers while red ticks represent wrong answers. On the bottom, same plots for simulated data with parameters $\hat{\eta}$ and $\hat{\thr}$ being the median value of samples of the posterior given by the SBI method for each participant. The same priors as the ones of Figure~\ref{fig hist sim data} are used for the SBI method.}
    \label{fig eventplot}
\end{figure}

\subsection{Group analysis} \label{sec group analysis}

Figure~\ref{fig hist groups} presents histograms of the parameters inferred by the SBI method, grouped by participants' current educational level (middle school or university) and for all participants combined. The learning rate $\eta$ appears to take either low or high values, with middle school students exhibiting more extreme values compared to university students. In contrast, the histograms for the parameter $\thr$ show little variation across the different groups.

\begin{figure}
    \centering
    \includegraphics[width=1\linewidth]{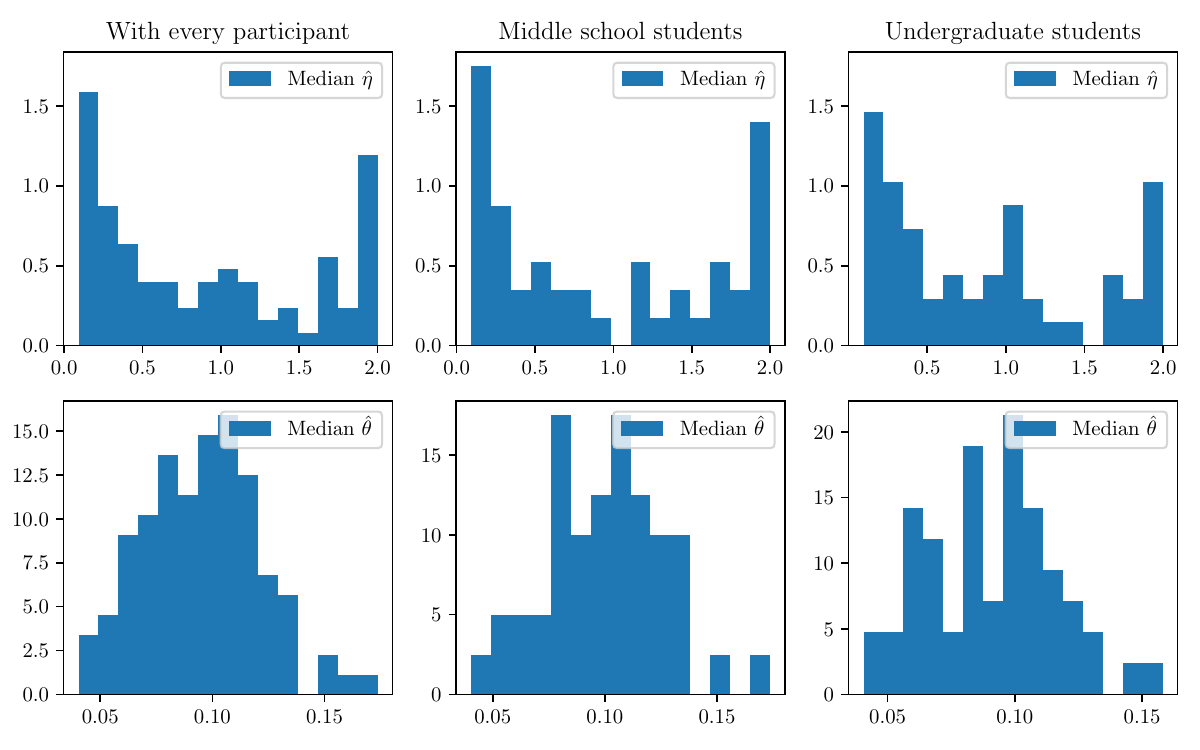}
    \caption{Histogram of median values of the parameters $\hat{\eta}$ and $\hat{\theta}$ given by the SBI method with on the left, every participant, on the middle, middle school students and on the right, undergraduate students. The parameters used for the SBI method are the same as in Figure~\ref{fig hist sim data}.}
    \label{fig hist groups}
\end{figure}

 To conduct a more detailed analysis, we applied $k$-means clustering to the two-dimensional data consisted on the parameters $\hat{\eta}$ and $\hat{\thr}$ estimated by the SBI method for each participant. The optimal number of clusters was determined using the silhouette method (details are provided in Appendix~\ref{sec append cluster}), which suggested two reasonable options: $2$ or $3$ clusters. Figure~\ref{fig cluster eta s} illustrates the results of the $k$-means algorithm applied to the estimated parameters $\hat{\eta}$ and $\hat{s}$ for both $2$ and $3$ clusters. 
\newline

\textbf{Clustering with 2 clusters.} Cluster $0$ is characterized by a low learning rate and threshold. It corresponds to participants who learn at an average or slower pace and require less evidence to make a categorization decision. This cluster includes $51\%$ of middle school students and $63\%$ of undergraduate students. Cluster $1$, on the other hand, has a high learning rate and a slightly above-average threshold. It represents fast learners who require a higher accumulation of evidence before categorizing and comprises $49\%$ of middle school students and $37\%$ of undergraduate students. 

A chi-square test was conducted to assess whether cluster membership is independent of the participants’ current educational level. The test yielded a p-value of $0.33$, which is insufficient to reject the null hypothesis, indicating no significant evidence of dependence.
\newline

\textbf{Clustering with $3$ clusters.} Clusters $0$ and $1$ retain approximately the same centers as in the $2$-cluster analysis. Cluster $0$ includes $18\%$ of middle school students and $32\%$ of undergraduate students, while cluster $1$ comprises $47\%$ of middle school students and $27\%$ of undergraduate students. Cluster $2$, primarily derived from points in the former cluster $0$ along with a few from cluster $1$, has a center characterized by a low learning rate and an average threshold. This cluster represents participants who learn at an average or slower pace and require a moderate accumulation of evidence before categorizing, encompassing $35\%$ of middle school students and $41\%$ of undergraduate students.

A chi-square test was conducted to evaluate whether cluster membership is independent of current educational level. The test resulted in a p-value of $0.11$, which, while not statistically significant, provides some indication that cluster membership might depend on the current level of study. Specifically, slow-learning participants requiring less evidence to categorize appear more frequently among undergraduate students, while fast-learning participants requiring higher evidence accumulation tend to be more prevalent among middle school students.
\begin{figure}
    \centering
    \includegraphics[width=1\linewidth]{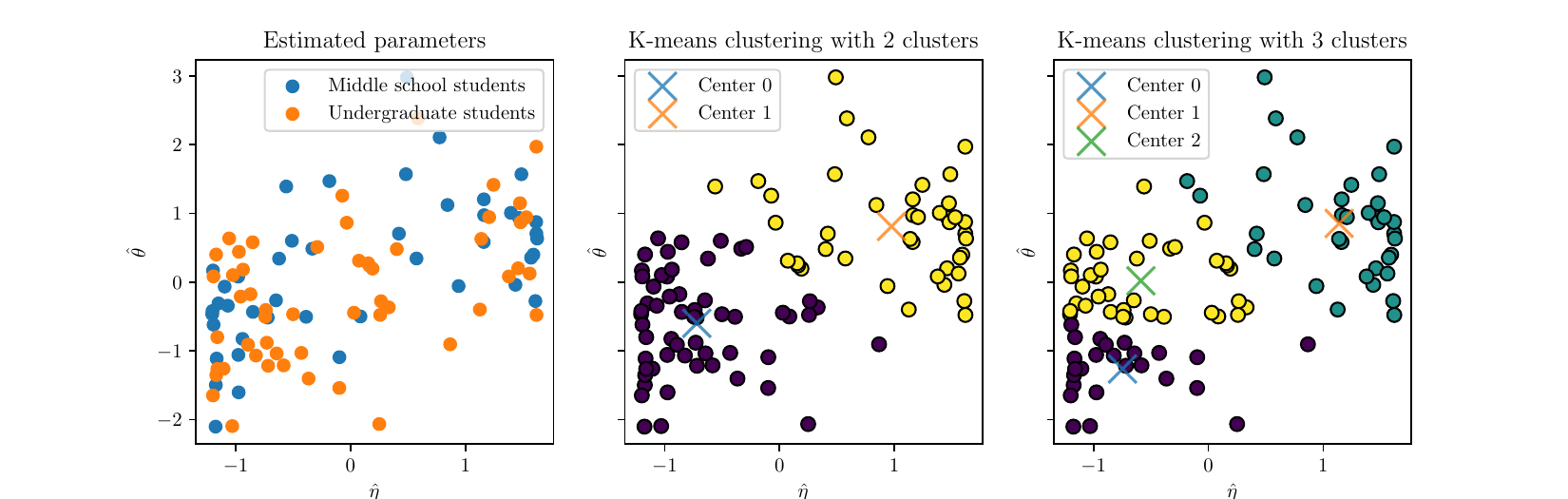}
      \begin{tabular}{| l | c | c | } \hline
    Two clusters & Middle school students & Undergraduate students  \\ \hline
   Percentage in cluster $0$ & $51 \%$ & $63\%$ \\
   Percentage in cluster $1$ & $49\%$ & $37\%$ \\ \hline
 \end{tabular}

\vspace{2 mm}

    \begin{tabular}{| l | c | c | } \hline
    Three clusters & Middle school students & Undergraduate students  \\ \hline
   Percentage in cluster $0$ & $18 \%$ & $32\%$ \\
   Percentage in cluster $1$ & $47\%$ & $27\%$ \\
   Percentage in cluster $2$ & $35\%$ & $41\%$ \\ \hline
 \end{tabular}
    \caption{On the left, $\hat{s}$ as a function $\hat{\eta}$ for every participant, where $\hat{\thr}$ and $\hat{\eta}$ are the median values given by the SBI method. On the right, $3$-means clustering of the same data. On the bottom, students distribution by cluster. The parameters used for the SBI method are the same as in Figure~\ref{fig hist sim data}.}
    \label{fig cluster eta s}
\end{figure}
\newline

Finally, we examined the relationship between the threshold estimated by the SBI method and each participant's average answer time during the learning phase, as shown in Figure~\ref{fig time s}. The results indicate that the average answer time appears to increase linearly with $\hat{\thr}$, which is consistent with expectations.
\begin{figure}
    \centering
    \includegraphics[width=0.5\linewidth]{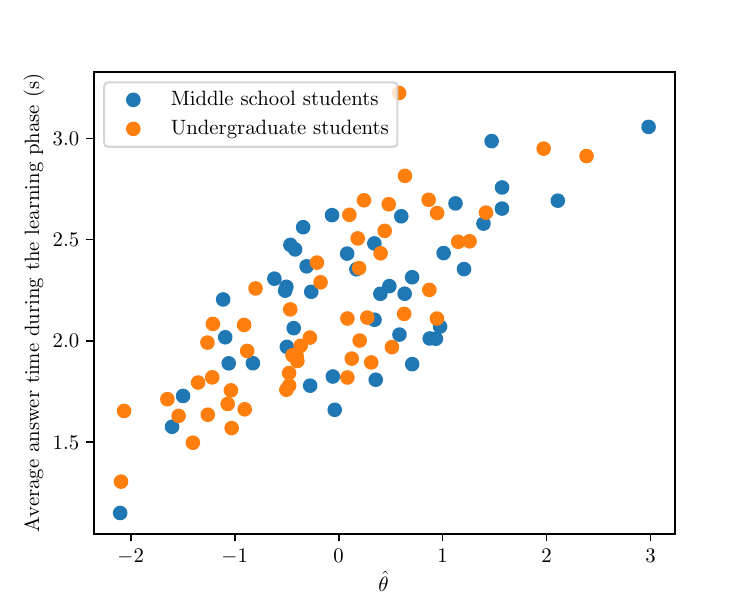}
    \caption{Average answer time of each participant during the learning phase (including the $15$ objects to categorize perfectly) as a function of $\hat{\thr}$. The parameters used for the SBI method are the same as in Figure~\ref{fig hist sim data}.}
    \label{fig time s}
\end{figure}

\section{Conclusion}

This paper outlines key elements that establish a connection between drift diffusion models (DDMs), commonly used in cognitive science to simulate decision-making tasks, and biologically plausible networks of spiking neurons. First, we conducted a mathematical analysis of both the drift diffusion model and the Poisson counter model, specifying explicit conditions under which these models yield accurate decisions. This analysis demonstrated the similarity in their behavior. Subsequently, we established a coupling result between the processes of the two models and derived a bound on their hitting times, confirming that both models produce comparable reaction times and decisions. These findings represent an initial step towards demonstrating that DDMs can be derived from spiking neuron-based models.

To go further and in order to establish that DDMs can be derived from biologically realistic networks of spiking neurons which can learn to make correct decisions, we proposed a novel spiking neural network model tailored for decision-making task which is provably close to the drift diffusion model. Unlike the drift diffusion and Poisson counter models, the model does not assume any prior knowledge of categories; its accuracy emerges from a learning phase. Additionally, the model is more biologically relevant than the Poisson counter model, as it incorporates interacting spiking neurons that learn to represent complex concepts, rather than individual neurons directly encoding decisions. We conducted an asymptotic analysis of the network's behavior and identified conditions under which it achieves accurate decisions. Moreover, we established a coupling result between our model and the DDM, providing strong evidence for the biological plausibility of DDMs and validating their widespread use in cognitive science. In the literature, coupling between Poisson and Brownian processes are well-known results. However, coupling between Hawkes and Brownian processes are less known. Recent results have been obtained in one dimension \citep{besanccon2024diffusive} or for mean field limit of systems of interacting neurons \citep{erny2023strong}. However, to the best of our knowledge, this is the first time such a result has been established for a network with a `perceptron'-like structure. 

Additionally, we developed a cognitive experiment to evaluate our model's predictions, enabling us to derive meaningful insights into the participant's cognitive behavior and demonstrating the practical applicability of our model.

This study represents an important advancement in incorporating biologically relevant neural mechanisms into cognitive models, allowing us to better understand how neural dynamics lead to certain behaviors. There are several promising avenues for further improving the model. We showed that the model tends to produce more regular reaction times and make fewer errors compared to humans, highlighting the potential value of exploring ways to make it more realistic. Additionally, applying our model to more complex experiments could provide valuable insights into intricate behaviors. This seems feasible, as we showed in \cite{jaffard2024chani} that introducing hidden layers enables the modeling of more complex concepts. Finally, an ambitious direction for future research would be to extend this work to other spiking neural networks, thereby demonstrating that DDMs can be approximated by alternative network types commonly used in neuroscience.

\backmatter

\subsection*{Supplementary information}
The code and data used to produce the numerical results can be found at the following link \href{https://github.com/SophieJaffard/MEL}{https://github.com/SophieJaffard/MEL}.

\subsection*{Acknowledgements}
We thank the 3iA institute, and particularly Stéphane Petiot for implementing our experiment. We thank Valérie Lemesle, 
Sébastien Kerner, Noé Szymczak and Maëva Lossouarn for presenting the experiment to their students, and we also thank all 
the students who participated. This research was supported by the French government, through CNRS (eXplAIn team), the UCAJedi 
and 3iA Côte d’Azur Investissements d’Avenir managed by the National Research Agency (ANR-15 IDEX-01 and ANR-19-P3IA-0002), directly by the ANR project ChaMaNe (ANR-19-CE40-0024-02), and finally by the interdisciplinary Institute for Modeling in Neuroscience and Cognition (NeuroMod).

\subsection*{Statements and declarations}
The authors have no competing interests to declare that are relevant to the content of this article.

\begin{appendices} %

\section{Table of notations} \label{sec notations}

For parameters $p_1,\dots, p_k$, we denote by $\Box_{p_1,\dots,p_k}$ any positive constant depending only on parameters $p_1,\dots, p_k$. The value of $\Box_{p_1,\dots,p_k}$ can change from line to line and it is not necessarily equal on both sides of an inequality.

For $a,b\in \R$, we denote $a\vee b \coloneqq \max(a,b)$ and $a\wedge b \coloneqq \min(a,b)$. Also when quantities have several indices, the dropping of one index corresponds to the sequence on this index. For instance, $\gamma = (\gamma^j_o)_{j\in J, o\in \Obj}$ whereas $\gamma_o = (\gamma^j_o)_{j\in J}$ and $\gamma^j = (\gamma^j_o)_{o\in \Obj}$. If two sets of indices are available, typically $I$ and $J$, to avoid confusion, we use the set instead of the index. For instance, $\gamma^I=(\gamma^i_o)_{i\in I, o\in \Obj}$. Also $ \| \|_{\infty}$ is the infinite norm.

For a set $E$ and a quantity $x_e \in \R$ indexed by $e\in E$, we denote by $\brac{x_e}_{e\in E} \coloneqq \frac{1}{\abs{E}}\sum_{e\in E} x_e$ its empirical mean.

\begin{longtable}{|l|l|} \caption{Table of notations} \label{tab_notations} \\
  \hline
   Notation & Description (page of introduction) \\
   \hline
      $\Obj$ & set of object natures (page~\pageref{def:obj})\\
      \hline
      $o$ & nature of an object (page~\pageref{def:petito})\\
     \hline
      $J$ & set of categories and output neurons (page~\pageref{def:categoryj})\\
    \hline
    $j$ & index of an output neuron or a category (page~\pageref{def:categoryj})\\
    \hline
      $T$ & maximal duration of an object presentation (page~\pageref{def:limitT})\\
    \hline
     $m$ & index of an object (page~\pageref{def:m})\\
     \hline
     $\thr$ & threshold (page~\pageref{def:theta})\\
     \hline
     $W_m$ & $\abs{J}$-dimensional drifted Brownian motion $W_m = (\W^j_{m})_{j\in J}$ 
     (page~\pageref{def:wm})\\
     \hline
     $\W^j_{m,t}$ & evidence accumulation in favor of category $j$ at time $t$ of object $m$ \\ & for the DDM (page~\pageref{eq def w})\\
     \hline
     $\mu^j_o$ & drift of the process $\W^j_{m}$ when the nature of object $m$ is $o$ 
     (page~\pageref{eq def w})\\
     \hline
     $\tau^{Z^j}$ & hitting time of process $Z^j_{m}$ for $Z\in \{W,\Pi,N\}$ 
     (pages~\pageref{tau}, \pageref{tauPois} and \pageref{tauHaw})\\
     \hline
     $\hatj$ & index of the first process to reach threshold $\thr$ %
     (pages~\pageref{choice}, \pageref{choicePois} and \pageref{choiceHaw})
     \\
     \hline
     $\tau^Z$ & reaction time of the model for $Z\in \{W,\Pi,N\}$ 
     (pages~\pageref{def:reactimeW}, \pageref{def:reactimepi} and \pageref{def:reactimeHawkes}) \\
     \hline
      $\Pi_m$ & $\abs{J}$-dimensional Poisson process $\Pi_m = (\Pi^j_{m})_{j\in J}$ 
      (page~\pageref{def:pim})
      \\
     \hline
     $\Pi^j_{m,t}$ & 
     spike count of neuron $j$ on $[0,t]$
     for object $m$ in the Poisson  \\ 
      & counter model, \ie  (page~\pageref{def:pim}) \\
     \hline
     $\gamma^j_o$ & intensity of output neuron $j$ when the presented object \\
     & has nature $o$ for the Poisson counter model (page~\pageref{def:pim}) \\
     \hline 
    $n$ & number of neurons coding for each category in the framework of  \\ & the coupling results (page~\pageref{def:notan})\\
    \hline
     $M$ & number of objects presented to the network during \\
     & the learning phase  (page~\pageref{def:notaM})\\
    \hline
    $I$ & set of features and input neurons (page~\pageref{def:featureI})\\
    \hline
    $i$ & index of an input neuron or a feature (page~\pageref{def:featureI})\\
    \hline
    $\gamma^i_o$ & intensity of input neuron $i$ when presented with an object having  \\
    & nature $o$ for our model (page~\pageref{def:gammai0})\\
    \hline
\(I^j\) & set of input which are the most sensible to category $j$ (page~\pageref{def:set of input most sensible to j})\\
    \hline
    \(\widehat{\gamma}^i_o\) & empirical firing rate of input neuron $i$ on $[0,T_{\text{min}}]$(page~\pageref{eq gamma hat})\\
    \hline
    $T_{\text{min}}$ & minimum duration of the presentation of an object  (page~\pageref{def:tmin})\\
    \hline
     $N_m$ & $\abs{J}$-dimensional Hawkes process $N_m = (N^j_{m})_{j\in J}$ (page~\pageref{def:nj})\\
     \hline
    $N^j_{m,t}$ &  evidence accumulation in favor of category $j$ at time $t$ of object $m$ 
    \\
    &  \ie spike count of neuron $j$ until time $t$ (page~\pageref{def:Njmt})\\
    \hline
    $\lambda^j_{m,t}$ & conditional intensity of output neuron $j$ at time $t$ of object $m$ for  \\ 
    & our model (page~\pageref{def:g})\\
    \hline
    $w^{i\to j}_m$ & synaptic weight between input neuron $i$ \\& and output neuron $j$ (page~\pageref{eq update w})\\
    \hline
    $g$ & self-exciting function of the Hawkes process (page~\pageref{def:g})\\
    \hline
    \(h_t(u)\) & \(h_t(u) = \int_u^t g(s-u) ds\) (page~\pageref{def:htu})\\
    \hline
    \(\mathbb{G}(t)\) & \(\mathbb{G}(t) = \int_0^t \int_0^s g(s-u)du ds\) (page~\pageref{def:mathbbG})\\
    \hline
    $\Lambda^j_{m,t}$ & compensator of the process $(N^j_{m,s})_{s\geq 0}$ at time $t$
    (page~\pageref{def:lambdajm})\\
    \hline
    $g^{i\to j}_m$& gain of input neuron $i$ w.r.t. output neuron $j$ after the  \\ & 
    presentation of the $m^{th}$ object (page~\pageref{def:gijm})\\
    \hline
    $G^{i\to j}_m$& cumulated gain of input neuron $i$ w.r.t. output neuron $j$ after the \\ & presentation of the $m^{th}$ object $\sum_{m'=1}^m g^{i\to j}_{m'}$ (page~\pageref{def:Gitojm})\\
    \hline
    $\eta=\eta_0/\sqrt{M}$ & learning rate of the learning rule given by EWA (\eqref{eq:defeta} page~\pageref{eq:defeta})\\
    \hline
    $\mathcal{F}_m$ & $\sigma-$algebra generated by every event which happened until the  \\ & end of the presentation of the $m^{th}$ object 
    (page~\pageref{def:mathcalFm})\\
    \hline
    $d^{i\to j}$ & feature discrepancy of input neuron $i$ w.r.t. output neuron $j$\\
&    (page~\pageref{def:ditoj})\\
    \hline
\(I^j\) & \(I^j \coloneqq \arg\max_{i\in I} d^{i\to j}\) (page~\pageref{def:ditoj})
\\
\hline
    \(\delta^j\) & \(\max_{i\in I} d^{i\to j} - \max_{i\in I\setminus I^j} d^{i\to j}\)(page~\pageref{def:ditoj})\\
    \hline
    $w^{i\to j}_\infty$ & limit synaptic weight between input neuron $i$ and output\\
    & neuron $j$ 
    (page~\pageref{def:witojinfty})\\
    \hline
     $\Dmu$ & discrepancy between the drift of the true category process and the \\
      & other processes for the DDM (page~\pageref{assump drift})\\
    \hline
    $\Dgam$ & discrepancy between the firing rate of the true category process  \\
    & and the other processes for the Poisson counter model (page~\pageref{assump intensity poisson}) \\
    \hline
    $\Dlamb$ & discrepancy between the firing rate of the true category process  \\ 
    & and the other processes for our model (page~\pageref{assump limit fam})\\
    \hline
   \end{longtable}

\section{Details on the clustering analysis} \label{sec append cluster}

The number of clusters used to perform the $k-$means algorithm in Section~\ref{sec group analysis} was determined using the silhouette method implemented in scikit-learn. The silhouette plot, available in Figure~\ref{fig silhouette}, illustrates how well each point aligns with its assigned cluster compared to neighboring clusters. Silhouette coefficients close to 1 indicate that a sample is well-separated from neighboring clusters, while a coefficient of 0 suggests that the sample lies near the boundary between two clusters. The dotted red line represents the average silhouette score across all samples for a given number of clusters. In order to choose a meaningful number of clusters, two criteria should be respected: each clusters should have points above the average silhouette score, and the several clusters should have comparable sizes. Therefore, $k=2$ and $k=3$ are sensible choices.

\begin{figure}[h!]
    \centering
    \includegraphics[width=1\linewidth]{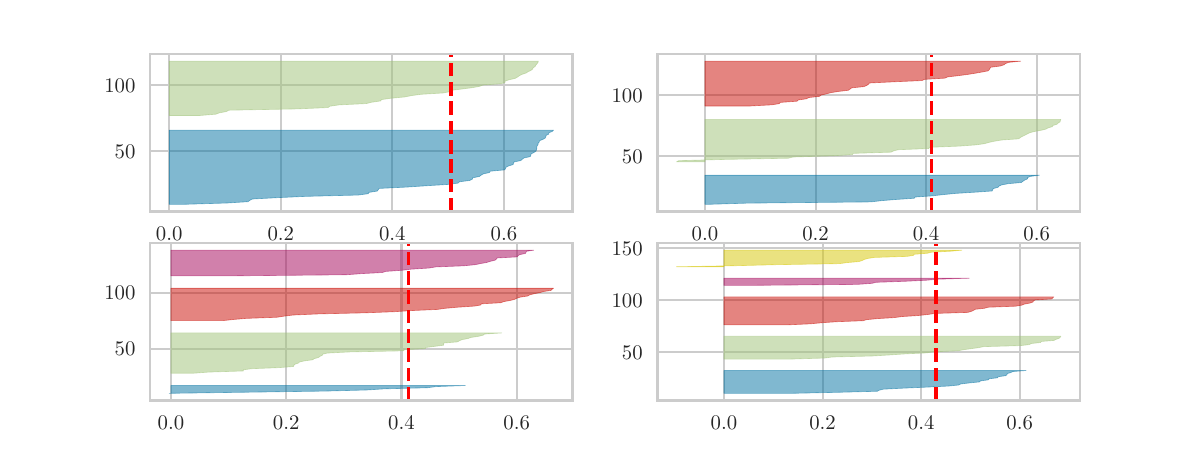}
    \caption{Silhouette analysis of the data with $2$, $3$, $4$ and $5$ clusters.}
    \label{fig silhouette}
\end{figure}

\section{Proofs} \label{sec proofs}

\subsection{Preliminary lemmas}

\begin{lemma} \label{lemma control z - mu}
    Let $(\W_t)_{t\geq 0}$ be a Brownian motion with drift $\mu$ and scaling $\sigma$. Let $T>0, \beta >0$. Then
    \[
    \mathbb{P}\left(
    \sup_{0\leq t \leq T} \abs{\W_t - \mu t} < \sqrt{2T\sigma^2\log\left(\frac{2}{\beta}\right)}
    \right)
     \geq 1 - \beta.
    \]
\end{lemma}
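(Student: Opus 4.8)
The plan is to standardize, reduce the problem to the running maximum of a standard Brownian motion, and then combine a maximal inequality with a Gaussian tail estimate. Since $(\W_t)_{t\ge 0}$ is a Brownian motion with drift $\mu$ and scaling $\sigma$, I would first write $\W_t = \mu t + \sigma B_t$ for a standard Brownian motion $(B_t)_{t\ge 0}$, so that $\W_t - \mu t = \sigma B_t$. Assuming $\sigma>0$ (the case $\sigma=0$ being trivial), the event under study becomes
\[
\left\{\sup_{0\le t\le T}\abs{\W_t-\mu t} < \sqrt{2T\sigma^2\log(2/\beta)}\right\} = \left\{\sup_{0\le t\le T}\abs{B_t} < \sqrt{2T\log(2/\beta)}\right\},
\]
so it suffices to show that $\sup_{0\le t\le T}\abs{B_t}$ exceeds $x\coloneqq\sqrt{2T\log(2/\beta)}$ with probability at most $\beta$.

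Next I would control the two-sided running maximum. Splitting into the supremum and the infimum and using the symmetry $B \stackrel{d}{=} -B$ gives
\[
\P\Big(\sup_{0\le t\le T}\abs{B_t}\ge x\Big) \le \P\Big(\sup_{0\le t\le T}B_t\ge x\Big) + \P\Big(\inf_{0\le t\le T}B_t\le -x\Big) = 2\,\P\Big(\sup_{0\le t\le T}B_t\ge x\Big).
\]
To bound the one-sided running maximum I would apply Doob's maximal inequality to the exponential martingale $\bigl(e^{\lambda B_t - \lambda^2 t/2}\bigr)_{t\ge 0}$, which yields $\P\bigl(\sup_{0\le t\le T}B_t\ge x\bigr)\le e^{-\lambda x+\lambda^2 T/2}$ for every $\lambda>0$; optimizing at $\lambda = x/T$ gives $\P\bigl(\sup_{0\le t\le T}B_t\ge x\bigr)\le e^{-x^2/(2T)}$. (Equivalently, one may use the reflection principle $\P\bigl(\sup_{0\le t\le T}B_t\ge x\bigr)=2\P(B_T\ge x)$ together with the sharp Gaussian tail bound $\P(B_T\ge x)\le\tfrac12 e^{-x^2/(2T)}$.) Combining the two displays yields $\P\bigl(\sup_{0\le t\le T}\abs{B_t}\ge x\bigr)\le 2e^{-x^2/(2T)}$.

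Finally I would substitute $x=\sqrt{2T\log(2/\beta)}$, so that $x^2/(2T)=\log(2/\beta)$ and hence $e^{-x^2/(2T)}=\beta/2$, giving $\P\bigl(\sup_{0\le t\le T}\abs{B_t}\ge x\bigr)\le \beta$. Passing to the complementary event and unwinding the standardization then delivers the claim. There is no genuine obstacle here; the only point requiring care is the bookkeeping of constants. One must obtain $e^{-x^2/(2T)}$ (rather than $2e^{-x^2/(2T)}$) for the \emph{one-sided} running maximum — which is exactly what the exponential-martingale bound, or the reflection principle paired with the refined $\tfrac12 e^{-x^2/(2T)}$ Gaussian tail, provides — since a plain Chernoff bound $\P(B_T\ge x)\le e^{-x^2/(2T)}$ applied after the two-sided and reflection steps would only give the weaker conclusion with $2\beta$ on the right-hand side.
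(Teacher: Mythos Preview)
Your proof is correct and follows essentially the same route as the paper: standardize to a standard Brownian motion, invoke a maximal inequality to obtain $\P\bigl(\sup_{0\le t\le T}|B_t|\ge z\bigr)\le 2e^{-z^2/(2T)}$, and then solve for $z$. The paper simply cites this bound directly as Doob's inequality (Revuz--Yor, Proposition~1.5, p.~53), whereas you spell out the two-sided split and the exponential-martingale optimization, but the argument is the same.
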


\begin{proof}
    The process $\frac{\W_t - \mu t}{\sigma}$ is a standard Brownian motion. Let $z>0$. According to Doob's inequality (see for instance Proposition $1.5$ p.53 of \cite{revuz2013continuous}), we have with probability more than $1-2e^{-\frac{z^2}{2T}}$ that
    \[
    \sup_{0\leq t \leq T} \frac{\abs{\W_t - \mu t}}{\sigma} < z.
    \]
    Let us choose $z$ such that $2e^{-\frac{z^2}{2T}} = \beta$, \ie $z = \sqrt{2T \log\left(\frac{2}{\beta} \right)}$. Then with probability more than $1-\beta$ we have
    \[
     \sup_{0\leq t \leq T} \abs{\W_t - \mu t} < \sqrt{2T\sigma^2\log\left(\frac{2}{\beta}\right)}.
    \]
\end{proof}

The following Lemma is an adaptation of Corollary $5.2$ p. 357 of \cite{ethier2009markov}.

\begin{lemma}
\label{original_coupling}
There exists absolute constants $a,b,d>0$ and a measurable function $G: S(\mathbb{R}_+)\times [0,1]\to S(\mathbb{R}_+)$, with $S(\mathbb{R}_+)$ the Skorohod space such that if 
 $(\Pi_t)_{t\geq 0}$ is an homogeneous counting Poisson process of rate 1 and $V$ is an independent uniform random variable on $[0,1]$, then 
 \begin{itemize}
\item  $(W_t)_{t\geq 0}= G((\Pi_t)_{t\geq 0},V)$ is a Brownian motion starting in 0, with drift 1 and variance 1 (i.e. $(B_t)_{t\geq 0}$ such that $B_t=W_t-t$ is a standard centered Brownian motion), and 
\item for all $x>0$
\begin{equation*}
\P\left(\sup_{t\geq 0} \frac{|\Pi_t-W_t|}{\log(t \vee 1+1)} \geq a+x\right)\leq b e^{-d x}.
\end{equation*}
\end{itemize}
\end{lemma}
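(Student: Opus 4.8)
The plan is to obtain the statement as a repackaging of Corollary~5.2 of \cite{ethier2009markov}. That corollary already furnishes, on a common probability space, a coupling of a rate-$1$ Poisson process $(\Pi_t)_{t\geq 0}$ and a Brownian motion $(W_t)_{t\geq 0}$ with drift $1$ and variance $1$ for which the normalized discrepancy $\sup_{t\geq 0}|\Pi_t - W_t|/\log(t\vee 1 + 1)$ carries an exponential tail of the form $\mathbb{P}(\,\cdot\geq a+x)\leq b\,e^{-dx}$. Hence both conclusions of the lemma---that $B_t := W_t - t$ is a standard centered Brownian motion, and that the tail bound holds---are inherited directly from the joint law produced by \cite{ethier2009markov}, possibly after a deterministic time/space rescaling to normalize the drift and variance to $1$ and a dyadic-block union bound (over blocks $[2^k,2^{k+1}]$) to pass from finite-horizon estimates to the normalized bound over all of $\mathbb{R}_+$; the geometric growth of the blocks is what keeps the series summable and preserves the exponential tail. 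The genuinely new content to establish is only that this coupling can be realized as a \emph{measurable function} $G$ of the Poisson path together with a single independent uniform variable $V$ on $[0,1]$.

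For this last point I would invoke a standard randomization (``noise outsourcing'') argument. Write $\nu$ for the joint law of $((\Pi_t)_{t\geq 0},(W_t)_{t\geq 0})$ on $S(\mathbb{R}_+)\times S(\mathbb{R}_+)$ given by the coupling above, where $S(\mathbb{R}_+)$ denotes the Skorohod space with its $J_1$ topology, under which it is Polish. Since both factors are Polish, hence standard Borel, there exists a regular conditional distribution $\kappa(\omega,\cdot)$ of the second coordinate given the first, so that $\nu(d\omega,d\omega') = \mathbb{P}_{\Pi}(d\omega)\,\kappa(\omega,d\omega')$ with $\mathbb{P}_{\Pi}$ the law of the rate-$1$ Poisson process. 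Because a single uniform variable on $[0,1]$ suffices to simulate any kernel into a standard Borel space (Borel isomorphism theorem), the noise-outsourcing lemma yields a jointly measurable map $G:S(\mathbb{R}_+)\times[0,1]\to S(\mathbb{R}_+)$ such that, whenever $V$ is uniform on $[0,1]$ and independent of $\Pi$, the conditional law of $G(\Pi,V)$ given $\Pi$ equals $\kappa(\Pi,\cdot)$.

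With this $G$ in hand, the pair $(\Pi, G(\Pi,V))$ has law exactly $\nu$. In particular the marginal law of $W := G(\Pi,V)$ is the drift-$1$, variance-$1$ Brownian motion, so $B_t = W_t - t$ is a standard centered Brownian motion as required, and the tail inequality---being a statement about $\nu$ alone---holds with the same absolute constants $a,b,d$. I note that this function form is exactly what Theorem~\ref{th coupling ddm poisson} needs: applying $G$ with mutually independent uniforms $V^j$ to each of the independent Poisson processes $\Pi^{j,n}_m$ yields independent coupled Brownian motions, one per coordinate $j\in J$.

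The main obstacle is the measure-theoretic plumbing rather than any probabilistic estimate: one must confirm that $S(\mathbb{R}_+)$ is Polish so that regular conditional distributions exist and the randomization lemma applies, and that $G$ is genuinely jointly measurable in $(\omega,v)$. The secondary technical point is verifying that the form of Corollary~5.2 in \cite{ethier2009markov}---which may be stated for a general rate or over a finite horizon---can indeed be normalized by rescaling and patched over dyadic time blocks to produce the stated bound over all $t\geq 0$ with the weight $\log(t\vee 1 + 1)$, without degrading the absolute constants.
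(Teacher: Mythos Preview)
Your proposal is correct and follows essentially the same route as the paper: a KMT-type strong approximation (the paper starts from Theorem~5.1 of \cite{ethier2009markov} and rebuilds the continuous-time Poisson--Brownian coupling explicitly, whereas you invoke Corollary~5.2 directly), a dyadic-block union bound to pass from finite-horizon estimates to the supremum over all $t\geq 0$ with weight $\log(t\vee 1+1)$, and a transfer/noise-outsourcing lemma (the paper quotes Lemma~3.12 of the arXiv version of \cite{prodhomme2023strong}) to realize the coupling as a measurable function of $\Pi$ and a single independent uniform. The only difference is the starting point and level of detail---the paper carries out the interpolation from integer times and the dyadic argument in full---not the method.
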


\begin{proof}
We start with the Koml\'os-Major-Tusn\'ady inequality \citep{komlos1975approximation, komlos1976approximation} 
(see Theorem $5.1$ p.356 of \cite{ethier2009markov} for the same formulation as here) applied to the i.i.d. sequence $(\xi_k)_{k\in \N}$ of Poisson r.v. with parameter \(1\).
KMT says that there exists such a sequence as well as a Brownian motion $W_t$ such that $\E(W_1)=1$ and 
$\var(W_1)=1$ and positive constants \(a\), \(b\) and \(d\) such that for all $x>0$ and all $n \in \N, n\geq 1$,
\[
\P(\max_{1\leq k \leq n} |S_k-W_k| > a \log(n) +x) \leq b e^{-dx},
\]
with $S_k=\sum_{i=1}^k \xi_i$.

Remark that we can also write for all $t\geq 0$, $W_t=t+B_t$ where $(B_t)_{t\geq 0}$ is a standard Brownian (meaning a centered Brownian such that $\cov(B_s,B_t)=s\wedge t$).

Conditionally to all the $\xi_k$'s, one can construct a Poisson process $(\Pi_t)_{t\geq 0}$ such that for any $k$, $\Pi_k=S_k$. 
Indeed, for each $k$, conditionally to each $\xi_k$, and independently of anything else, we can pick $\xi_k$ i.i.d. uniform variables 
on $[0,1]$ and order them. They constitute the times of the jumps of a Poisson process $(\tilde{\Pi}^k_t)_{t\in[0,1]}$, such that $\tilde{\Pi}^k_1=\xi_k$. 

Then, we define   
$\Pi_t=S_{\lfloor t\rfloor}+\tilde{\Pi}^{\lfloor t\rfloor}_{t-\lfloor t\rfloor}$  for any $t\geq0$.
The process \((\Pi_t)_{t\geq 0}\) is therefore a Poisson process of intensity \(1\) such that for all $k\in \N$, $S_k=\Pi_k$.

Let us fix $T\geq 0$ and $n=\lfloor T\rfloor$. Then for all $0\leq t\leq T$
\begin{align*}
 |\Pi_t-W_t| & \leq |\Pi_{\lfloor t \rfloor}-W_{\lfloor t \rfloor}|+ |\Pi_t-\Pi_{\lfloor t \rfloor}| + |W_t-W_{\lfloor t \rfloor}|\\
 & \leq |\Pi_{\lfloor t \rfloor}-W_{\lfloor t \rfloor}| + \xi_{\lfloor t \rfloor+1}+ \zeta_{\lfloor t \rfloor},
 \end{align*}
 with $\zeta_k= \sup_{s\in [0,1)}|W_{k+s}-W_k|.$

So 
\[
\sup_{t\in [0,T)} |\Pi_t-W_t| \leq \max_{k=1,\cdots,n} |S_k-W_k| + \max_{k=1,\cdots,n+1} \xi_k +\max_{k=0,\cdots,n} \zeta_k.
\]
But the  $\xi_k$'s are iid variables with finite exponential moment of every order. 
So by union bound and Markov inequality, we have for every $y\geq 0$
\[
\P(\max_{k=1,\cdots,n+1} \xi_k \geq y ) = \P(\max_{k=1,\cdots,n+1} e^{d\xi_k} \geq e^{dy}) \leq (n+1) \E(e^{d\xi_1}) e^{-dy}.
\]
The same holds for the $\zeta_k$'s since they also have exponential moment of every order.

Therefore taking $y= \log(n+1)/d +x/3, $ and using KMT with $x/3$, we obtain that with probability larger than $1- (b+\E(e^{d\xi_1})+\E(e^{d\zeta_1})) e^{-dx/3}$,
\[
\sup_{t\in [0,T)} |\Pi_t-W_t| \leq (a+2/d) \log(n+1) + x.
\]
In other words, there exist constants $a',b', d'$ such that for all $T>0$
\[
\P( \sup_{t\in [0,T)} |\Pi_t-W_t| \geq a' \log(T+1) +x) \leq b' e^{-d'x}.
\]
Now we want to get rid of $T$. We can decompose the event as follows
\begin{align*}
    &\left\{ \sup_{t\geq 0} \frac{|\Pi_t-W_t|}{\log(t \vee 1+1)} \geq y\right\} \\
    & =  \left\{\sup_{t\in [0,2)} \frac{|\Pi_t-W_t|}{\log(t \vee 1+1)} \geq y\right\}\cup\bigcup_{n\geq 1} \left\{ \sup_{t\in [2^n, 2^{n+1})} \frac{|\Pi_t-W_t|}{\log(t \vee 1+1)} \geq y\right\}.
\end{align*}
But the probability of the first event is bounded, thanks to the previous computations, by
\[
\P(\sup_{t\in [0,2)} |\Pi_t-W_t| \geq y \log(2)) \leq b' e^{-d'(y \log(2)- a'\log(3))},
\]
For the other events
\begin{align*}
   & \P(\sup_{t\in [2^n,2^{n+1})} |\Pi_t-W_t| \geq y \log(2^n)) \\
    &\leq \P(\sup_{t\in [0,2^{n+1})} |\Pi_t-W_t| \geq y \log(2^n)) \leq b' e^{-d'(y\log(2^n)-a'\log(2^{n+1}+1))}.
\end{align*}
But $\log(2^{n+1}+1)=\log(2^n)+\log(2+2^{-n})\leq \log(2^n)+\log(3)$ for all $n\geq 0$, so
\begin{align*}
    \P\left(\sup_{t\geq 0} \frac{|\Pi_t-W_t|}{\log(t \vee 1+1)} \geq y\right) \leq b'&   e^{-d'(y-a') \log(2) + d'a' \log(3)}  \\
    &+ \sum_{n\geq 1} b' e^{-d'(y-a') \log(2^n)+ d'a'\log(3)}
\end{align*}
so the upper bounds reads
\[
b' e^{d'a'\log(3)} \left( e^{-d'(y-a') \log(2)} + \sum_{n\geq 1} 2^{-n d'(y-a')}\right).
\]
If $y=a'+u$, with $u>0$, we get
\[
b' e^{d'a'\log(3)} \left( 2^{-d'u} + 2^{-d'u}(1-2^{-d'u})^{-1}\right).
\]
By taking $u=1/d'+x$ with $x>0$, we get that $(1-2^{-d'u})^{-1}\leq 1/2$, so that there exists $a",b", d" >0$ such that

\begin{equation}
\label{depart}
\P\left(\sup_{t\geq 0} \frac{|\Pi_t-W_t|}{\log(t \vee 1+1)} \geq a"+x\right)\leq b" e^{-d"x}.
\end{equation}
Now to go further, let us quote Lemma 3.12 of the Arxiv version~\cite{prodhomme2020arxiv} of \cite{prodhomme2023strong}.
\begin{lemma}
Let $E_1$ and $E_2$ be two complete separable metric spaces and let $\mu$ be a probability on $(E_1\times E_2, \mathcal{B}(E_1)\otimes\mathcal{B}(E_2))$. Let $\mu_1$ be the first marginal of $\mu$. Then there exists a measurable function $G: E_1 \times [0,1]\to E_2$ such that if $(X_1,V)\sim \mu_1\otimes \mathcal{U}([0,1])$, then $(X_1,G(X_1,V))\sim \mu.$
\end{lemma}
Both  $(\Pi_t)_{t\geq0}$ and $(W_t)_{t\geq 0}$ live in the Skorohod space $\mathcal{D}_{\R}([0,+\infty))$ which is separable and complete (polish).
So,  one can apply the lemma of Prodhomme: for any Poisson process $(\Pi_t)_{t\geq 0}$ with rate \(1\), by  adding an extra uniform variable $V$, independent of anything else,  
one construct a drifted Brownian $(W_t)_{t\geq 0}=G((\Pi_t)_{t\geq 0},V)$, such that $(\Pi_t)_{t\geq 0}$ and $(W_t)_{t\geq 0}$ satisfy \eqref{depart}.
\end{proof}

\begin{corollary} \label{cor kurtz new}
     Let $\alpha>0$ and $(\Pi_t)_{t\geq 0}$ a homogeneous Poisson process with intensity $\mu>0$. Then there exists a Brownian motion $(W_t)_{t\geq 0}$ with drift $\mu$ and scaling $\sqrt{\mu}$ and absolute positive constants $C, \zeta, K$ such that
     \begin{equation*}
\P\left(\sup_{t\geq 0} \frac{|\Pi_t-W_t|}{\log(\mu t \vee 1+1)} \geq C +\zeta\log(K\alpha^{-1})\right)\leq \alpha.
\end{equation*}
\end{corollary}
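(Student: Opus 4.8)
The plan is to reduce the general-intensity statement to the rate-one coupling of Lemma~\ref{original_coupling} by a deterministic time change, which is the natural scaling here since both the Poisson intensity and the Brownian variance are homogeneous of degree one in time.

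First I would set $\tilde\Pi_s \coloneqq \Pi_{s/\mu}$ for $s\geq 0$. Since $\Pi$ has intensity $\mu$, the process $\tilde\Pi$ is a homogeneous Poisson process of rate $1$: its mean measure over $[0,s]$ is $\mu\cdot(s/\mu)=s$, and independence/stationarity of increments are preserved by a deterministic time change. Introducing an independent uniform variable $V$ on $[0,1]$, Lemma~\ref{original_coupling} then produces $\tilde W \coloneqq G(\tilde\Pi, V)$, a Brownian motion with drift $1$ and variance $1$, satisfying
\[
\P\left(\sup_{s\geq 0}\frac{|\tilde\Pi_s - \tilde W_s|}{\log(s\vee 1+1)}\geq a+x\right)\leq be^{-dx}, \qquad x>0.
\]

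Next I would undo the time change on the Brownian side by setting $W_t \coloneqq \tilde W_{\mu t}$. Writing $\tilde W_s = s + \tilde B_s$ with $\tilde B$ a standard Brownian motion, this gives $W_t = \mu t + \tilde B_{\mu t}$, and since $t\mapsto \mu^{-1/2}\tilde B_{\mu t}$ is again a standard Brownian motion by Brownian scaling, $W$ is a Brownian motion with drift $\mu$ and scaling $\sqrt\mu$, as required. The two suprema then coincide exactly: substituting $s=\mu t$ (a bijection of $\R_+$ since $\mu>0$) gives $|\Pi_t - W_t| = |\tilde\Pi_{\mu t} - \tilde W_{\mu t}|$ and $\log(\mu t\vee 1+1)=\log(s\vee 1+1)$, whence
\[
\sup_{t\geq 0}\frac{|\Pi_t - W_t|}{\log(\mu t\vee 1+1)} = \sup_{s\geq 0}\frac{|\tilde\Pi_s - \tilde W_s|}{\log(s\vee 1+1)}.
\]
In particular the displayed tail bound transfers verbatim to the left-hand supremum.

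Finally I would convert the exponential tail into the stated form by inverting $be^{-dx}=\alpha$, i.e. choosing $x = \frac1d\log(b\alpha^{-1})$; this is positive whenever $\alpha<b$, and since the constant $b$ in Lemma~\ref{original_coupling} may be freely enlarged we may assume $b\geq 1$, which covers all $\alpha\in(0,1)$. With this choice $a+x = a + \frac1d\log(b\alpha^{-1})$, so setting $C\coloneqq a$, $\zeta\coloneqq 1/d$ and $K\coloneqq b$ yields $a+x = C + \zeta\log(K\alpha^{-1})$ and the probability is at most $\alpha$. There is essentially no analytic obstacle beyond bookkeeping, as the substance lies entirely in Lemma~\ref{original_coupling}; the only points needing care are verifying via Brownian scaling that the time-changed process $W$ is genuinely a drift-$\mu$, scaling-$\sqrt\mu$ Brownian motion, and the harmless inflation of $b$ needed to guarantee $x>0$ over the relevant range of $\alpha$.
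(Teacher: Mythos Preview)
Your proof is correct and follows essentially the same approach as the paper: both reduce to Lemma~\ref{original_coupling} via the time change $\tilde\Pi_s = \Pi_{s/\mu}$, define $W_t$ as the time-changed coupled Brownian motion, and invert $be^{-dx}=\alpha$ to obtain $C=a$, $\zeta=1/d$, $K=b$. You are in fact slightly more careful than the paper in explicitly verifying the Brownian scaling and in noting that $b$ may be enlarged to ensure $x>0$.
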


\begin{proof}
Let $\alpha>0$. Let $\Pi_t$ a homogeneous Poisson process with intensity $\mu >0$. Then $\Tilde{\Pi}_t \coloneqq \Pi_{\mu^{-1} t}$ is a standard Poisson process so acccording to Lemma~\ref{original_coupling}, there exists a standard Brownian motion $B_t$ and absolute constants $a,b,d >0$ such that for all $x>0$
\begin{equation*}
\P\left(\sup_{t\geq 0} \frac{|\Tilde{\Pi}_t-t - B_t|}{\log(t \vee 1+1)} \geq a+x\right)\leq b e^{-d x}.
\end{equation*}
Let $W_t \coloneqq B_{\mu t} + \mu t$. Then
\begin{equation*}
\P\left(\sup_{t\geq 0} \frac{|\Pi_t-W_t|}{\log(\mu t \vee 1+1)} \geq a+x\right)\leq b e^{-d x}.
\end{equation*}
By choosing $x$ such that $b e^{-d x} = \alpha$, \ie $x=\frac{1}{d}\log(b/\alpha)$, $C = a$, $K = b$ and $\zeta = d^{-1}$ we have the result.
\end{proof}

The following Lemma and its proof can be found in \cite{ost2023neural} (Lemma $7.1$).
\begin{lemma} \label{lemma ost rb}
    Let $X$ be a Poisson random variable with parameter $\gamma >0$. For any $x\geq 0$,
    \begin{equation} \label{ineg poisson}
         \mathbb{P}(X \geq \gamma(1+x)) \leq \exp\left(-\frac{\gamma x^2}{2(1+\frac{x}{3})}\right)
    \end{equation}
    and for any $0\leq x \leq \gamma$,
    \begin{equation} \label{ineg poisson 2}
        \mathbb{P}(X \leq \gamma- x) \leq \exp\left(-\frac{x^2}{2\gamma}\right).
    \end{equation}
\end{lemma}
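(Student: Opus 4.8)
The plan is to use the exponential Chernoff method based on the explicit moment generating function of a Poisson variable, $\E[e^{\theta X}] = \exp(\gamma(e^\theta-1))$ for all $\theta\in\R$, and then to control the \emph{centered} cumulant generating function by a Bernstein-type quadratic bound. Both inequalities then follow by the standard tail optimization; no deep obstacle is expected, so I will instead focus on keeping the two elementary scalar inequalities clean.

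For the upper tail, fix $x\geq 0$ and $\theta\in[0,3)$. Exponential Markov gives $\P(X\geq \gamma(1+x)) \leq \exp(\gamma(e^\theta-1-\theta) - \theta\gamma x)$. The first step is the elementary inequality $e^\theta-1-\theta \leq \frac{\theta^2/2}{1-\theta/3}$ on $[0,3)$: writing $e^\theta-1-\theta=\sum_{k\geq 2}\theta^k/k!$ and using $k!\geq 2\cdot 3^{k-2}$ for $k\geq 2$, the series is dominated by $\frac{\theta^2}{2}\sum_{j\geq 0}(\theta/3)^j$. Substituting this and minimizing $\frac{\gamma\theta^2/2}{1-\theta/3}-\gamma x\theta$ over $\theta$ — the textbook Bernstein optimization with variance proxy $v=\gamma$ and scale $c=1/3$, whose minimizer is $\theta^\star=x/(1+x/3)$, always inside $[0,3)$ — yields exactly $\P(X\geq\gamma(1+x))\leq \exp\!\big(-\frac{\gamma x^2}{2(1+x/3)}\big)$.

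For the lower tail, fix $0\leq x\leq\gamma$ and $s\geq 0$, and apply exponential Markov to $-X$: $\P(X\leq\gamma-x)\leq \exp(\gamma(e^{-s}-1+s)-sx)$. Here the relevant inequality is the cleaner sub-Gaussian bound $e^{-s}-1+s\leq s^2/2$ for $s\geq 0$, which follows since $\phi(s)=s^2/2-e^{-s}+1-s$ satisfies $\phi(0)=\phi'(0)=0$ and $\phi''(s)=1-e^{-s}\geq 0$. Optimizing $\gamma s^2/2-sx$ at $s=x/\gamma$ gives $\P(X\leq\gamma-x)\leq\exp(-x^2/(2\gamma))$.

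The only genuine work is concentrated in the upper tail: the sub-exponential correction $(1-\theta/3)^{-1}$ forces the Bernstein-style optimization rather than a plain completion of the square, and one should check that the optimizing $\theta^\star$ stays in $[0,3)$ so the series bound applies (it does, for every $x\geq 0$). The lower tail is genuinely sub-Gaussian and the computation is immediate. Equivalently one could optimize $\theta$ first to reach the Bennett form $\exp(\gamma(x-(1+x)\log(1+x)))$ and then invoke the classical comparison $(1+x)\log(1+x)-x\geq \frac{x^2}{2(1+x/3)}$, but routing through the quadratic MGF bound avoids that transcendental inequality altogether.
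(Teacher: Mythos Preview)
Your proof is correct: the Chernoff method with the series bound $e^\theta-1-\theta\leq \tfrac{\theta^2/2}{1-\theta/3}$ on $[0,3)$ followed by the Bernstein optimization yields the upper tail exactly as stated, and the sub-Gaussian bound $e^{-s}-1+s\leq s^2/2$ on $[0,\infty)$ handles the lower tail. The paper does not actually give its own proof of this lemma but simply refers to \cite{ost2023neural}; your argument is the standard Cram\'er--Chernoff derivation and is in all likelihood the same as what is found there.
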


\begin{corollary} \label{cor concent poisson}
    Let $\alpha >0$. 
    Let $X$ be a Poisson random variable with parameter $\gamma >0$.
    Then 
    \begin{enumerate}
\item \label{item1_corpoisson}    If $\gamma \geq 2\log(\alpha^{-1})$, we have
    \begin{equation} \label{eq concent theta - X}
         \mathbb{P}\left(X \geq \gamma- \sqrt{2\gamma \log(\alpha^{-1})}\right) \geq 1-\alpha.
    \end{equation}
\item  \label{item2_corpoisson}  If $\gamma \geq \dfrac{8}{3}\log(\alpha^{-1})$, we have
    \begin{equation} \label{eq concent X - theta}
         \mathbb{P}\left(X \leq \gamma +  \sqrt{\frac{8}{3}\gamma \log(\alpha^{-1})}\right) \geq 1- \alpha.
    \end{equation}
\item \label{item3_corpoisson}   If  $\gamma \geq \dfrac{8}{3}\log(2\alpha^{-1})$, we have
    \begin{equation} \label{eq concent abs X - theta}
     \mathbb{P}\left(\abs{X-\gamma} \leq  \sqrt{\frac{8}{3}\gamma \log(2\alpha^{-1})}\right) \geq 1 - \alpha.
\end{equation}
\end{enumerate}
\end{corollary}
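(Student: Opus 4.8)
The plan is to derive all three inequalities directly from the two-sided Poisson tail bounds of Lemma~\ref{lemma ost rb}, in each case tuning the deviation parameter $x$ so that the resulting exponential equals exactly $\alpha$ (or $\alpha/2$), and then passing to the complementary event.

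For item~\ref{item1_corpoisson} I would apply the lower-tail bound \eqref{ineg poisson 2} with the choice $x = \sqrt{2\gamma \log(\alpha^{-1})}$. The hypothesis $\gamma \geq 2\log(\alpha^{-1})$ is precisely what guarantees $x \leq \gamma$, so that \eqref{ineg poisson 2} is applicable, and substituting this $x$ into the exponent $-x^2/(2\gamma)$ produces exactly $-\log(\alpha^{-1})$, hence a bound of $\alpha$ on $\mathbb{P}(X \leq \gamma - x)$. Taking the complement and noting $\mathbb{P}(X \geq \gamma - x) \geq 1 - \mathbb{P}(X \leq \gamma - x)$ gives the claim.

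For item~\ref{item2_corpoisson} I would use the upper-tail bound \eqref{ineg poisson} with $x = \sqrt{\tfrac{8}{3}\log(\alpha^{-1})/\gamma}$, chosen so that $\gamma(1+x) = \gamma + \sqrt{\tfrac{8}{3}\gamma\log(\alpha^{-1})}$. With this choice one has $\gamma x^2 = \tfrac{8}{3}\log(\alpha^{-1})$, so the exponent in \eqref{ineg poisson} becomes $-\tfrac{4}{3}\log(\alpha^{-1})/(1 + x/3)$. The one step that requires genuine care is controlling the Bernstein-type denominator $1 + x/3$: the hypothesis $\gamma \geq \tfrac{8}{3}\log(\alpha^{-1})$ is exactly equivalent to $x \leq 1$, whence $1 + x/3 \leq 4/3$ and the exponent is at most $-\log(\alpha^{-1})$, yielding a probability bound of $\alpha$. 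This matching between the constant $8/3$ and the denominator is what makes the stated threshold sharp, and is the only delicate point of the proof.

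For item~\ref{item3_corpoisson} I would combine the first two items at level $\alpha/2$ and apply a union bound. Since $\tfrac{8}{3} > 2$, the hypothesis $\gamma \geq \tfrac{8}{3}\log(2\alpha^{-1})$ simultaneously ensures both earlier conditions with $\alpha$ replaced by $\alpha/2$. Using $\sqrt{2} \leq \sqrt{8/3}$, the lower deviation furnished by item~\ref{item1_corpoisson}, namely $\sqrt{2\gamma\log(2\alpha^{-1})}$, is dominated by $\sqrt{\tfrac{8}{3}\gamma\log(2\alpha^{-1})}$; hence on the intersection of the two high-probability events, which by the union bound has probability at least $1-\alpha$, both the lower bound $X \geq \gamma - \sqrt{\tfrac{8}{3}\gamma\log(2\alpha^{-1})}$ and the upper bound $X \leq \gamma + \sqrt{\tfrac{8}{3}\gamma\log(2\alpha^{-1})}$ hold, which together give $\abs{X - \gamma} \leq \sqrt{\tfrac{8}{3}\gamma\log(2\alpha^{-1})}$.
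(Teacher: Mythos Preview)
Your proposal is correct and follows essentially the same approach as the paper's proof: both arguments feed the tuned deviation back into Lemma~\ref{lemma ost rb}, and for item~\ref{item2_corpoisson} the paper parametrizes by the absolute deviation $u=\gamma x$ (using $u\leq\gamma$ to bound the denominator $2\gamma+2u/3\leq 8\gamma/3$), which is algebraically identical to your relative-deviation bound $x\leq 1\Rightarrow 1+x/3\leq 4/3$. For item~\ref{item3_corpoisson} the paper merely says ``apply \eqref{eq concent theta - X} and \eqref{eq concent X - theta} with $\alpha/2$ and conclude,'' and your version spells out the union bound and the comparison $\sqrt{2}\leq\sqrt{8/3}$ that the paper leaves implicit.
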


\begin{proof}
\ref{item1_corpoisson} Assume $\gamma \geq 2\log(\alpha^{-1})$, then \(x\coloneqq \sqrt{2\gamma \log(\alpha^{-1})}\) satisfies \(0\leq x \leq \gamma\).
We apply \eqref{ineg poisson 2} and deduce 
\[
\mathbb{P}\left(X \leq \gamma- \sqrt{2\gamma \log(\alpha^{-1})}\right) \leq \alpha,
\]
and deduce \eqref{eq concent theta - X}.

Proof of~\ref{item2_corpoisson}: Thanks to the assumption, \(u\coloneqq \sqrt{\dfrac{8}{3}\gamma \log(\alpha^{-1})} \leq \gamma\).
The inequality \eqref{ineg poisson} can be rewritten as
\[
 \mathbb{P}(X \geq \gamma + u)  \leq \exp\left(-\frac{u^2}{2\gamma +\frac{2u}{3}}\right) 
 \leq \exp\left(-\frac{u^2}{2\gamma +\frac{2u}{3}}\right) \leq \exp\left(-\frac{3u^2}{8\gamma}\right).
\]

Proof of~\ref{item3_corpoisson}: We apply \eqref{eq concent theta - X} and \eqref{eq concent X - theta}
with \(\alpha/2\) and conclude the proof.
\end{proof}

\begin{lemma} \label{lemma sup poisson}
        Let $\Pi_t$ be a homogeneous Poisson process with parameter $\mu>0$. Then for every $x \geq 0$ we have
         \[
        \mathbb{P}\left(\sup_{0\leq t \leq T}( \Pi_t - \mu t) ) \geq \mu T x\right) \leq \exp\left(
        - \frac{\mu T x^2}{2(1 + x /3)}.
        \right)
        \]
        and for every $x\in [0, \mu T]$ we have
        \[
        \mathbb{P}\left(\sup_{0\leq t \leq T}( \mu t - \Pi_t ) \geq x\right) \leq \exp\left(
        - \frac{x^2}{2\mu T}.
        \right)
        \]
    \end{lemma}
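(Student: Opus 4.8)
The plan is to reduce both statements to a single exponential-martingale estimate combined with Doob's maximal inequality, and then to optimise the resulting exponent, exactly as in the classical derivation of Bennett's and Bernstein's inequalities for Poisson variables (cf. Lemma~\ref{lemma ost rb}); the only extra ingredient compared with a fixed-time bound is the passage to a bound that is uniform over $t\in[0,T]$.

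First I would introduce, for $\theta\in\R$, the tilted process
\[
Z^\theta_t \coloneqq \exp\big(\theta(\Pi_t-\mu t)-\mu t\,\phi(\theta)\big),\qquad \phi(\theta)\coloneqq e^{\theta}-1-\theta\geq 0,
\]
and check, using the moment generating function $\E[e^{\theta\Pi_t}]=e^{\mu t(e^\theta-1)}$ together with the independent, stationary increments of $\Pi$, that $(Z^\theta_t)_{t\geq0}$ is a non-negative martingale with $Z^\theta_0=1$. The key observation for the supremum is the sign of $\phi$: for $\theta>0$, on the event $\{\sup_{0\leq t\leq T}(\Pi_t-\mu t)\geq a\}$ there is a time $t\leq T$ at which $\theta(\Pi_t-\mu t)\geq\theta a$, while $-\mu t\,\phi(\theta)\geq-\mu T\,\phi(\theta)$ because $\phi(\theta)\geq0$ and $t\leq T$; hence $Z^\theta_t\geq \exp(\theta a-\mu T\phi(\theta))$ there. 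Doob's maximal inequality for $Z^\theta$ then yields
\[
\P\left(\sup_{0\leq t\leq T}(\Pi_t-\mu t)\geq a\right)\leq \exp\big(-\theta a+\mu T\phi(\theta)\big).
\]
Taking $a=\mu T x$ and minimising $e^{\theta}-1-\theta(1+x)$ over $\theta>0$ (optimum at $\theta=\log(1+x)$) gives the Bennett bound $\exp\big(-\mu T[(1+x)\log(1+x)-x]\big)$, after which the first claim follows from the elementary inequality $(1+x)\log(1+x)-x\geq \frac{x^2}{2(1+x/3)}$, valid for $x\geq0$.

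For the lower tail I would run the identical argument with $Z^{-\theta}$ for $\theta>0$: since $\phi(-\theta)=e^{-\theta}-1+\theta\geq0$, the same monotonicity-in-$t$ trick gives $\P\big(\sup_{0\leq t\leq T}(\mu t-\Pi_t)\geq x\big)\leq \exp\big(-\theta x+\mu T\phi(-\theta)\big)$, and for $x\in[0,\mu T]$ the admissible choice $\theta=-\log(1-x/(\mu T))$ minimises the exponent and produces $\exp\big(-\mu T[(1-u)\log(1-u)+u]\big)$ with $u\coloneqq x/(\mu T)\in[0,1]$. Since $\mu T u^2/2=x^2/(2\mu T)$, the second claim reduces to $(1-u)\log(1-u)+u\geq u^2/2$ on $[0,1]$. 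I expect the only genuinely delicate point to be the handling of the running supremum, which is precisely what the sign of $\phi$ together with Doob's inequality resolves; the two remaining scalar inequalities are routine, each following from the fact that the relevant auxiliary function vanishes at the origin and is nondecreasing thereafter.
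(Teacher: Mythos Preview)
Your proposal is correct and follows essentially the same route as the paper: both arguments apply Doob's maximal inequality to the exponential of the centered process and then optimise the Chernoff exponent exactly as in Lemma~\ref{lemma ost rb}. The only cosmetic difference is that the paper works directly with the submartingale $e^{\xi(\Pi_t-\mu t)}$ (so no ``sign of $\phi$'' observation is needed), whereas you use the compensated exponential martingale $Z^\theta_t$; the resulting bound $\exp(-\theta a+\mu T\phi(\theta))$ and its optimisation are identical.
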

    
    \begin{proof}
        The process $(\Pi_t - \mu t)_{t\leq 0}$ is a right-continuous martingale. Let $C>0, \xi >0$. Then $(\exp(\xi(\Pi_t - \mu t))_{t\leq 0}$ is a right-continuous submartingale. 
        Therefore, according to the first submartingale inequality (Theorem $3.8$ page 13 of \cite{karatzasbrownian}), we have
        \[
        \mathbb{P}\left(\sup_{0\leq t \leq T} \exp(\xi(\Pi_t - \mu t) ) \geq C\right) \leq \frac{\mathbb{E}[e^{\xi (\Pi_T - \mu T) }]}{C}.
        \]
        We choose \(C = \exp(\xi\mu T x)\) and  mimic the proof of Lemma~\ref{lemma ost rb} to obtain:
        \[
        \mathbb{P}\left(\sup_{0\leq t \leq T}( \Pi_t - \mu t) ) \geq \mu T x\right) \leq \exp\left(
        - \frac{\mu T x^2}{2(1 + x /3)}.
        \right)
        \]
        Similarly, the process $(\mu t - \Pi_t)_{t\leq 0}$ is a right-continuous martingale so for $\xi >0$, the process $(\exp(\xi(\mu t - \Pi_t))_{t\leq 0}$ is a right-continuous submartingale. Therefore 
      \[
        \mathbb{P}\left(\sup_{0\leq t \leq T} \exp(\xi(\mu t - \Pi_t) ) \geq C\right) \leq \frac{\mathbb{E}[e^{\xi (\mu T - \Pi_T) }]}{C}.
        \]
        Again, we can mimic the proof of Lemma~\ref{lemma ost rb} and we get the same bound for $x\in [0, \mu T]$:
        \[
        \mathbb{P}\left(\sup_{0\leq t \leq T}( \mu t - \Pi_t ) \geq x\right) \leq \exp\left(
        - \frac{x^2}{2\mu T}.
        \right)
        \]
    \end{proof}

    \begin{corollary} \label{cor sup poisson}
         Let $\beta >0$ and $T\geq \frac{8}{3\mu}\log(2/\beta)$. Let $\Pi_t$ be a homogeneous Poisson process with parameter $\mu>0$. Then
          \[
        \mathbb{P}\left(\sup_{0\leq t \leq T}\abs{\Pi_t - \mu t} ) \geq \sqrt{\frac{8}{3}\mu T \log\Big(\frac{2}{\beta}\Big)}\right) \leq \beta.
    \]
    \end{corollary}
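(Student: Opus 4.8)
The plan is to derive this two-sided maximal inequality directly from the two one-sided bounds of Lemma~\ref{lemma sup poisson} via a union bound, after calibrating the free parameters so that each side contributes at most $\beta/2$. Writing $c \coloneqq \sqrt{\frac{8}{3}\mu T\log(2/\beta)}$ for the target radius, I would first decompose
\[
\Big\{\sup_{0\leq t\leq T}\abs{\Pi_t-\mu t}\geq c\Big\}\subseteq \Big\{\sup_{0\leq t\leq T}(\Pi_t-\mu t)\geq c\Big\}\cup\Big\{\sup_{0\leq t\leq T}(\mu t-\Pi_t)\geq c\Big\}.
\]

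For the upper deviation I would apply the first inequality of Lemma~\ref{lemma sup poisson} with $x\coloneqq c/(\mu T)=\sqrt{\frac{8}{3\mu T}\log(2/\beta)}$, so that $\mu T x=c$. The hypothesis $T\geq\frac{8}{3\mu}\log(2/\beta)$ is exactly what guarantees $x\leq 1$, whence $1+x/3\leq 4/3$, and the exponent can be lower bounded by $\frac{\mu T x^2}{2(1+x/3)}\geq\frac{3}{8}\mu T x^2=\log(2/\beta)$, using $\mu T x^2=\frac{8}{3}\log(2/\beta)$. This yields a probability at most $\beta/2$.

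For the lower deviation I would apply the second inequality of Lemma~\ref{lemma sup poisson} with $x\coloneqq c$, which is only valid for $c\leq\mu T$; this range condition again follows from $T\geq\frac{8}{3\mu}\log(2/\beta)$, since that is equivalent to $\frac{8}{3}\mu T\log(2/\beta)\leq(\mu T)^2$. The bound then gives a probability at most $\exp(-c^2/(2\mu T))=\exp(-\frac{4}{3}\log(2/\beta))=(\beta/2)^{4/3}\leq\beta/2$ (using $\beta\leq 2$, the bound being trivial otherwise). Adding the two contributions through the union bound produces the claimed $\beta$.

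The only delicate point, and it is minor, is the treatment of the Bernstein-type correction $1+x/3$ in the upper-tail exponent: the radius $c$ is chosen precisely so that the Gaussian part alone accounts for $\log(2/\beta)$, and the restriction $x\leq 1$ — equivalently the hypothesis on $T$ — is exactly what lets the correction factor be absorbed into the constant $\frac{8}{3}$. The lower-tail estimate is strictly easier because its exponent carries no such correction, so it automatically contributes less than $\beta/2$; the upper tail is therefore the binding side.
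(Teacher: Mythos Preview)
Your proof is correct and follows essentially the same approach as the paper: a union bound over the two one-sided maximal inequalities of Lemma~\ref{lemma sup poisson}, using the hypothesis $T\geq\frac{8}{3\mu}\log(2/\beta)$ to ensure the threshold lies in the admissible range and to absorb the Bernstein correction $1+x/3\leq 4/3$. The paper merely packages the two bounds into a single inequality $2\exp(-3x^2/(8\mu T))$ before solving for $x$, whereas you treat each tail separately, but the computations and the role of the hypothesis on $T$ are identical.
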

\begin{proof}
    Similarly as in the proof of \eqref{eq concent abs X - theta}, by combining the two inequalities given by Lemma~\ref{lemma sup poisson}, we have that for every $0\leq x \leq \mu T$
      \[
        \mathbb{P}\left(\sup_{0\leq t \leq T}\abs{\Pi_t - \mu t} ) \geq x\right) \leq 2\exp\left(
        - \frac{3 x^2}{8\mu T}.
        \right)
    \]
    Let us choose $x$ such that $2\exp\left(
        - \frac{3 x^2}{8\mu T}\right) = \beta$, \ie $x = \sqrt{\frac{8}{3}\mu T \log\Big(\frac{2}{\beta}\Big)}$. The condition $x\leq \mu T$ becomes $T\geq \frac{8}{3\mu}\log(2/\beta)$.
\end{proof}

\begin{lemma}
\label{Control_brownian_2times}
There exist  constants $c,c',c">0$ such that if 
 $(B_t)_{t\geq 0}$ a standard centered Brownian motion
then for all $x>0$
\begin{equation*}
\P\left(\sup_{t, u \geq 0} \frac{|B_t-B_u|}{\sqrt{(|t-u|+2)[\log(t+1)+\log(u+1)+2x]}} \geq c\right)\leq c' e^{-c'' x}.
\end{equation*}
\end{lemma}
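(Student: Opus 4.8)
The plan is to reduce the two-time supremum to one-time quantities by decomposing $B_t-B_u$ along the integer grid, and then to close everything with a union bound whose summability is provided by the logarithmic factors $\log(t+1)+\log(u+1)$. Since the ratio is symmetric in $(t,u)$, I would restrict the supremum to $t\ge u\ge0$. The conceptual point is that near the diagonal one cannot compare $\abs{B_t-B_u}$ with $\sqrt{t-u}$ (that ratio is already unbounded by the local law of the iterated logarithm); this is precisely why the denominator carries the cushion $+2$, which I exploit through the grid decomposition. Writing $m\coloneqq\lfloor t\rfloor$ and $l\coloneqq\lfloor u\rfloor$, so that $m\ge l$, I would split
\[
B_t-B_u=(B_t-B_m)+(B_m-B_l)+(B_l-B_u),
\]
where the two outer terms are fluctuations inside a single unit interval and the middle term is a Gaussian increment with integer endpoints.

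\textbf{Controlling the pieces.} Set $D\coloneqq\sqrt{(t-u+2)[\log(t+1)+\log(u+1)+2x]}$ and $F_k\coloneqq\sup_{s\in[0,1]}\abs{B_{k+s}-B_k}$ for $k\in\N$ (these are well-defined by continuity of $B$, so no measurability issue arises). For the outer terms I would apply Lemma~\ref{lemma control z - mu} with $\mu=0$, $\sigma=1$, $T=1$ (equivalently the reflection principle), giving $\P(F_k\ge\lambda)\le 2e^{-\lambda^2/2}$; hence, for a constant $c_0>\sqrt2$ to be fixed,
\[
\P\!\left(F_k\ge c_0\sqrt{\log(k+1)+2x}\right)\le 2(k+1)^{-c_0^2/2}e^{-c_0^2x}.
\]
For the middle term, $B_m-B_l\sim\mathcal N(0,m-l)$, and using $(m-l+2)/(m-l)\ge1$ the Gaussian tail yields
\[
\P\!\left(\abs{B_m-B_l}\ge c_0\sqrt{(m-l+2)[\log(m+1)+\log(l+1)+2x]}\right)\le 2(m+1)^{-c_0^2/2}(l+1)^{-c_0^2/2}e^{-c_0^2x}.
\]

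\textbf{Union bound and bookkeeping.} I would take the union of these events over all $k\ge0$ and over all integer pairs $0\le l\le m$. Because $c_0^2/2>1$, both series $\sum_k(k+1)^{-c_0^2/2}$ and $\sum_{l\le m}(m+1)^{-c_0^2/2}(l+1)^{-c_0^2/2}$ converge, bounding the total bad-event probability by $c'e^{-c_0^2x}$. On the complementary good event, the elementary comparisons $t-u+2\ge2$, $\log(u+1)\ge0$, $m-l+2\le\tfrac32(t-u+2)$ and $\log(m+1)+\log(l+1)\le\log(t+1)+\log(u+1)$ (valid since $m\le t$, $l\le u$) show that each piece is dominated by $D$: namely $\abs{B_t-B_m}\le\tfrac{c_0}{\sqrt2}D$, $\abs{B_l-B_u}\le\tfrac{c_0}{\sqrt2}D$ and $\abs{B_m-B_l}\le c_0\sqrt{3/2}\,D$, simultaneously for all $t\ge u\ge0$. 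Summing, $\abs{B_t-B_u}\le c_0(\sqrt2+\sqrt{3/2})D$, which is the claim with $c\coloneqq c_0(\sqrt2+\sqrt{3/2})$ and $c''\coloneqq c_0^2$.

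\textbf{Main obstacle.} The delicate part is not any single tail bound but the matching of denominators: one must verify that the three summands, whose natural scales are $\sqrt{\log(m+1)+2x}$, $\sqrt{(m-l+2)[\cdots]}$ and $\sqrt{\log(l+1)+2x}$, are all controlled by the single quantity $D$ uniformly in $(t,u)$, while at the same time the resulting double series converges, which is what forces the threshold $c_0>\sqrt2$. The cushion $+2$ is exactly the ingredient that makes the near-diagonal (outer) fluctuations comparable to $D$, and handling it correctly is the heart of the argument.
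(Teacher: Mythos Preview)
Your proof is correct. Both your argument and the paper's use symmetry to reduce to $t\ge u$, pass through the integer grid, and close with a union bound whose summability is bought by the logarithmic factors, so in that sense the architecture is the same. The difference lies in the key input. The paper splits $B_t-B_u=(B_t-B_n)+(B_n-B_u)$ with $n=\lfloor u\rfloor$ only, and controls the long piece $\sup_{t\ge n}|B_t-B_n|/\sqrt{(t-n+1)[\cdots]}$ directly via the Robbins--Siegmund boundary-crossing inequality \eqref{RobbinsSiegmuns70}; this leaves a single sum over $n$. You instead split into three pieces through both floors, which lets you get away with the reflection principle on unit intervals plus a bare Gaussian tail for the discrete increment $B_m-B_l$, at the cost of a double sum over $(l,m)$. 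Your route is more elementary (no external 1970 result needed) and makes the role of the cushion $+2$ and the threshold $c_0>\sqrt2$ completely transparent; the paper's route is a bit more compact and would in principle give sharper constants, since Robbins--Siegmund absorbs the whole long-range piece in one shot.
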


This Lemma resembles Lemma~6 of \cite{chevallier2021diffusion} which is probably tighter, but is valid only for $t,u\leq T$ whereas our version is on all $\R_+$.
\begin{proof}
Remark that
\begin{align*}
    &\sup_{t,u \geq 0} \frac{|B_t-B_u|}{\sqrt{(|t-u|+2)[\log(t+1)+\log(u+1)+2x]}} \\
    &=\sup_{t\geq u \geq 0} \frac{|B_t-B_u|}{\sqrt{(|t-u|+2)[\log(t+1)+\log(u+1)+2x]}},
\end{align*}
because the quantity on which we are taking the supremum is invariant by exchange of $t$ and $u$.

We start from the article \cite{robbins1970boundary}, where Example 3 page 88 says that for all $y>0$,
\begin{equation}\label{RobbinsSiegmuns70}
\P\left(\sup_{t\geq 0} \frac{|B_t|}{\sqrt{(t+1)[\log(t+1)+2y]}}\geq 1\right) \leq e^{-y}.
\end{equation}

Let us now look at the event
\begin{multline*}
\left\{\sup_{t \geq u \geq 0} \frac{|B_t-B_u|}{\sqrt{(|t-u|+2)[\log(t+1)+\log(u+1)+2x]}} \geq a
\right\}= \\
\bigcup_{n\in \mathbb{N}} \left\{\sup_{t \geq u \geq 0, u\in[n,n+1)} \frac{|B_t-B_u|}{\sqrt{(|t-u|+2)[\log(t+1)+\log(u+1)+2x]}} \geq a
\right\}.
\end{multline*}
But for fixed $n\in \mathbb{N}$, for every $t\geq u, u\in [n,n+1)$
\begin{multline*}
\frac{|B_t-B_u|}{\sqrt{(|t-u|+2)[\log(t+1)+\log(u+1)+2x]}} \leq \\ \frac{|B_t-B_n|}{\sqrt{(|t-u|+2)[\log(t+1)+\log(u+1)+2x]}}+\frac{|B_u-B_n|}{\sqrt{(|t-u|+2)[\log(t+1)+\log(u+1)+2x]})} \leq \\
\frac{|B_t-B_n|}{\sqrt{(t-n+1)[\log(t+1)+\log(n+1)+2x]}}+\frac{|B_u-B_n|}{\sqrt{(|u-n|+1)[\log(n+1)+\log(u+1)+2x]})}.
\end{multline*}
This leads to
\begin{multline*}
\left\{\sup_{t \geq u \geq 0, u\in[n,n+1)} \frac{|B_t-B_u|}{\sqrt{(|t-u|+2)[\log(t+1)+\log(u+1)+2x]}} \geq a
\right\} \subset\\
\left\{\sup_{t \geq n} \frac{|B_t-B_n|}{\sqrt{(t-n+1)[\log(t+1)+\log(n+1)+2x]}} \geq a/2
\right\} \\
\bigcup 
\left\{\sup_{u \geq n} \frac{|B_u-B_n|}{\sqrt{(|u-n|+1)[\log(n+1)+\log(u+1)+2x]})} \geq a/2
\right\}
\end{multline*}
But in each case, with $a\geq 2$ we can apply \eqref{RobbinsSiegmuns70} to respectively the Brownian $(B_t-B_n)_{t\geq n}$ and the Brownian $(B_u-B_n)_{t\geq n}$ with $y=\frac{a}{2}(\log(n+1)/2+x)$.
Hence by union bound, we get that
\begin{align*}
\P\left(\sup_{t \geq u \geq 0} \frac{|B_t-B_u|}{\sqrt{(|t-u|+2)[\log(t+1)+\log(u+1)+2x]}} \geq a \right) &\leq 
\sum_{n=0}^{\infty} 2 e^{- \frac{a}{2}(\log(n+2)/2+x)} \\
&\leq 2 \sum_{n=0}^{\infty} (n+1)^{-a/4} e^{- ax/2}
\end{align*}
By taking $a=8$ for instance, the previous sum is finite and we get exactly the result.
\end{proof}

\subsection{Proof of Theorem~\ref{th ddm}}

We start from inequality \eqref{RobbinsSiegmuns70} which says that for all $y>0$, if $(B_t)_{t\geq 0}$ is a standard Brownian motion then
\begin{equation*}
\P\left(\sup_{t\geq 0} \frac{|B_t|}{\sqrt{(t+1)[\log(t+1)+2y]}}\geq 1\right) \leq e^{-y}.
\end{equation*}

 Let $\alpha >0$,  $m\in \N^*$,  $o\in \Obj$ the nature of object $m$ and $j^*$ its category. Let $\Tilde{J} \coloneqq \{j\in J \text{ such that } \mu^j_o >0\}$. Indeed, if $\mu_o^j=0$, then for all $t$ we have $W^j_{m,t} = 0$ and the process $W^j_m$ will never reach $\thr$. According to Assumption~\ref{assump drift}, we have necessarily $j^*\in \Tilde{J}$. Up to consider $\Tilde{J}$ instead of J, let us assume $J = \Tilde{J}$.
 
 For every $j\in J$, the process $W^j_{m,t}$ can be written as \eqref{eq def w}. By choosing $y = \log(\abs{J}\alpha^{-1})$, according to \eqref{RobbinsSiegmuns70}, by union bound, we have with probability more than $1-\alpha$ that for every $j\in J$, 
\[
\sup_{t\geq 0} \frac{|B^j_t|}{\sqrt{(t+1)[\log(t+1)+2y]}}\leq 1. 
\]
In particular, for every $t\geq 0$ we have
\begin{align*}
    W^{j^*}_{m,t} &\geq \mu^{j^*}_{o} t - \sqrt{(\mu^{j^*}_{o}t+1)(\log(`t+1)+2y)} \eqqcolon I(\mu^{j^*}_{o},t)
\end{align*}
and for $j\neq j^*$,%
we get
\[
W^{j}_{m,t} \leq \mu^j_{o} t + \sqrt{(\mu^j_{o} t+1)(\log(\mu^j_{o} t+1)+2y)} \eqqcolon S(\mu^j_{o},t).
\]
A sufficient condition for $W^{j^*}_m$ to reach $\thr$ before limit duration $T$ 
and to be the first to do so is the existence of $t_0\in [0,T]$ such that $I(\mu^{j^*}_{o},t_0) = \thr$ and for every $j\neq j^*$, $S(\mu^{j}_{o},t_0) < I(\mu^{j^*}_{o},t_0)$. It would mean that at time $t_0$, $W^{j^*}_m$ has already reached $\thr$ and none of the other processes have done so.

Let $t_0\geq 0$. The condition 
$S(\mu^{j}_{o},t_0) < I(\mu^{j^*}_{o},t_0)$ is equivalent to
\[
(\mu^{j^*}_{o} - \mu^j_{o})t_0 >  \sqrt{(\mu^j_{o} t_0+1)(\log(\mu^j_{o} t_0+1)+2y)} + \sqrt{(\mu^{j^*}_{o}t_0+1)(\log(\mu^{j^*}_{o}t_0+1)+2y)}
\]
which is implied by
\[
\Dmu t_0 > 2 \sqrt{(\norm{\mu}_{\infty} t_0+1)(\log(\norm{\mu}_{\infty} t_0+1)+2y)}.
\]
which is equivalent to
\[
\Dmuu t_0^2 > 4 (\norm{\mu}_{\infty} t_0+1)(\log(\norm{\mu}_{\infty} t_0+1)+2y).
\]
Suppose $t_0>\frac{1}{\norm{\mu}_{\infty}}$. Then a sufficient condition is
\begin{equation} \label{eq interm}
    \Dmuu t_0^2 > 4 (2\norm{\mu}_{\infty} t_0)(\log(2\norm{\mu}_{\infty} t_0)+2y).
\end{equation}
Let us denote $v\coloneqq 2\norm{\mu}_{\infty} t_0$. Then \eqref{eq interm} can be rewritten as
\[
\frac{\Dmuu}{4\norm{\mu}_{\infty}^2} v > 4 \log(v) + 8 y 
\]
Let $x \coloneqq \frac{\Dmuu}{4\norm{\mu}_{\infty}^2} v $. %
Then a sufficient condition is
\begin{align} \label{eq x}
    x &\geq 4 \log(x) + 4 \log\left(\frac{4\norm{\mu}_{\infty}^2}{\Dmuu} \right) + 8 y %
\end{align}
We can find a constant $x_0$, independent of everything, such that for every $x\geq x_0$, $\frac{x}{2}\geq 4 \log(x)$. 
Then if $x>x_0$ and $x/2> 4 \log\left(\frac{4\norm{\mu}_{\infty}^2}{\Dmuu} \right) + 8 y$, then \eqref{eq x} holds.

To summarize, since $y=\log(|J|\alpha^{-1})$, if 
\[
t_0\geq \square \frac{1}{\Dmuu}\max( \log(\frac{4\norm{\mu}_{\infty}^2}{\Dmuu}), \log(|J|\alpha^{-1}))\eqqcolon t_-,
\]
we have that $I(\mu^{j^*}_{o},t_0)> S(\mu^j_{o},t_0)$ for all $j\neq j^*$.

By the assumptions of the theorem, we have that 
\[\theta \geq \norm{\mu}_\infty t_- \geq I(\mu^{j^*}_{o},t_-)\] 
and
\[\theta \leq \Dmu T - \sqrt{(\norm{\mu}_\infty T +1)(\log(\norm{\mu}_\infty T +1)+2\log(|J|\alpha^{-1}))} \leq I(\mu^{j^*}_{o},T).
\]
So we have that $I(\mu^{j^*}_{o},t_-)\leq I(\mu^{j^*}_{o},T)$ and that $\theta \in [I(\mu^{j^*}_{o},t_-),I(\mu^{j^*}_{o},T)]$. 
By continuity of 
$I(\mu^{j^*}_{o},.)$, there exists $t_0$ in $[t_-,T]$ such that $I(\mu^{j^*}_{o},t_0)=\theta$ 
and since $t_0\geq t_-$, we also have that $S(\mu^j_{o},t_0)<I(\mu^{j^*}_{o},t_0)$ for all $j\neq j^*$, which concludes the proof.

\subsection{Proof of Theorem~\ref{th poisson correct categ}}

Let $\alpha >0$,  $m\in \N^*$,  $o\in \Obj$ the nature of object $m$, $j^*$ its category and $\Dgam$ the constant given by 
Assumption~\ref{assump intensity poisson}. Let $\Tilde{J} \coloneqq \{j\in J \text{ such that } \gamma^j_o >0\}$. 
According to Assumption~\ref{assump intensity poisson}, we have necessarily $j^*\in \Tilde{J}$. Up to consider $\Tilde{J}$ instead of J, let us assume $J = \Tilde{J}$. For $t\geq 0$ and $x\geq 0$, let
\[
S(t) \coloneqq (\gamma^{j^*}_o - \Dgam) t +  \sqrt{\frac{8}{3} \gamma^{j^*}_o t\log\left(\frac{2\abs{J}}{\alpha}\right)} 
\]
and
\[
I(t) \coloneqq \gamma^{j^*}_o t -  \sqrt{\frac{8}{3} \gamma^{j^*}_o t \log\left(\frac{2\abs{J}}{\alpha}\right)}.
\]
Let $t_- \coloneqq \frac{8}{3}\log(\frac{2\abs{J}}{\alpha})\max\left(\gamma_{\text{min}}^{-1},  4\gamma^{j^*}_o \Dgam^{-2}\right)$ and $t_0 \in \left[t_-,T \right]$ to choose later. Let $\Omega$ be the event
    \[
    \Omega \coloneqq \left\{ \forall j\in J,  \abs{\Pi^j_{m,t_0} - \gamma^j_o t_0} < \sqrt{\frac{8}{3} t_0 \gamma^j_o \log\left(\frac{2\abs{J}}{\alpha}\right)} \right\}.
    \]
    Since the processes $(\Pi^j_{m,t})_{t\geq 0}$ are Poisson processes with intensity $\gamma^j_o>0$ and $t_0  \geq \frac{8}{3\gamma^j_o}\log(\frac{2\abs{J}}{\alpha})$ 
    by lower bounding $\gamma^j_o$ by $\gamma_{\text{min}}$, by union bound on the $j\in J$, according to 
    inequality \eqref{eq concent abs X - theta} from Corollary~\ref{cor concent poisson},
    the event $\Omega$ has probability more than $1-\alpha$. Let us work on $\Omega$. We have
    \[
    \Pi^{j^*}_{m,t_0} \geq \gamma^{j^*}_o t_0 -  \sqrt{\frac{8}{3} \gamma^{j^*}_o t_0 \log\left(\frac{2\abs{J}}{\alpha}\right)} = I(t_0)
    \]
    and for every $j\neq j^*$, by bounding $\gamma^j_o$ by $\gamma^{j^*}_o - \Dgam$ and $\gamma^{j^*}_o$ we have 
    \[
    \Pi^j_{m,t_0} \leq \gamma^j_o t_0 +  \sqrt{\frac{8}{3} \gamma^j_o t_0 \log\left(\frac{2\abs{J}}{\alpha}\right)} < (\gamma^{j^*}_o - \Dgam) t_0 +  \sqrt{\frac{8}{3} \gamma^{j^*}_o t_0\log\left(\frac{2\abs{J}}{\alpha}\right)} = S(t_0).
    \]
    thanks to Assumption~\ref{assump intensity poisson}. Let us study under which condition we have $S(t_0) < I(t_0)$. This is equivalent to the condition
    \[
    t_0 \geq  \frac{32}{3\Dgamm} \gamma^{j^*}_o \log\left(\frac{2\abs{J}}{\alpha}\right)
    \]
    which we assumed. Suppose $\thr\in [I(t_-), I(T)]$. By continuity of $t\mapsto I(t)$, there exists $t_0\in [t_-,T]$ such that $I(t_0) =\thr$. Let us choose such $t_0$. Therefore, at time $t_0$, we have $\Pi^{j^*}_{m,t_0}\geq \thr$ and for every $j\neq j^*$, $\Pi^j_{m,t_0} < \thr$. This means that at time $t_0$, the process $(\Pi^{j^*}_{m,t})_{t\geq 0}$ has already reached $\thr$ and since the processes $(\Pi^j_{m,t})_{t\geq 0}$ are increasing, every other process $(\Pi^j_{m,t})_{t\geq 0}$ where $j\neq j^*$ has not reached $\thr$ yet. Therefore, on $\Omega$, under the condition $\thr\in [I(t_-), I(T)]$, the process  $(\Pi^{j^*}_{m,t})_{t\geq 0}$ reaches $\thr$ before limit duration $T$ and is the first to do so. Since $I(t_-) \leq  \gamma^{j^*}_o t_- $ by definition of the function $I$, sufficient conditions on $\thr$ are
    \[
    \frac{8}{3}\log\left(\frac{2\abs{J}}{\alpha}\right)\max\left(\frac{\gamma^{j^*}_o}{\gamma_{\text{min}}},  4\left(\frac{\gamma^{j^*}_o}{\Dgam}\right)^2\right)\leq \thr
    \]
    and
    \[
    \thr \leq  \gamma^{j^*}_o T -  \sqrt{\frac{8}{3} \gamma^{j^*}_o T \log\left(\frac{2\abs{J}}{\alpha}\right)}.
    \]
    In order to get conditions independent of $o$ and $j^*$, we upper bound $\gamma^{j^*}_o$ by $\norm{\gamma}_{\infty}$ and lower bound it by $\Dgam$.

\subsection{Proof of Theorem~\ref{th coupling ddm poisson}} \label{proof th coupling brown poiss}

Let $m\in \N^*$, $o$ the nature of object $m$ and $n\geq 1$. Then \eqref{eq coupling poisson ddm} is a direct consequence of Corollary~\ref{cor kurtz new} by union bound on the $j\in J$ and applying the time change $t\mapsto n\gamma^j_o t$. 

Let $\alpha\in (0,1)$ and $U \coloneqq \max_{j\in J}\{\frac{\thr}{\gamma^j_o}\} + 1 = \frac{\thr}{\gamma_{\text{min}}} + 1$.%
\newline

$\bullet$ Control of the difference $\abs{\Pi^{j,n}_{m,t} - \W^{j,n}_{m,t}}$. 
\newline

For each $j\in J$, let us apply Corollary~\ref{cor kurtz new} to the Poisson process $(\Pi^{j,n}_{m,t})_{t\geq 0}$ having intensity $n\gamma^j_o$. By choosing uniform variables $V^j$ independent of everything else, there exist Brownian motions $W^{j,n}_{m,t}$ with drift $n\gamma^j_o$ and scaling $\sqrt{n\gamma^j_o}$ depending only on $(\Pi^{j,n}_{m,t})_{t\geq 0}$ and $V^j$ such that the event
\begin{equation} \label{eq kurtz}
   \Omega_1 \coloneqq \left\{ \forall j\in J, \quad \sup_{t\geq 0} \frac{|\Pi^{j,n}_{m,t}-W^{j,n}_{m,t}|}{\log(n\gamma^j_o t \vee 1+1)} \leq C +\zeta\log(5K\abs{J}\alpha^{-1}) \right\}
\end{equation}
has probability more than $1-\frac{\alpha}{5}$ by union bound, where $C$, $K$ and $\zeta$ are positive constants independent of the parameters of the problem. Let us work on $\Omega_1$.
\newline

$\bullet$ Study of the hitting times of $\W^{j,n}_{m,t}$ and $\Pi^{j,n}_{m,t}$. 
\newline

We introduce the first hitting times  $\taupibis \coloneqq \inf \left\{u\in [0,U] \text{ s.t. } \Pi^{j,n}_{m,u} \geq n\thr \right\}$ and $\tauwbis = \inf \left\{u\in [0,U] \text{ s.t. } \W^{j,n}_{m,u} \geq n\thr \right\}$,  and we set \(\taupibis, \tauwbis = \infty\) if  the previous sets are empty. First, let us study under which conditions $\taupibis < \infty$ and $\tauwbis < \infty$. 

Let $\Omega_2$ be the event
\[
\Omega_2 \coloneqq \left\{\forall j\in J, \quad \Pi^{j,n}_{m,U} \geq  n\gamma^j_o U - \sqrt{2n\gamma^j_o U \log(5\abs{J}\alpha^{-1})}\right\}.
\]
Then $\Pi^{j,n}_{m,U}$ is a Poisson variable with parameter $n\gamma^j_o U$. 
Let us assume that for every $j\in J$ we have $n\gamma^j_o U \geq 2 \log(\abs{J}\beta_n^{-1})$. 
Then according to \eqref{eq concent theta - X} of Corollary~\ref{cor concent poisson}, by union bound 
$\mathbb{P}(\Omega_2) \geq 1 - \frac{\alpha}{5}$. Let us work on $\Omega_1 \cap \Omega_2$. 
Then a sufficient condition to have $\taupibis < \infty$ is to have $ n\gamma^j_o U - 
\sqrt{2n\gamma^j_o U \log(5\abs{J}\alpha^{-1})} \geq n\thr$, which is equivalent to the condition
\begin{equation} \label{eq cond 1 U}
    \frac{\thr}{\gamma^j_o} \leq U - \sqrt{\frac{2U}{n\gamma^j_o} \log(5\abs{J}\alpha^{-1})}.
\end{equation}
which is fulfilled since we assumed that $n$ verifies
\begin{equation} \label{eq cond n}
    n \geq \frac{2}{\gamma_{\text{min}}}
    \left(\frac{\thr}{\gamma_{\text{min}}} + 1\right) \log\left( \frac{10\abs{J}}{\alpha}\right).
\end{equation}
Let $\Omega_3$ be the event
\[
\Omega_3 \coloneqq \left\{ \forall j\in J, \quad \sup_{0\leq u \leq U} \abs{\W^{j,n}_{m,t} - n\gamma^j_o t} < \sqrt{2Un\gamma^j_o \log(10\abs{J}\alpha^{-1})} \right\}
\]
Then according to Lemma~\ref{lemma control z - mu}, we have $\mathbb{P}(\Omega_3) \geq 1-\frac{\alpha}{5}$. Let us work on $\Omega_1\cap \Omega_2\cap\Omega_3$. We have then
\[
\W^{j,n}_{m,U} \geq n\gamma^j_o U - \sqrt{2U n\gamma^j_o\log(10\abs{J}\alpha^{-1})}.
\]
A sufficient condition to have $\tauwbis < \infty$ is to have 
$n\gamma^j_o U - \sqrt{2Un\gamma^j_o\log(10\abs{J}\alpha^{-1})}\geq n\thr$, which is fulfilled since $n$ verifies \eqref{eq cond n}.
Therefore, on $\Omega_1 \cap \Omega_2 \cap \Omega_3$, we have $\taupibis < \infty, \taupibis<\infty$ 
and consequently $\taupi = \taupibis$ and $\tauw = \tauwbis$. Therefore, for every $j\in J$ and 
$Z\in\{\Pi,\W\} $, we have $\tau^{Z^j}_n\leq U$, and controlling the trajectory 
of $Z^{j,n}_m$ on $[0,U]$ is sufficient to control it on $[0,\tau^{Z^j}_n]$.

Using this fact, let us study the 
convergence of $\taupi$ and $\tauw$ to $\frac{\thr}{\gamma^j_o}$.
\newline

$1$. Convergence of $\taupi$. According to Corollary~\ref{cor sup poisson}, for $U \geq \frac{8}{3n\gamma^j_o}\log(10\abs{J}\alpha^{-1})$, which holds for large $n$, the event
\[
\Omega_4 \coloneqq \left\{ \forall j\in J, \quad \sup_{0\leq t \leq U} \abs{\Pi^{j,n}_{m,t} - n\gamma^j_o t} \leq \sqrt{\frac{8}{3}n\gamma^j_o U \log(10\abs{J}\alpha^{-1})} \right\}
\]
has probability more than $1-\frac{\alpha}{5}$. Let us work on $\Omega_1 \cap \Omega_2\cap\Omega_3\cap \Omega_4$.
Since $\Pi^{j,n}_{m,\taupi}$ is an integer and the process $(\Pi^{j,n}_{m,t})_{t\geq 0}$ cannot make two jumps at the same time, we have $\Pi^{j,n}_{m,\taupi} = \lceil n\thr \rceil$. Therefore, for every $j\in J$ we have
\[
\abs{n\thr - n\gamma^j_o \taupi} \leq \abs{n\thr - \lceil n\thr \rceil} + \abs{\Pi^{j,n}_{m,\taupi} 
- n \gamma^j_o \taupi} \leq 1 + \sqrt{\frac{8}{3}n\gamma^j_o U \log(10\abs{J}\alpha^{-1})} 
\]
since we work on $\Omega_4$. Hence
\begin{align*}
    \AAbs{\taupi - \frac{\thr}{\gamma^j_o}} &\leq \frac{1}{n\gamma^j_o} +  
\sqrt{\frac{8 U}{3 n\gamma^j_o}\log(10\abs{J}\alpha^{-1})} .
\end{align*}
and we get the upper bound given by the theorem by lower bounding $\gamma^j_o$ by $\gamma_{\text{min}}$ and replacing $U$ by its value.

$2$. Convergence of $\tauw$. Since we work on $\Omega_3$ we have
\[
\abs{\W^{j,n}_{m,\tauw} - n \gamma^j_o \tauw} \leq \sqrt{2Un\gamma^j_o \log(10\abs{J}\alpha^{-1})}.
\]
Since the trajectories of $(\W^{j,n}_{m,t})_{t\geq 0}$ are continuous, by definition of 
$\tauw$ we have $\W^{j,n}_{m,\tauw} = n\thr$ so
\[
\AAbs{\tauw - \frac{\thr}{\gamma^j_o}} \leq \sqrt{\frac{2U\log(10\abs{J}\alpha^{-1})}{n\gamma^j_o}}.
\]
and we get the upper bound given by the theorem by lower bounding $\gamma^j_o$ by $\gamma_{\text{min}}$.
\newline

$3$. Study of the difference $\abs{\taupi - \tauw}$. Let us begin by bounding $\AAbs{\W^{j,n}_{m,\taupi} - \W^{j,n}_{m,\tauw}}$. We have
\begin{align}
    \AAbs{W^{j,n}_{m,\taupi} - W^{j,n}_{m,\tauw}} &= \AAbs{W^{j,n}_{m,\taupi} - n\thr} \nonumber\\
    &\leq \AAbs{W^{j,n}_{m,\taupi} - \lceil n\thr \rceil } + \abs{ \lceil n\thr \rceil - n\thr} \nonumber\\
    &\leq  \AAbs{W^{j,n}_{m,\taupi} - N^{j,n}_{m,\taupi}} + 1 \nonumber \\
    &\leq \log(n\norm{\gamma}_\infty U + 1) (C + \zeta \log(5K\abs{J}\alpha^{-1}) + 1 \nonumber\\
    & \leq \Box_{\norm{\gamma}_\infty, \theta, \gamma_{\text{min}}} \log(n)\log(\abs{J}\alpha^{-1}) \label{eq W - W poisson}
\end{align}
since we are on $\Omega_1$ \eqref{eq kurtz} and since $\taupi$ is bounded by $U$.

Now, let us apply Lemma~\ref{Control_brownian_2times} to the standard Brownian motion $B^j_t \coloneqq W^{j,n}_{m,(n\gamma^j_o)^{-1} t} - t$. By union bound, the event
\[ \Omega_5 \coloneqq \left\{\forall j\in J, 
\sup_{t, u \geq 0} \frac{|B^j_t-B^j_u|}{\sqrt{(|t-u|+2)[\log(t+1)+\log(u+1)+\frac{2}{c''}\log(5c'\abs{J}\alpha^{-1})]}} \leq c\right\}
\]
has probability more than $1-\frac{\alpha}{5}$. Let us work on $\Omega_1\cap\Omega_2\cap\Omega_3\cap\Omega_4\cap\Omega_5$. Since $\taupi$ and $\tauw$ are bounded by $U$ we have
\begin{align*}
   & \abs{B^j_{n\gamma^j_o \taupi} - B^j_{n\gamma^j_o\tauw}} \leq \Box_{\thr,\gamma_{\text{min}},\norm{\gamma}_\infty} \sqrt{((n |\taupi-\tauw|)\vee 1)\log(n\abs{J}\alpha^{-1})}.
\end{align*}
If $((n |\taupi-\tauw|)\vee 1) = 1$ then $\abs{\taupi - \tauw} \leq \frac{1}{n}$ and we have the result. Otherwise we have by definition of $B^j$
\begin{align*}
    \AAbs{W^{j,n}_{m,\taupi} - W^{j,n}_{m,\taupi} - n\gamma^j_o(\taupi - \tauw)} \leq  \Box_{\thr,\gamma_{\text{min}},\norm{\gamma}_\infty} \sqrt{n |\taupi-\tauw|\log(n\abs{J}\alpha^{-1})}
\end{align*}
thus
\begin{align*}
    \AAbs{W^{j,n}_{m,\taupi} - W^{j,n}_{m,\taupi}
    } \geq  n\gamma^j_o\abs{\taupi - \tauw} - \Box_{\thr,\gamma_{\text{min}},\norm{\gamma}_\infty} \sqrt{n |\taupi-\tauw|\log(n\abs{J}\alpha^{-1})}.
\end{align*}
Combining this with \eqref{eq W - W poisson}, we obtain the following inequality:
\[
X^2 - a(n) X - b(n) \leq 0
\]
with $X \coloneqq \sqrt{n\abs{\taupi - \tauw}}$, $a(n) \coloneqq  \Box_{\thr,\gamma_{\text{min}},\norm{\gamma}_\infty} \sqrt{\log(n\abs{J}\alpha^{-1})}$ and $b(n)\coloneqq\Box_{\norm{\gamma}_\infty, \theta, \gamma_{\text{min}}} \log(n)\log(\abs{J}\alpha^{-1}) $. This polynomial has two real roots, the largest one being $x_0 = \frac{a(n)+\sqrt{a(n)^2 + 4b(n)}}{2}$, so necessarily we have $X\leq x_0$, which implies that $X\leq a(n) + \sqrt{b(n)}$, \ie
\begin{align*}
     \sqrt{n\abs{\taupi - \tauw}} &\leq   \Box_{\thr,\gamma_{\text{min}},\norm{\gamma}_\infty} \left(\sqrt{\log(n\abs{J}\alpha^{-1})} + \sqrt{ \log(n)\log(\abs{J}\alpha^{-1})} \right) \\
     &\leq \Box_{\thr,\gamma_{\text{min}},\norm{\gamma}_\infty}  \sqrt{ \log(n)\log(\abs{J}\alpha^{-1})} 
\end{align*}
therefore
\[
\abs{\taupi -\tauw} \leq  \Box_{\thr,\gamma_{\text{min}},\norm{\gamma}_\infty} \frac{\log(n)}{n} \log(\abs{J}\alpha^{-1}).
\]
To conclude, we note that $\mathbb{P}(\Omega_1\cap\Omega_2\cap\Omega_3\cap\Omega_4\cap\Omega_5)\geq 1-\alpha$.

\subsection{Proof of Proposition~\ref{prop conv weights}}

To ensure clarity, we sometimes denote by $o_m$ the nature of object $m$. We will need the following lemma. 

\begin{lemma} \label{lemma inputs}
    Let $\alpha>0$ and $\gamma_{\text{min}} \coloneqq \min_{i\in I, o\in \Obj} \{ \gamma^i_o \text{ such that } \gamma^i_o >0 \}$. Suppose Assumption~\ref{assump nb obj} holds and that the following condition is fulfilled:
    \[
      M T_{\text{min}} \geq \frac{8}{3} \log\left(\frac{2\abs{I}\abs{J}}{\alpha}\right) \frac{\abs{\Obj}}{\gamma_{\text{min}}}.
    \]
    Then with probability $1-\alpha$, for every $i\in I$ and $j\in J$ we have
    \begin{equation*}
        \AAbs{\sum_{m \text{ s.t. } o_m \in j} \Pi^i_{m,T_{\text{min}}} - \frac{M T_{\text{min}}}{\abs{\Obj}} \sum_{o\in j} \gamma^i_o} \leq \sqrt{\frac{8 M T_{\text{min}}}{3\abs{\Obj}} \sum_{o\in j} \gamma^i_o \log(2\abs{I}\abs{J}\alpha^{-1})}.
    \end{equation*}
\end{lemma}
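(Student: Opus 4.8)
The plan is to recognize the quantity $\sum_{m \text{ s.t. } o_m\in j}\Pi^i_{m,T_{\text{min}}}$ as a single Poisson random variable and then invoke the Poisson deviation bound of Corollary~\ref{cor concent poisson} together with a union bound over the pairs $(i,j)$. First I would fix $i\in I$ and $j\in J$ and note that, by definition of the input-neuron activity, each summand $\Pi^i_{m,T_{\text{min}}}$ is a Poisson random variable with parameter $\gamma^i_{o_m}T_{\text{min}}$, and that these variables are mutually independent across $m$ by the stated independence of the processes $((\Pi^i_{m,t})_{t\geq 0})_{i\in I,\,1\leq m\leq M}$. Since a sum of independent Poisson variables is again Poisson, with parameter the sum of the individual parameters, $S^{i,j}\coloneqq\sum_{m \text{ s.t. } o_m\in j}\Pi^i_{m,T_{\text{min}}}$ is Poisson with parameter $T_{\text{min}}\sum_{m \text{ s.t. } o_m\in j}\gamma^i_{o_m}$. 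Grouping the objects by nature and applying Assumption~\ref{assump nb obj}, which guarantees that each nature $o$ is presented exactly $M/\abs{\Obj}$ times, this parameter equals $\Gamma^{i,j}\coloneqq\frac{MT_{\text{min}}}{\abs{\Obj}}\sum_{o\in j}\gamma^i_o$, precisely the centering constant appearing in the statement.

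With this identification, the claimed inequality is exactly the two-sided deviation bound $\AAbs{S^{i,j}-\Gamma^{i,j}}\leq\sqrt{\tfrac{8}{3}\Gamma^{i,j}\log(2\abs{I}\abs{J}\alpha^{-1})}$ for the Poisson variable $S^{i,j}$. I would obtain it by applying item~\ref{item3_corpoisson} of Corollary~\ref{cor concent poisson} to $S^{i,j}$ with the confidence level there taken to be $\alpha/(\abs{I}\abs{J})$, which gives the bound for each fixed pair $(i,j)$ with exceptional probability at most $\alpha/(\abs{I}\abs{J})$. A union bound over the $\abs{I}\abs{J}$ pairs then yields the simultaneous statement with probability at least $1-\alpha$.

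The only point requiring care — and the main obstacle — is verifying the hypothesis $\Gamma^{i,j}\geq\tfrac{8}{3}\log(2\abs{I}\abs{J}\alpha^{-1})$ demanded by the corollary, together with the degenerate case $\Gamma^{i,j}=0$. If $\sum_{o\in j}\gamma^i_o=0$, then $S^{i,j}=0$ almost surely and both sides of the inequality vanish, so the bound holds trivially and no concentration argument is needed. Otherwise there is at least one $o\in j$ with $\gamma^i_o>0$, whence $\sum_{o\in j}\gamma^i_o\geq\gamma_{\text{min}}$ by the very definition of $\gamma_{\text{min}}$, so that $\Gamma^{i,j}\geq\frac{MT_{\text{min}}\gamma_{\text{min}}}{\abs{\Obj}}$. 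The standing assumption $MT_{\text{min}}\geq\tfrac{8}{3}\log\!\left(\frac{2\abs{I}\abs{J}}{\alpha}\right)\frac{\abs{\Obj}}{\gamma_{\text{min}}}$ rearranges to exactly $\frac{MT_{\text{min}}\gamma_{\text{min}}}{\abs{\Obj}}\geq\tfrac{8}{3}\log(2\abs{I}\abs{J}\alpha^{-1})$, so the hypothesis of Corollary~\ref{cor concent poisson} is met for every non-degenerate pair, and the argument closes.
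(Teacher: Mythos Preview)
Your proposal is correct and follows essentially the same approach as the paper: identify $\sum_{m\,:\,o_m\in j}\Pi^i_{m,T_{\text{min}}}$ as a Poisson variable with parameter $\frac{MT_{\text{min}}}{\abs{\Obj}}\sum_{o\in j}\gamma^i_o$ via independence and Assumption~\ref{assump nb obj}, handle the degenerate case $\sum_{o\in j}\gamma^i_o=0$ separately, and otherwise apply Corollary~\ref{cor concent poisson}\ref{item3_corpoisson} at level $\alpha/(\abs{I}\abs{J})$ before taking a union bound.
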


\begin{proof}
  Let $\beta>0$, $j\in J$, $i\in I$. Then by independence of the processes $((\Pi^i_{m,t})_{t\geq 0})_{1\leq m \leq M}$, 
  the variable $\sum_{m \text{ s.t. } o_m \in j} \Pi^i_{m,T_{\text{min}}}$ follows a Poisson distribution with parameter $\sum_{m \text{ s.t. } o_m \in j} \gamma^i_{o_m} T_{\text{min}}$. Besides, according to Assumption~\ref{assump nb obj}, we have
  \begin{align*}
      \sum_{m \text{ s.t. } o_m \in j} \gamma^i_{o_m} &= \sum_{o\in j} \sum_{m, o_m = o} \gamma^i_o 
      = \frac{M}{\abs{\Obj}} \sum_{o\in j} \gamma^i_o.
  \end{align*}
  Let $\Omega_{i,j}$ be the event
  \[
  \Omega_{i,j} \coloneqq \left\{ \AAbs{\sum_{m \text{ s.t. } o_m \in j} \Pi^i_{m,T_{\text{min}}} - \frac{M T_{\text{min}}}{\abs{\Obj}} \sum_{o\in j} \gamma^i_o} \leq \sqrt{\frac{8 M T_{\text{min}}}{3\abs{\Obj}} \sum_{o\in j} \gamma^i_o \log(2\beta^{-1})} \right\}.
  \]
If $\sum_{o\in j} \gamma^i_o = 0$ then $\sum_{m \text{ s.t. } o_m \in j} \Pi^i_{m,T_{\text{min}}} = 0$ a.s. and $\mathbb{P}(\Omega_{i,j}) = 1$. Let us assume $\sum_{o\in j} \gamma^i_o > 0$. Then according to inequality \eqref{eq concent abs X - theta} of Corollary~\ref{cor concent poisson}, under the condition 
\begin{equation} \label{eq proof input}
    \frac{M T_{\text{min}}}{\abs{\Obj}} \sum_{o\in j} \gamma^i_o \geq \frac{8}{3} \log(2\beta^{-1})
\end{equation}
we have $\mathbb{P}(\Omega_{i,j})\geq 1-\beta$.  

Let us assume \eqref{eq proof input} holds for every $i,j$ such that $\sum_{o\in j} \gamma^i_o > 0$. Then 
\begin{align*}
    \mathbb{P}(\cup_{i\in I, j\in J} \Omega_{i,j}^c) &\leq \abs{I}\abs{J} \beta.
\end{align*}
  Let us choose $\beta = \frac{\alpha}{\abs{I}\abs{J}}$. Then \eqref{eq proof input} is implied by the condition
  \[
  M T_{\text{min}} \geq \frac{8}{3} \log\left(\frac{2\abs{I}\abs{J}}{\alpha}\right) \frac{\abs{\Obj}}{\gamma_{\text{min}}}
  \]
  so under this condition $\mathbb{P}\left(\cap_{i\in I, j\in J} \Omega_{i,j}\right) \geq 1-\alpha$ and we have the result.
\end{proof}

Now let us prove the proposition. Let $\alpha >0$ and suppose \eqref{eq M tmin} holds. Let us work on the event $\Omega \coloneqq \cap_{i\in I, j\in J} \Omega_{i,j}$ where the $\Omega_{i,j}$ are defined in the proof of Lemma~\ref{lemma inputs}. Then according to 
Lemma~\ref{lemma inputs}, $\mathbb{P}(\Omega)\geq 1-\alpha$. Let $i\in I$, $j\in J$ and for $m\in \{1,\dots,M\}$ let 
\[\Bar{g}^{i\to j}_m = \left\{
    \begin{array}{lll}
         \gamma^i_{o_m} \times \frac{M}{M^j} &\text{if } o \in j  \\ 
          \\
        - \gamma^i_{o_m} \!\times\! \frac{M}{M^{j'}} \!\times\! \frac{1}{\abs{J}-1} &\text{if } o \in j'{\neq j} 
    \end{array}
\right.
\]
Let $\Bar{G}^{i\to j}_M \coloneqq \sum_{m=1}^M \Bar{g}^{i\to j}_m$ and let $\Bar{w}^{i\to j}_{M+1} \coloneqq \dfrac{\exp(\eta \Bar{G}^{i\to j}_M)}{\sum_{l\in I} \exp(\eta \Bar{G}^{l\to j}_M)}$.
\newline

$\bullet$ Study of the difference $\abs{G^{i\to j}_M - \Bar{G}^{i\to j}_M}$.
\newline

According to Assumption~\ref{assump nb obj} we have
\begin{align*}
    \Bar{G}^{i\to j}_M &= \frac{M}{M^j}\sum_{m \text{ s.t. }o_m\in j} \gamma^i_{o_m} - \frac{1}{\abs{J}- 1} \sum_{j'\neq j} \frac{M}{M^{j'}} \sum_{m \text{ s.t. }o_m\in j'} \gamma^i_{o_m} \\
    &= \frac{M}{M^j} \frac{M}{\abs{\Obj}} \sum_{o\in j} \gamma^i_o -  \frac{1}{\abs{J}- 1} \sum_{j'\neq j} \frac{M}{M^{j'}}  \frac{M}{\abs{\Obj}} \sum_{o\in j'} \gamma^i_o
\end{align*}
However, according to Assumption~\ref{assump nb obj}, for any $j\in J$ we have
\[M^j =  n^j \times \frac{M}{\abs{\Obj}}\]
where $n^j$ is the number of object natures belonging to category $j$. Therefore we have
\begin{align*}
    \Bar{G}^{i\to j}_M &= M \left(\frac{1}{n^j} \sum_{o\in j} \gamma^i_o - \frac{1}{\abs{J}-1}\sum_{j'\neq j} \frac{1}{n^{j'}}\sum_{o\in j'} \gamma^i_o \right) \\
    &= M d^{i\to j}
\end{align*}
and since we work on $\Omega_{i,j}$ we have, where $o$ is the nature of object $m$:
\begin{align*}
    \abs{G^{i\to j}_M - \Bar{G}^{i\to j}_M} &\leq 2 \max_{j'\in J} \frac{\abs{\Obj}}{n^{j'}} \AAbs{ \sum_{m\text{ s.t. } o_m \in j'} (\widehat{\gamma^i_o} - \gamma^i_o) } \\
    & \leq  2  \abs{\Obj} \max_{j'\in J} \AAbs{ \sum_{m\text{ s.t. } o_m \in j'} \left(\frac{\Pi^i_{m,T_{\text{min}}}}{T_{\text{min}}} - \gamma^i_o\right) } \\
   & \leq 2 \sqrt{\frac{8 M \abs{\Obj}}{3 T_{\text{min}}} \sum_{o\in j} \gamma^i_o \log(2\abs{I}\abs{J}\alpha^{-1})} \\
   &\leq 2 \abs{\Obj} \sqrt{\frac{8 M}{3 T_{\text{min}}} \norm{\gamma}_{\infty} \log(2\abs{I}\abs{J}\alpha^{-1})}
\end{align*}
where we bound $\frac{1}{n^{j'}}$ by $1$ to go from the first line to the second, and we bound $ \sum_{o\in j} \gamma^i_o $ by $\abs{\Obj}\norm{\gamma}_{\infty}$ to go from the third line to the fourth.
\newline

$\bullet$ Study of the difference $\norm{\Bar{w}^{j}_{M+1} - w^{j}_{M+1}}_2$.
\newline

According to \cite{gao2018properties} (Proposition $4$), the softmax function defined on $\R^d$ with temperature $\eta$ is $\eta$-Lipschitz, so we have
\begin{align*}
    \norm{w^{j}_{M+1} - \Bar{w}^{j}_{M+1}}_2 &\leq \eta \sqrt{\abs{I}} \times  2 \abs{\Obj} \sqrt{\frac{8 M}{3 T_{\text{min}}} \norm{\gamma}_{\infty} \log(2\abs{I}\abs{J}\alpha^{-1})} \\
    &=  2 \abs{\Obj}\eta_0 \sqrt{\frac{8 \abs{I}}{3 T_{\text{min}}} \norm{\gamma}_{\infty} \log(2\abs{I}\abs{J}\alpha^{-1})} 
\end{align*}
by replacing $\eta$ by its value $\eta_0 M^{-1/2}$.
\newline

$\bullet$ Study of the difference $\norm{\Bar{w}^{j}_{M+1} - w^{j}_\infty}_2$.
\newline

We apply Proposition $30$ of \cite{jaffard2024chani}. For $j\in J$, if $I^j = I$ then for every $i\in I$
\[
\Bar{w}^{i\to j}_{M+1} = w^{i\to j}_\infty.
\]
Otherwise if \(I^j\neq I\), we have
\[
\abs{w^{i\to j}_\infty - \Bar{w}^{i\to j}_{M+1}} \leq \frac{1}{\abs{I^j}}\max\left(1, \frac{\abs{I} - \abs{I^j}}{\abs{I^j}} \right)\exp(- \eta_0\delta^j \sqrt{M})
\]
and 
\[
\norm{\Bar{w}^{j}_{M+1} - w^{j}_\infty}_2 \leq  \frac{\sqrt{\abs{I}}}{\abs{I^j}}\max\left(1, \frac{\abs{I} - \abs{I^j}}{\abs{I^j}} \right)\exp(- \eta_0\delta^j \sqrt{M}).
\]
We get the result by combining the error terms.

\subsection{Proof of Theorem~\ref{th correct classif hawkes}}  \label{proof th correct classif hawkes}

We will need several lemmas. Let $\Lambda^{j}_{m,t} \coloneqq \int_0^t \lambda^{j}_{m,s}ds $ \label{def:lambdajm}
be the compensator of the process $(N^{j}_{m,t})_{t\geq 0}$. We denote $\Bar{\Lambda}^j_{m,t} \coloneqq \mathbb{E}[\Lambda^j_{m,t} 
\mid \mathcal{F}_{m-1}]$.

\begin{lemma} \label{lemma control compens}
    Let $x > 0$, $m\in \{1,\dots,M+1\}$, $j\in J$, $t\geq 0$. Then we have
    \begin{equation} \label{eq lamb - e}
     \mathbb{P}\left( \Lambda^j_{m,t} - \Bar{\Lambda}^j_{m,t} \geq \norm{g}_1 \frac{x}{3}  +  \sqrt{2x\norm{g}_1 \Bar{\Lambda}^j_{m,t} }  \mid \mathcal{F}_{m-1}\right) \leq e^{-x}
\end{equation}
and
\begin{equation} \label{eq e - lamb}
     \mathbb{P}\left( \Bar{\Lambda}^j_{m,t} - \Lambda^j_{m,t} \geq \norm{g}_1 \frac{x}{3}  +  \sqrt{2x\norm{g}_1 \Bar{\Lambda}^j_{m,t} }  \mid \mathcal{F}_{m-1}\right) \leq e^{-x}.
\end{equation}
\end{lemma}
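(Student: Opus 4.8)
The plan is to exploit the fact that, although $\Lambda^j_{m,t}$ is the compensator of a Hawkes process, its intensity $\lambda^j_{m,t}$ depends only on the \emph{input} Poisson processes $(\Pi^i_m)_{i\in I}$ and not on the output process $N^j_m$ itself; hence, conditionally on $\mathcal{F}_{m-1}$, it is a linear functional of independent Poisson processes, for which a Bernstein-type inequality is available. First I would apply Fubini to rewrite
\[
\Lambda^j_{m,t} = \int_0^t \lambda^j_{m,s}\, ds = \sum_{i\in I} w^{i\to j}_m \int_0^t h_t(u)\, d\Pi^i_{m,u}, \qquad h_t(u) = \int_u^t g(s-u)\, ds,
\]
noting that $0 \le h_t(u) \le \norm{g}_1$ and that $\int_0^t h_t(u)\,du = \mathbb{G}(t)$, so that $\bar{\Lambda}^j_{m,t} = \big(\sum_{i\in I} w^{i\to j}_m \gamma^i_o\big)\mathbb{G}(t)$. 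Since the weights $w^{i\to j}_m$ are $\mathcal{F}_{m-1}$-measurable while the $(\Pi^i_m)_{i\in I}$ are mutually independent, independent of $\mathcal{F}_{m-1}$, and Poisson with intensities $\gamma^i_o$ ($o$ the nature of object $m$), conditioning on $\mathcal{F}_{m-1}$ freezes the weights and leaves a functional of independent Poisson processes.

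The second step is to compute the conditional Laplace transform via the exponential formula for Poisson integrals (Campbell's theorem): for every $\xi$,
\[
\log \E\!\left[ e^{\xi(\Lambda^j_{m,t} - \bar{\Lambda}^j_{m,t})} \mid \mathcal{F}_{m-1}\right] = \sum_{i\in I} \gamma^i_o \int_0^t \big(e^{\xi w^{i\to j}_m h_t(u)} - 1 - \xi w^{i\to j}_m h_t(u)\big)\, du.
\]
I would then bound the integrand with the elementary inequality $e^y - 1 - y \le \frac{y^2/2}{1 - y/3}$ valid for $0 \le y < 3$, and use $w^{i\to j}_m h_t(u) \le \norm{g}_1$ together with $(w^{i\to j}_m h_t(u))^2 \le \norm{g}_1\, w^{i\to j}_m h_t(u)$. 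This controls the cumulant generating function by $\frac{\xi^2 v / 2}{1 - \xi\norm{g}_1/3}$ with variance factor $v \coloneqq \norm{g}_1 \bar{\Lambda}^j_{m,t}$ and scale parameter $\norm{g}_1$, i.e. a sub-gamma bound.

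The final step is the standard Chernoff optimisation: applying Markov's inequality to $e^{\xi(\Lambda^j_{m,t} - \bar{\Lambda}^j_{m,t})}$ and optimising $\xi \in (0, 3/\norm{g}_1)$ --- equivalently, invoking the classical inversion of the sub-gamma tail --- yields
\[
\P\!\left( \Lambda^j_{m,t} - \bar{\Lambda}^j_{m,t} \ge \sqrt{2 x\, \norm{g}_1 \bar{\Lambda}^j_{m,t}} + \norm{g}_1 \frac{x}{3} \;\Big|\; \mathcal{F}_{m-1}\right) \le e^{-x},
\]
which is \eqref{eq lamb - e}. For the lower tail \eqref{eq e - lamb} I would repeat the argument with $-\xi$; since $e^{-y} - 1 + y \le y^2/2$ for $y \ge 0$, the conditional log-Laplace is bounded by $\xi^2 v/2$, giving the even stronger sub-Gaussian tail $\exp(-u^2/(2v))$ and hence \eqref{eq e - lamb} a fortiori. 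There is no serious obstacle here: the only conceptual point is recognising that the compensator is a Poisson functional (rather than a self-exciting one) once we condition on $\mathcal{F}_{m-1}$; the rest is a textbook Bernstein inequality, where the only care needed is the boundedness $h_t \le \norm{g}_1$ and tracking the constants through the sub-gamma inversion.
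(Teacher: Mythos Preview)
Your proposal is correct and follows essentially the same route as the paper: the same Fubini rewriting $\Lambda^j_{m,t}=\sum_i w^{i\to j}_m\int_0^t h_t(u)\,d\Pi^i_{m,u}$, the same Campbell/exponential formula for Poisson integrals, the same sub-gamma control of the cumulant generating function (the paper expands $e^y$ as a series and bounds $h_t^{k-1}\le\|g\|_1^{k-1}$, $(w^{i\to j}_m)^k\le w^{i\to j}_m$, which is equivalent to your $e^y-1-y\le\frac{y^2/2}{1-y/3}$ together with $(w h_t)^2\le\|g\|_1\,w h_t$), and the same Chernoff inversion (the paper cites Massart's $f^{-1}(v)=v+\sqrt{2v}$). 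For the lower tail the paper repeats the upper-tail argument verbatim and obtains the same sub-gamma bound, whereas you observe the sharper sub-Gaussian tail; either way \eqref{eq e - lamb} follows.
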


\begin{proof}
    Let $x>0$. Let us use the Cramér-Chernoff method. Note that the weights $w^j_m$ are $\mathcal{F}_{m-1}-$measurable. Let $h_t(u) \coloneqq \int_u^t g(s-u) ds$, $\xi >0$. 
    \label{def:htu}Then according to Fubini-Tonelli theorem, since $g$ is positive and the random measures defined by the processes $(\Pi^i_{m,t})_{t\geq 0}$ are $\sigma$-finite we have almost surely
\begin{align*}
    \Lambda^j_{m,t} &=\sum_{i\in I} w^{i\to j}_m \int_{s=0}^t \int_{u = 0}^s g(s-u)d\Pi^i_{m,u}ds \\
    &= \sum_{i\in I} w^{i\to j}_m \int_{u=0}^t\int_{s=u}^t g(s-u) ds d\Pi^i_{m,u} \\
    &= \sum_{i\in I} w^{i\to j}_m \int_{u=0}^t h_t(u) d\Pi^i_{m,u}.
\end{align*}
Therefore
\[
\mathbb{E}[e^{\xi \Lambda^j_{m,t}}\mid \mathcal{F}_{m-1}] = \mathbb{E}\left[ \exp\left( \sum_{i\in I} w^{i\to j}_m \xi \int_{u=0}^t h_t(u) d\Pi^i_{m,u} \right)\mid \mathcal{F}_{m-1}\right]
\]
and by independence of the processes $((\Pi^i_{m,t})_{t\geq 0})_{i\in I}$ we have
\[
\mathbb{E}[e^{\xi \Lambda^j_{m,t}}\mid \mathcal{F}_{m-1}] = \prod_{i\in I}\mathbb{E}\left[ \exp\left( w^{i\to j}_m \xi \int_{u=0}^t h_t(u) d\Pi^i_{m,u} \right)\mid \mathcal{F}_{m-1}\right].
\]
Besides, according to Proposition $6$ of \cite{reynaud2003adaptive}, 
 \begin{align*}
    \mathbb{E}[e^{\xi \Lambda^j_{m,t}} \mid \mathcal{F}_{m-1}] &= \prod_{i\in I} \exp\left( \gamma^i_o \int_0^t(e^{\xi w^{i\to j}_m h_t(u)}- 1)du \right) \\
    &=  \exp\left( \sum_{i\in I} \gamma^i_o \int_0^t(e^{\xi w^{i\to j}_m h_t(u)}- 1)du \right) \\
    &=  \exp\left( \sum_{i\in I} \gamma^i_o \int_0^t \sum_{k=1}^\infty \frac{(\xi w^{i\to j}_m h_t(u))^k}{k!} du \right) \\
    & = \exp(\xi \Bar{\Lambda}^j_{m,t})\exp\left( \sum_{i\in I} \gamma^i_o \int_0^t \sum_{k=2}^\infty \frac{(\xi w^{i\to j}_m h_t(u))^k}{k!} du \right) \\
    & \leq \exp(\xi \Bar{\Lambda}^j_{m,t})\exp\left( \Bar{\lambda}^j_m \mathbb{G}(t)\sum_{k=2}^\infty \frac{\xi^k\norm{g}_1^{k-1}}{k!}  \right) 
\end{align*}
where we exchanged the sum and integral because the series of functions converges uniformly, we bounded $h_t(u)$ by $\norm{g}_1$ and $(w^{i\to j}_m)^k$ by $w^{i\to j}_m$ (because $w^{i\to j}_m$ is bounded by $1$). 
\newline

Suppose $\xi \norm{g}_1 < 1$. Since $\Bar{\lambda}^j_m \mathbb{G}(t) = \Bar{\Lambda}^j_{m,t}$ we have
\begin{align*}
    \mathbb{E}[e^{\xi(\Lambda^j_{m,t} - \Bar{\Lambda}^j_{m,t})}\mid \mathcal{F}_{m-1}] &\leq \exp\left(  \Bar{\Lambda}^j_{m,t} \frac{\xi^2\norm{g}_1}{2}\sum_{k=1}^\infty \frac{(\xi \norm{g}_1)^{k-2}}{3^{k-2}}\right) \\
    &= \exp\left(   \Bar{\Lambda}^j_{m,t} \frac{\xi^2\norm{g}_1}{2} \frac{1}{1 - \frac{\xi \norm{g}_1}{3}}\right) .
\end{align*}
By applying Markov's inequality to the variable $e^{\xi(\Lambda^j_{m,t} - \Bar{\Lambda}^j_{m,t})}$ conditionally to the $\sigma$-algebra $\mathcal{F}_{m-1}$, we get
\begin{align*}
    \mathbb{P}\left( \Lambda^j_{m,t} - \Bar{\Lambda}^j_{m,t} \geq x \mid \mathcal{F}_{m-1}\right) &\leq e^{-\xi x} \exp\left(   \Bar{\Lambda}^j_{m,t} \frac{\xi^2\norm{g}_1}{2} \frac{1}{1 - \frac{\xi \norm{g}_1}{3}}\right)\\
    &= \exp\left( -\xi x +  \frac{ \Bar{\Lambda}^j_{m,t} \xi^2\norm{g}_1
    }{2\left(1 - \frac{\xi\norm{g}_1}{3}\right)}\right).
\end{align*}
Then similarly as in section $2.4$ of \cite{massartconcentration}, we have
\[
\sup_{\xi \in (0,3\norm{g}_1^{-1})} \left\{\xi x -  \frac{\Bar{\Lambda}^j_{m,t} \xi^2\norm{g}_1}{2\left(1 - \frac{\xi\norm{g}_1}{3}\right)} \right\} = \frac{9}{\norm{g}_1} \Bar{\Lambda}^j_{m,t} f\left( \frac{x}{3  \Bar{\Lambda}^j_{m,t}  } \right)
\]
where $f : v \mapsto 1 + v - \sqrt{1+2v}$. Therefore
\begin{align*}
     &\mathbb{P}\left( \Lambda^j_{m,t} - \Bar{\Lambda}^j_{m,t}
     \geq x \mid \mathcal{F}_{m-1}\right) \\
     & \leq \exp\left( -\frac{9}{\norm{g}_1} \Bar{\Lambda}^j_{m,t} f\left( \frac{x}{3  \Bar{\Lambda}^j_{m,t}  } \right) \right).
\end{align*}

Besides, $f$ is an increasing function from $\R^*_+$ to $\R^*_+$ with inverse $f^{-1} : v \mapsto v + \sqrt{2v}$ so we have
\[
 \mathbb{P}\left( \Lambda^j_{m,t} - \Bar{\Lambda}^j_{m,t} \geq 3 \Bar{\Lambda}^j_{m,t} f^{-1}\left(\frac{\norm{g}_1 x}{9\Bar{\Lambda}^j_{m,t}}\right) \mid \mathcal{F}_{m-1}\right) \leq e^{-x}
\]
and by integrating we get \eqref{eq lamb - e}. Now, let us prove \eqref{eq e - lamb}. Let us consider $-\Lambda^j_{m,t}$ instead of $\Lambda^j_{m,t}$. Then similarly we have 
\[
\mathbb{E}[e^{ -\xi \Lambda^j_{m,t}}\mid \mathcal{F}_{m-1}] \leq  
\exp(-\xi \Bar{\Lambda}^j_{m,t})\exp\left( \Bar{\lambda}^j_m \mathbb{G}(t)\sum_{k=2}^\infty \frac{\xi^k\norm{g}_1^{k-1}}{k!} du \right) 
\]
hence
\[
\mathbb{E}[e^{ \xi( \Bar{\Lambda}^j_{m,t} - \Lambda^j_{m,t} )}\mid \mathcal{F}_{m-1}] \leq \exp\left( \Bar{\lambda}^j_m \mathbb{G}(t)\sum_{k=2}^\infty \frac{\xi^k\norm{g}_1^{k-1}}{k!} du \right) 
\]
so the same computations apply to bound $\mathbb{E}[e^{ \xi( \Bar{\Lambda}^j_{m,t} - \Lambda^j_{m,t} )}\mid \mathcal{F}_{m-1}]$, and by applying Markov's inequality to the variable $e^{ \xi( \Bar{\Lambda}^j_{m,t} - \Lambda^j_{m,t} )}$ we get the same bound as in \eqref{eq lamb - e} for \eqref{eq e - lamb}.

Now that we proved \eqref{eq lamb - e} and \eqref{eq e - lamb}, we can combine them to have the result.
\end{proof}

\begin{lemma} \label{lemma control hawkes}
 Let $x > 0$, $m\in \{1,\dots,M+1\}$, $j\in J$, $t\geq 0$. Let $e(x,t)\coloneqq\norm{g}_1 \frac{x}{3}  +  \sqrt{2x\norm{g}_1 \Bar{\Lambda}^j_{m,t} }$. Then we have
\begin{equation} \label{eq concent N - e}
    \mathbb{P}\left( 
 N^j_{m,t} -  \Bar{\Lambda}^j_{m,t} \leq \sqrt{2( \Bar{\Lambda}^j_{m,t} + e(x,t))x} + \frac{x}{3} + e(x,t)
\right) \geq 1 - 2 e^{-x}
\end{equation}
and
\begin{equation} \label{eq concent e - N}
    \mathbb{P}\left(  \Bar{\Lambda}^j_{m,t} - N^j_{m,t}  \leq \sqrt{2( \Bar{\Lambda}^j_{m,t} + e(x,t)) x} + \frac{x}{3} + e(x,t) \right) \geq 1 - 3e^{-x}.
\end{equation}
\end{lemma}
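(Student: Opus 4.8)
The plan is to exploit the purely feedforward structure of the network. The intensity $\lambda^j_{m,t} = \sum_{i\in I}\int_0^t w^{i\to j}_m g(t-s)\,d\Pi^i_{m,s}$ of output neuron $j$ depends only on the input processes $(\Pi^i_m)_{i\in I}$ and on the $\mathcal{F}_{m-1}$-measurable weights, and never on the past spikes of $N^j_m$ itself. Hence, conditionally on the $\sigma$-algebra $\mathcal{G}$ generated by $\mathcal{F}_{m-1}$ together with the entire input realization $((\Pi^i_{m,s})_{s\geq 0})_{i\in I}$, the trajectory $(\lambda^j_{m,s})_{s\geq 0}$ is deterministic, so $N^j_m$ is an inhomogeneous Poisson process and $N^j_{m,t}\mid\mathcal{G}$ is a Poisson variable with the $\mathcal{G}$-measurable parameter $\Lambda^j_{m,t} = \int_0^t\lambda^j_{m,s}\,ds$. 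This reduces controlling $N^j_{m,t}$ around $\Lambda^j_{m,t}$ to a purely Poissonian deviation problem in which $\Lambda^j_{m,t}$ is frozen as a constant.

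First I would establish, conditionally on $\mathcal{G}$, two one-sided Poisson bounds. Since $\log\mathbb{E}[e^{\xi(N^j_{m,t}-\Lambda^j_{m,t})}\mid\mathcal{G}] = \Lambda^j_{m,t}(e^\xi - 1 - \xi) \leq \frac{\Lambda^j_{m,t}\xi^2}{2(1-\xi/3)}$ for $0<\xi<3$, the Cram\'er--Chernoff computation already carried out in the proof of Lemma~\ref{lemma control compens}, with $\norm{g}_1$ replaced by $1$ and $\Bar{\Lambda}^j_{m,t}$ by $\Lambda^j_{m,t}$, gives the upper tail
\[
\mathbb{P}\!\left(N^j_{m,t}-\Lambda^j_{m,t} \geq \sqrt{2\Lambda^j_{m,t}\,x} + \tfrac{x}{3}\,\Big|\,\mathcal{G}\right) \leq e^{-x}.
\]
For the lower tail, the sharper estimate $e^{-\xi}-1+\xi \leq \frac{\xi^2}{2}$ (valid for all $\xi\geq 0$) yields sub-Gaussian control with variance $\Lambda^j_{m,t}$, hence
\[
\mathbb{P}\!\left(\Lambda^j_{m,t}-N^j_{m,t} \geq \sqrt{2\Lambda^j_{m,t}\,x}\,\Big|\,\mathcal{G}\right) \leq e^{-x},
\]
which explains the absence of a $\tfrac{x}{3}$ term on the lower side.

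Next I would glue these to Lemma~\ref{lemma control compens}, which controls $\Lambda^j_{m,t}$ around $\Bar{\Lambda}^j_{m,t}$ by $\pm e(x,t)$. For \eqref{eq concent N - e} I would split $N^j_{m,t}-\Bar{\Lambda}^j_{m,t} = (N^j_{m,t}-\Lambda^j_{m,t}) + (\Lambda^j_{m,t}-\Bar{\Lambda}^j_{m,t})$; on the event of \eqref{eq lamb - e} one gets simultaneously the additive term $\Lambda^j_{m,t}-\Bar{\Lambda}^j_{m,t}\leq e(x,t)$ and the variance bound $\Lambda^j_{m,t}\leq\Bar{\Lambda}^j_{m,t}+e(x,t)$, so $\sqrt{2\Lambda^j_{m,t}x}\leq\sqrt{2(\Bar{\Lambda}^j_{m,t}+e(x,t))x}$. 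A single use of \eqref{eq lamb - e} plus the Poisson upper tail then gives \eqref{eq concent N - e} by a union bound over two events, whence $1-2e^{-x}$. For \eqref{eq concent e - N} I would split $\Bar{\Lambda}^j_{m,t}-N^j_{m,t} = (\Bar{\Lambda}^j_{m,t}-\Lambda^j_{m,t}) + (\Lambda^j_{m,t}-N^j_{m,t})$: now \eqref{eq e - lamb} supplies $\Bar{\Lambda}^j_{m,t}-\Lambda^j_{m,t}\leq e(x,t)$, but to bound the variance term by $\sqrt{2(\Bar{\Lambda}^j_{m,t}+e(x,t))x}$ I additionally need $\Lambda^j_{m,t}\leq\Bar{\Lambda}^j_{m,t}+e(x,t)$, i.e. \eqref{eq lamb - e} as well. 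The lower statement thus costs three events — two from Lemma~\ref{lemma control compens} and one Poisson tail — which is exactly the $1-3e^{-x}$. Finally, since all the conditional tail bounds hold with constants independent of the realization, taking expectations over $\mathcal{G}$ removes the conditioning and produces the stated unconditional probabilities.

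The only genuinely delicate step is the first one: recognizing that, because the output intensity is driven exclusively by input spikes, $N^j_{m,t}$ is conditionally Poisson with the \emph{random} parameter $\Lambda^j_{m,t}$, and that $\Lambda^j_{m,t}$ is $\mathcal{G}$-measurable and can therefore be treated as a constant in the conditional Chernoff bound. After that, the proof is routine Cram\'er--Chernoff bookkeeping and a union bound, the main care being to track which — and how many — of the two inequalities of Lemma~\ref{lemma control compens} are invoked on each side, so as to recover the asymmetric constants $2e^{-x}$ and $3e^{-x}$.
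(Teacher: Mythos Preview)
Your proof is correct and structurally parallel to the paper's, but uses a different tool for the key deviation $N^j_{m,t}-\Lambda^j_{m,t}$. The paper invokes the martingale inequality~3.3 of \cite{hansen2015lasso} (with $H^j\equiv\pm 1$, $B=1$), which yields directly
\[
\mathbb{P}\Big(\pm\big(N^j_{m,t}-\Lambda^j_{m,t}\big)\geq\sqrt{2\rho x}+\tfrac{x}{3}\ \text{ and }\ \Lambda^j_{m,t}\leq\rho\,\Big|\,\mathcal{F}_{m-1}\Big)\leq e^{-x}
\]
for any $\mathcal{F}_{m-1}$-measurable $\rho$, then sets $\rho=\Bar\Lambda^j_{m,t}+e(x,t)$ and combines with Lemma~\ref{lemma control compens} exactly as you do. Your route is more elementary: because the output intensity here is driven solely by the input Poisson processes (no self-excitation), conditioning on $\mathcal{G}$ freezes $\Lambda^j_{m,t}$ and reduces everything to plain Poisson tails, so no external martingale lemma is needed. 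The Hansen--Reynaud-Bouret--Rivoirard inequality, by contrast, would still apply if $N^j_m$ had genuine self-interaction in its intensity, so the paper's argument is the more portable one. Both approaches recover identical constants; your observation that the lower Poisson tail is purely sub-Gaussian (so the $\tfrac{x}{3}$ is unnecessary on that side) is a small sharpening the paper does not exploit.
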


\begin{proof}
Let $x>0$, $j\in J$, $t\geq 0$. Let us prove \eqref{eq concent N - e}. Let us work conditionally to $\mathcal{F}_{m-1}$. Let us use inequality $3.3$ of \cite{hansen2015lasso} with $N = (N^l_{m,t})_{l\in I\cup J}$, $H = (H^l)_{l\in I\cup J}$ with $H^j \equiv 1$ and $H^l \equiv 0$ if $l\neq j$, and $B = 1$. Let $\rho \in \R$ to choose later. Then we have
\[
\mathbb{P}\left( N^j_{m,t} - \Lambda^j_{m,t} \geq \sqrt{2\rho x} + \frac{x}{3} \ \text{and} \ \Lambda^j_{m,t} \leq \rho \mid \mathcal{F}_{m-1}\right) \leq e^{-x}.
\]
Let us choose $\rho \coloneqq \Bar{\Lambda}^j_{m,t} + e(x,t)$ which is $\mathcal{F}_{m-1}$-measurable. Let $\Omega_1$ be the event
\[
\Omega_1 \coloneqq \left\{ N^j_{m,t} - \Lambda^j_{m,t} \geq \sqrt{2\rho x} + \frac{x}{3} \right\}
\]
and $\Omega_2$ be the event
\[
\Omega_2 \coloneqq \left\{ \Lambda^j_{m,t} \leq \rho\right\}.
\]
Then $\mathbb{P}(\Omega_1 \cap \Omega_2^c)\leq e^{-x}$ and according to Lemma~\ref{lemma control compens} inequality \eqref{eq lamb - e}, $\mathbb{P}(\Omega_2^c)\leq e^{-x}$. Besides,
\[
\Omega_1^c \cap \Omega_2 \subset \left\{ N^j_{m,t} -  \Bar{\Lambda}^j_{m,t} \leq \sqrt{2\rho x} + \frac{x}{3} + e(x,t) \right\}
\]
and by integrating we get
\[
\mathbb{P}(\Omega_1 \cup \Omega_2^c) = \mathbb{P}(\Omega_1 \cap \Omega_2) + \mathbb{P}(\Omega_2^c) \leq 2e^{-x}
\]
therefore
\[
\mathbb{P}\left( 
 N^j_{m,t} -  \Bar{\Lambda}^j_{m,t} \leq \sqrt{2\rho x} + \frac{x}{3} + e(x,t)
\right) \geq \mathbb{P}(\Omega_1^c \cap \Omega_2) \geq 1 - 2 e^{-x}
\]
By replacing $\rho$ by its value we get \eqref{eq concent N - e}.
\newline

Now, let us prove \eqref{eq concent e - N}. Similarly, let us use inequality $3.3$ of \cite{hansen2015lasso} with $H^j \equiv -1$ and $H^l \equiv 0$ if $l\neq j$. Let $\rho\in \R$ to choose later. Then we get 
\[
\mathbb{P}\left(  \Lambda^j_{m,t} - N^j_{m,t} \geq \sqrt{2\rho x} + \frac{x}{3} \ \text{and} \ \Lambda^j_{m,t} \leq \rho \mid \mathcal{F}_{m-1}\right)\leq e^{-x}.
\]
Let us choose the same $\rho$ as before: $\rho \coloneqq \Bar{\Lambda}^j_{m,t} + e(x,t)$. Let $\Omega_3$ be the event
\[
\Omega_3 \coloneqq \left\{ \Lambda^j_{m,t} - N^j_{m,t} \geq \sqrt{2\rho x} + \frac{x}{3} \right\}
\]
and $\Omega_4$ be the event
\[
\Omega_4 \coloneqq \left\{ \Bar{\Lambda}^j_{m,t} -\Lambda^j_{m,t} \geq e(x,t)\right\}.
\]
Then $\mathbb{P}(\Omega_3 \cap \Omega_2)\leq e^{-x}$ and according to Lemma \eqref{lemma control compens} inequality \eqref{eq e - lamb}, $\mathbb{P}(\Omega_4^c)\leq e^{-x}$. Besides,
\[
\Omega_3^c\cap \Omega_2 \cap \Omega_4^c \subset \Omega_3^c\cap \Omega_4^c \subset  \left\{   \Bar{\Lambda}^j_{m,t} - N^j_{m,t}  \leq \sqrt{2\rho x} + \frac{x}{3} + e(x,t) \right\}
\]
and by integrating we have
\[
\mathbb{P}(\Omega_3\cup \Omega_2^c \cup \Omega_4) \leq \mathbb{P}(\Omega_4) + \mathbb{P}(\Omega_3 \cap \Omega_2) + \mathbb{P}(\Omega_2^c) \leq 3e^{-x}
\]
so we have
\[
\mathbb{P}\left(  \Bar{\Lambda}^j_{m,t} - N^j_{m,t}  \leq \sqrt{2\rho x} + \frac{x}{3} + e(x,t) \right) \geq 1 - 3e^{-x}.
\]
By replacing $\rho$ by its value we get \eqref{eq concent e - N}.
\end{proof}

\begin{lemma} \label{lemma control hawkes avec box}
    Let $m\in \{1,\dots,M+1\}$, $t\geq 0$, $\alpha>0$. Then with probability more than $1-\alpha$, we have
        \begin{equation*} 
\forall j\in J, \quad \AAbs{ N^j_{m,t} - \Bar{\Lambda}^j_{m,t} } \leq \Box \left(
\sqrt{\Bar{\Lambda}^j_{m,t}(\norm{g}_1 \vee 1) \log\left(\frac{\abs{J}}{\alpha}\right)} + (\norm{g}_1 \vee 1) \log\left(\frac{\abs{J}}{\alpha}\right)
\right).
\end{equation*}

\end{lemma}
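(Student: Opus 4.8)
The plan is to treat this lemma as a repackaging of the two-sided control already established in Lemma~\ref{lemma control hawkes}: the genuine probabilistic work (the Cram\'er--Chernoff bound on the compensator) is done there, and here I only need to optimize the free deviation parameter, take a union bound over the output neurons, and clean up the algebra.

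First I would apply both inequalities \eqref{eq concent N - e} and \eqref{eq concent e - N} of Lemma~\ref{lemma control hawkes} with the common choice $x = \log(5\abs{J}/\alpha)$. For a fixed $j$, the event on which one of the two one-sided bounds fails has probability at most $2e^{-x} + 3e^{-x} = 5e^{-x}$, so a union bound over the $\abs{J}$ output neurons yields a total failure probability of at most $\abs{J}\cdot 5e^{-x} = \alpha$. Hence, with probability at least $1-\alpha$, for every $j\in J$ simultaneously,
\[
\AAbs{N^j_{m,t} - \Bar{\Lambda}^j_{m,t}} \leq \sqrt{2\left(\Bar{\Lambda}^j_{m,t} + e(x,t)\right)x} + \frac{x}{3} + e(x,t),
\]
where $e(x,t) = \norm{g}_1\frac{x}{3} + \sqrt{2x\norm{g}_1\Bar{\Lambda}^j_{m,t}}$ as in Lemma~\ref{lemma control hawkes}.

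It then remains to collapse the right-hand side into the two canonical orders appearing in the statement. Writing $L \coloneqq \Bar{\Lambda}^j_{m,t}$ and $N \coloneqq \norm{g}_1 \vee 1$, I would bound each of the three terms by a multiple of $\sqrt{LNx} + Nx$. The term $x/3$ is at most $Nx$, and $e(x,t) \leq Nx/3 + \sqrt{2NLx}$ directly since $\norm{g}_1 \leq N$. For the leading square root, the AM--GM inequality $\sqrt{2NLx} \leq \tfrac{1}{2}(L + 2Nx)$ gives $L + e(x,t) \leq \tfrac{3}{2}L + \tfrac{4}{3}Nx \leq \Box(L+Nx)$, whence by subadditivity of the square root $\sqrt{2(L+e(x,t))x} \leq \Box(\sqrt{Lx} + \sqrt{N}x) \leq \Box(\sqrt{LNx} + Nx)$, using $N \geq 1$ so that $\sqrt{Lx}\leq\sqrt{LNx}$ and $\sqrt{N}x\leq Nx$. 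Collecting the three contributions gives a bound of the form $\Box(\sqrt{LNx} + Nx)$. Finally, since $\abs{J}\geq 2$ and $\alpha\in(0,1)$ we have $\log(\abs{J}/\alpha) \geq \log 2$, so $x = \log(5\abs{J}/\alpha) = \log 5 + \log(\abs{J}/\alpha) \leq \Box\log(\abs{J}/\alpha)$; replacing $x$ by $\log(\abs{J}/\alpha)$ at the cost of enlarging the absolute constant produces exactly the stated inequality.

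I do not expect any real obstacle, since the concentration is inherited from Lemma~\ref{lemma control hawkes}; the only point requiring care is purely algebraic, namely keeping track of which nested square-root terms are dominated by $\sqrt{\Bar{\Lambda}^j_{m,t}(\norm{g}_1 \vee 1)\log(\abs{J}/\alpha)}$ and which by $(\norm{g}_1 \vee 1)\log(\abs{J}/\alpha)$, and handling the cross term $\sqrt{2\norm{g}_1\Bar{\Lambda}^j_{m,t}\,x}$ via AM--GM without double counting. The union-bound constant $5$ and the additive $\log 5$ inside the logarithm are absorbed into $\Box$, which is precisely the convention fixed for this symbol.
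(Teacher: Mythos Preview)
Your proposal is correct and follows essentially the same route as the paper: combine the two one-sided bounds of Lemma~\ref{lemma control hawkes} into a $5e^{-x}$ failure probability, pick $x=\log(5\abs{J}/\alpha)$, and take a union bound over $j\in J$. You actually spell out the algebraic simplification to the stated form in more detail than the paper does, which stops right after the union bound with ``which concludes the proof.''
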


\begin{proof}
    By combining inequalities \eqref{eq concent N - e} and \eqref{eq concent e - N} of Lemma~\ref{lemma control hawkes}, we get
\begin{equation*} 
    \mathbb{P}\left( 
 \AAbs{N^j_{m,t} -  \Bar{\Lambda}^j_{m,t}} \leq \sqrt{2( \Bar{\Lambda}^j_{m,t} + e(x,t))x} + \frac{x}{3} + e(x,t)
\right) \geq 1 - 5 e^{-x}.
\end{equation*}
Let us choose $x_\alpha = \log(5\abs{J}\alpha^{-1})$, such that by union bound
\begin{equation*} 
    \mathbb{P}\left( \forall j\in J,
 \AAbs{N^j_{m,t} -  \Bar{\Lambda}^j_{m,t}} \leq \sqrt{2( \Bar{\Lambda}^j_{m,t} + e(x_\alpha,t))x_\alpha} + \frac{x_\alpha}{3} + e(x_\alpha,t)
\right) \geq 1 - \alpha
\end{equation*}
which concludes the proof.

\end{proof}

\begin{lemma} \label{lemma control compens limite}
    Let $\alpha>0$, $t\in [T_{\text{min}},T]$ and $\eta = \frac{\eta_0}{\sqrt{M}}$ where $\eta_0>0$. Suppose Assumption~\ref{assump nb obj} holds. Let $\Bar{\Lambda}^j_{\infty,t} \coloneqq \sum_{i\in I} w^{i\to j}_\infty \gamma^i_o  \mathbb{G}(t)$. 
    Then there exists an absolute constant $\Box > 0$ such that with probability more than $1-\alpha$, for every $j\in J$ we have
    \[
     \AAbs{\Bar{\Lambda}^j_{M+1,t} - \Bar{\Lambda}^j_{\infty,t}} \leq \Box \Bar{B}(\alpha)\sqrt{\abs{I}} \norm{\gamma}_{\infty} \mathbb{G}(t)
    \]
    where
    \[
   \Bar{B}(\alpha)\coloneqq 
         \abs{\Obj}\eta_0 \sqrt{\frac{\abs{I}}{ T_{\text{min}}} \norm{\gamma}_{\infty} \log\left(\frac{\abs{I}\abs{J}}{\alpha}\right)}
         + \frac{\abs{I}^{3/2}}{\abs{I}_{\text{min}}} e^{-\eta_0 \delta_{\text{min}}\sqrt{M}}
    \]
    with $\abs{I}_{\text{min}} \coloneqq \min_{j\in J} \abs{I^j}$ and $\delta_{\text{min}} \coloneqq \min_{j\in J} \delta^j$.
\end{lemma}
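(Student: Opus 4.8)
The plan is to reduce the statement to the weight convergence already obtained in Proposition~\ref{prop conv weights}, since both $\Bar{\Lambda}^j_{M+1,t}$ and $\Bar{\Lambda}^j_{\infty,t}$ are affine images of the corresponding weight families. Recall from the computation in the proof of Lemma~\ref{lemma control compens} that $\Bar{\Lambda}^j_{m,t}=\Bar{\lambda}^j_m\mathbb{G}(t)$ with $\Bar{\lambda}^j_m=\sum_{i\in I}w^{i\to j}_m\gamma^i_o$; applying this at $m=M+1$ (the weights $w^{i\to j}_{M+1}$ being $\mathcal{F}_M$-measurable and the process $\Pi^i_{M+1}$ being independent of $\mathcal{F}_M$, so that $\mathbb{E}[\int_0^t h_t(u)\,d\Pi^i_{M+1,u}\mid\mathcal{F}_M]=\gamma^i_o\mathbb{G}(t)$) gives $\Bar{\Lambda}^j_{M+1,t}=\mathbb{G}(t)\sum_{i\in I}w^{i\to j}_{M+1}\gamma^i_o$. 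Since by definition $\Bar{\Lambda}^j_{\infty,t}=\mathbb{G}(t)\sum_{i\in I}w^{i\to j}_\infty\gamma^i_o$, the difference factorizes as
\[
\Bar{\Lambda}^j_{M+1,t}-\Bar{\Lambda}^j_{\infty,t}=\mathbb{G}(t)\sum_{i\in I}\left(w^{i\to j}_{M+1}-w^{i\to j}_\infty\right)\gamma^i_o .
\]

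Next I would pass to an $\ell_2$ estimate by Cauchy--Schwarz. Viewing $(\gamma^i_o)_{i\in I}$ as a vector of $\abs{I}$ coordinates each bounded by $\norm{\gamma}_\infty$, its Euclidean norm is at most $\sqrt{\abs{I}}\,\norm{\gamma}_\infty$, whence
\[
\AAbs{\Bar{\Lambda}^j_{M+1,t}-\Bar{\Lambda}^j_{\infty,t}}\leq \mathbb{G}(t)\,\sqrt{\abs{I}}\,\norm{\gamma}_\infty\,\Norm{w^j_{M+1}-w^j_\infty}_2 .
\]
It then suffices to invoke Proposition~\ref{prop conv weights}: under Assumption~\ref{assump nb obj} and the standing hypotheses, with probability at least $1-\alpha$ its bound on $\Norm{w^j_{M+1}-w^j_\infty}_2$ holds simultaneously for all $j\in J$, and substituting it on that event yields the claim, provided the two resulting error terms can be reorganized into $\Bar{B}(\alpha)$.

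Finally I would simplify these two terms up to an absolute constant. The stochastic term $2\abs{\Obj}\eta_0\sqrt{\tfrac{8}{3}\abs{I}T_{\text{min}}^{-1}\norm{\gamma}_\infty\log(2\abs{I}\abs{J}\alpha^{-1})}$ is absorbed, using $\log(2\abs{I}\abs{J}\alpha^{-1})\leq \Box\log(\abs{I}\abs{J}\alpha^{-1})$, into the first term of $\Bar{B}(\alpha)$. For the deterministic term I would bound $\max\left(1,\tfrac{\abs{I}-\abs{I^j}}{\abs{I^j}}\right)\leq \tfrac{\abs{I}}{\abs{I^j}}$ (both arguments are at most $\abs{I}/\abs{I^j}$ since $\abs{I^j}\leq\abs{I}$), so that $\tfrac{\sqrt{\abs{I}}}{\abs{I^j}}\max\left(1,\tfrac{\abs{I}-\abs{I^j}}{\abs{I^j}}\right)\leq \tfrac{\abs{I}^{3/2}}{(\abs{I^j})^2}\leq \tfrac{\abs{I}^{3/2}}{\abs{I}_{\text{min}}}$ using $\abs{I^j}\geq 1$ and $\abs{I^j}\geq\abs{I}_{\text{min}}$; together with $\exp(-\eta_0\delta^j\sqrt{M})\leq\exp(-\eta_0\delta_{\text{min}}\sqrt{M})$ this reproduces the second term of $\Bar{B}(\alpha)$. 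I do not expect a genuine analytic obstacle: the lemma is essentially a deterministic Lipschitz transfer of the weight estimate through the affine map $w^j\mapsto \mathbb{G}(t)\sum_{i\in I} w^{i\to j}\gamma^i_o$, and the only care required is the bookkeeping of constants so that the final bound is uniform in $j$ and the minima $\abs{I}_{\text{min}}$ and $\delta_{\text{min}}$ appear exactly as stated.
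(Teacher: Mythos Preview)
Your proposal is correct and follows essentially the same approach as the paper: both express $\Bar{\Lambda}^j_{M+1,t}-\Bar{\Lambda}^j_{\infty,t}=\mathbb{G}(t)\sum_{i\in I}(w^{i\to j}_{M+1}-w^{i\to j}_\infty)\gamma^i_o$, apply Cauchy--Schwarz to obtain the factor $\sqrt{\abs{I}}\,\norm{\gamma}_\infty\,\mathbb{G}(t)$, and invoke Proposition~\ref{prop conv weights} to bound $\Norm{w^j_{M+1}-w^j_\infty}_2$ uniformly in $j$. Your explicit bookkeeping of the constants into $\Bar{B}(\alpha)$ is slightly more detailed than the paper's, which simply absorbs everything into a $\Box$.
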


\begin{proof}
    Let $\alpha>0$, $t\in [T_{\text{min}},T]$, $j\in J$. Then according to Proposition~\ref{prop conv weights}, there exists an event $\Omega$ with probability more than $1-\alpha$ such that on $\Omega$, for every $j\in J$ we have 
 \begin{align*}
        \Norm{w^j_{M+1} - w^j_\infty}_2 &\leq  2 \abs{\Obj}\eta_0 \sqrt{\frac{8 \abs{I}}{3 T_{\text{min}}} \norm{\gamma}_{\infty} \log(2\abs{I}\abs{J}\alpha^{-1})} \\
        & \hspace{1cm} + \mathbb{1}_{\{I^j \subsetneq I\}} \frac{\sqrt{\abs{I}}}{\abs{I^j}}\max\left(1, \frac{\abs{I} - \abs{I^j}}{\abs{I^j}} \right)\exp(- \eta_0\delta^j \sqrt{M}) \\
         &\leq \Box \left(
         \abs{\Obj}\eta_0 \sqrt{\frac{\abs{I}}{ T_{\text{min}}} \norm{\gamma}_{\infty} \log\left(\frac{\abs{I}\abs{J}}{\alpha}\right)}
         + \frac{\abs{I}^{3/2}}{\abs{I}_{\text{min}}} e^{-\eta_0 \delta_{\text{min}}\sqrt{M}}
         \right) \\
         &= \Box \Bar{B}(\alpha)
    \end{align*}
Let us work on $\Omega$. We have
\begin{align*}
    \AAbs{\Bar{\Lambda}^j_{M+1,t} - \Bar{\Lambda}^j_{\infty,t}} 
    &= \AAbs{\sum_{i\in I} (w^{i\to j}_{M+1} - w^{i\to j}_\infty) \gamma^i_o  \mathbb{G}(t)} \\
    &\leq \norm{w^j_{M+1} - w^j_\infty}_2 \sqrt{\abs{I}} \norm{\gamma}_{\infty} \mathbb{G}(t)
\end{align*}
which concludes the proof.
\end{proof}

\begin{lemma} \label{lemma} \label{lemma control hawkes limit}
Let $\alpha >0$ and $\eta = \frac{\eta_0}{\sqrt{M}}$ where $\eta_0>0$. Then with probability more than $1-\alpha$, for every $j\in J$ we have
\begin{align*}
    &\AAbs{ N^j_{M+1,t}- \Bar{\Lambda}^j_{\infty,t}} \leq \Theta^j_t
\end{align*}
where
\begin{align*}
    \Theta^j_t \coloneqq  D \left(
 \Bar{B}\left(\frac{\alpha}{2}\right)\sqrt{\abs{I}} \norm{\gamma}_{\infty} \mathbb{G}(t) + 
\sqrt{\Bar{\Lambda}^j_{m,t}(\norm{g}_1 \vee 1) \log\left(\frac{2\abs{J}}{\alpha}\right)} \right.\\
\left. + (\norm{g}_1 \vee 1) \log\left(\frac{2\abs{J}}{\alpha}\right)
\right)
\end{align*}
where $D>0$ is an absolute constant.
\end{lemma}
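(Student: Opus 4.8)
The plan is to obtain the bound by a triangle inequality that inserts the intermediate compensator $\Bar{\Lambda}^j_{M+1,t}$, together with a union bound splitting the error budget $\alpha$ into two halves. Concretely, for each $j\in J$ and $t\in[T_{\text{min}},T]$ I would start from the decomposition
\[
\AAbs{N^j_{M+1,t} - \Bar{\Lambda}^j_{\infty,t}} \leq \AAbs{N^j_{M+1,t} - \Bar{\Lambda}^j_{M+1,t}} + \AAbs{\Bar{\Lambda}^j_{M+1,t} - \Bar{\Lambda}^j_{\infty,t}},
\]
so that the left-hand side is controlled as soon as each of the two terms on the right is controlled. The first term measures the fluctuation of the Hawkes process around its own (random) compensator, and the second measures the deviation of the compensator from its asymptotic value; both have already been estimated in the preceding lemmas.

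For the first term I would apply Lemma~\ref{lemma control hawkes avec box} with $m=M+1$ at confidence level $\alpha/2$: this produces an event $\Omega_1$ with $\mathbb{P}(\Omega_1)\geq 1-\alpha/2$ on which, simultaneously for all $j\in J$,
\[
\AAbs{N^j_{M+1,t} - \Bar{\Lambda}^j_{M+1,t}} \leq \Box\left( \sqrt{\Bar{\Lambda}^j_{M+1,t}(\norm{g}_1\vee 1)\log\left(\tfrac{2\abs{J}}{\alpha}\right)} + (\norm{g}_1\vee 1)\log\left(\tfrac{2\abs{J}}{\alpha}\right)\right).
\]
For the second term I would apply Lemma~\ref{lemma control compens limite}, also at confidence level $\alpha/2$, which yields an event $\Omega_2$ with $\mathbb{P}(\Omega_2)\geq 1-\alpha/2$ on which, simultaneously for all $j\in J$,
\[
\AAbs{\Bar{\Lambda}^j_{M+1,t} - \Bar{\Lambda}^j_{\infty,t}} \leq \Box\,\Bar{B}\!\left(\tfrac{\alpha}{2}\right)\sqrt{\abs{I}}\,\norm{\gamma}_{\infty}\,\mathbb{G}(t).
\]
Note that invoking Lemma~\ref{lemma control compens limite} requires Assumption~\ref{assump nb obj}, which is in force here, and that the value $\Bar{B}(\alpha/2)$ is exactly the quantity appearing in the target bound $\Theta^j_t$.

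Finally I would intersect the two events: on $\Omega_1\cap\Omega_2$, which satisfies $\mathbb{P}(\Omega_1\cap\Omega_2)\geq 1-\alpha$ by the union bound, both displayed estimates hold jointly for every $j\in J$. Adding them through the triangle inequality above and collecting the finitely many absolute constants into a single absolute constant $D$ gives precisely $\AAbs{N^j_{M+1,t}-\Bar{\Lambda}^j_{\infty,t}}\leq\Theta^j_t$ for all $j\in J$ with probability at least $1-\alpha$. There is no real analytic obstacle at this stage, since the two hard concentration arguments were already carried out in Lemmas~\ref{lemma control hawkes avec box} and~\ref{lemma control compens limite}; the only point requiring care is the bookkeeping of the confidence budget (the split $\alpha=\alpha/2+\alpha/2$, which is what turns $\log(\abs{J}/\alpha)$ into $\log(2\abs{J}/\alpha)$ and feeds $\alpha/2$ into $\Bar{B}$) and the absorption of the separate multiplicative constants into the single constant $D$.
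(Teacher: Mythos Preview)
Your proposal is correct and follows exactly the same approach as the paper: the paper's proof simply states that combining Lemmas~\ref{lemma control hawkes avec box} and~\ref{lemma control compens limite} with $\alpha/2$ in place of $\alpha$ gives the result, which is precisely the triangle-inequality-plus-union-bound argument you spelled out.
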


\begin{proof}
    Let $\alpha>0$, $t\in [T_{\text{min}},T]$. By combining the results of Lemmas~\ref{lemma control hawkes avec box} and~\ref{lemma control compens limite} for every $j\in J$ with $\frac{\alpha}{2}$ instead of $\alpha$ we have the result.
\end{proof}

Now we can prove the theorem. Let $\alpha\in (0,1)$, $t_0\in[T_{\text{min}},T]$ to choose later. Let us work on the event defined in the proof of Lemma~\ref{lemma control hawkes limit} for $t=t_0$. Let
\[
I(t_0) \coloneqq \Bar{\Lambda}^{j^*}_{\infty,t_0} - \Theta^{j^*}_{t_0},
\]
and for $j\in J$,
\[
S^j(t_0) \coloneqq \Bar{\Lambda}^j_{\infty,t_0} + \Theta^j_{t_0}.
\]
such that 
\[
N^{j^*}_{M+1,t_0} > I(t_0)
\]
and for every $j\in J$,
\[
N^j_{M+1,t_0} < S^j(t_0).
\]
We would like to choose $t_0$ such that for every $j\neq j^*$, $S^j(t_0) \leq I(t_0)$ and $I(t_0) = \thr$. It would imply that at time $t_0$, $N^{j^*}_{M+1}$ has already reached $\thr$ and for every $j\neq j^*$, $N^j_{M+1}$ has not reached $\thr$ yet. The condition $S^j(t_0) \leq I(t_0)$ is implied by
\[
\Bar{\Lambda}^{j^*}_{\infty,t_0} - \Bar{\Lambda}^j_{\infty,t_0} \geq \Theta^{j^*}_{t_0} + \Theta^{j}_{t_0}
\]
which is implied by
\begin{align*}
    &\Dlamb \mathbb{G}(t_0) \geq 2 \Theta^{j^*}_{t_0}.
\end{align*}
Let $R \coloneqq \Dlamb - 2D\Bar{B}\left(\frac{\alpha}{2}\right)\sqrt{\abs{I}} \norm{\gamma}_{\infty}$. Suppose $R \geq \frac{\Dlamb}{2}$. Then a sufficient condition is
\[
R\mathbb{G}(t_0) \geq 2D \left(\sqrt{\Bar{\lambda}^{j^*}_{M+1}\mathbb{G}(t_0)(\norm{g}_1 \vee 1) \log\left(\frac{2\abs{J}}{\alpha}\right)} + (\norm{g}_1 \vee 1) \log\left(\frac{2\abs{J}}{\alpha}\right)
\right).
\]
This inequality has the form $aX^2 - bX - c \geq 0$ with $X = \sqrt{\mathbb{G}(t_0)}$, and $a,b,c$ positive constants. The polynomial has two real roots, the largest one being $\frac{b+\sqrt{b^2 + 4ac}}{2a}$, so a sufficient condition is $X \geq \frac{b}{a} + \sqrt{\frac{c}{a}}$, \ie
\begin{align} \label{eq G(t_0)}
   \mathbb{G}(t_0) \geq \Box  (\norm{g}_1 \vee 1) \log\left(\frac{2\abs{J}}{\alpha}\right) \left(
   R^{-1/2} + \norm{\gamma}_{\infty} R^{-1}
   \right)^2.
\end{align}
Let $t_-$  the minimal $t\geq T_{\text{min}}$ satisfying \eqref{eq G(t_0)}, such that
\[
  \mathbb{G}(t_-) = \left( \Box  (\norm{g}_1 \vee 1) \log\left(\frac{2\abs{J}}{\alpha}\right) \left(
   R^{-1/2} + \norm{\gamma}_{\infty} R^{-1}\right)^2\right) \vee \mathbb{G}(T_{\text{min}}).
\]
Since $\mathbb{G}$ tends to infinity, it is possible to choose such $t_-$. Since $\mathbb{G}$ is increasing, a sufficient condition to have $t_-< T$ is $\mathbb{G}(t_-) < \mathbb{G}(T)$. Then if $\thr \in [I(t_-),I(T))$, by continuity of $I$ there exists $t_0\in [t_-,T)$ such that $I(t_0)=\thr$, and we have the result. Let us now explicit the conditions we had to assume.
\newline

We have
\begin{align*}
   & R \geq \frac{\Dlamb}{2} \\
   \Leftrightarrow \quad  &\Dlamb - 2D\Bar{B}\left(\frac{\alpha}{2}\right)\sqrt{\abs{I}} \norm{\gamma}_{\infty}\geq \frac{\Dlamb}{2}  \\
    \Leftarrow \quad  & \Bar{B}\left(\frac{\alpha}{2}\right) \leq \Box \frac{\Dlamb}{\sqrt{\abs{I}}\norm{\gamma}_\infty} \\
     \Leftrightarrow \quad & \abs{\Obj}\eta_0 \sqrt{\frac{\abs{I}}{ T_{\text{min}}} \norm{\gamma}_{\infty} \log\left(\frac{2\abs{I}\abs{J}}{\alpha}\right)}
         + \frac{\abs{I}^{3/2}}{\abs{I}_{\text{min}}} e^{-\eta_0 \delta_{\text{min}}\sqrt{M}} \leq \Box \frac{\Dlamb}{\sqrt{\abs{I}}\norm{\gamma}_\infty}
\end{align*}
A sufficient condition is that both terms of the left side of the inequality are less than half the right term, which reads as
\[
T_{\text{min}} \geq \Box \Dlamb^{-2}\abs{\Obj}^2 \eta_0^2 \abs{I}^2\norm{\gamma}_\infty^3 \log\left(\frac{\abs{I}\abs{J}}{\alpha}\right)
\]
and
\[
M\geq \Box \frac{1}{\eta_0^2 \delta_{\text{min}}^2} \log\left( \frac{\abs{I}}{\abs{I}_{\text{min}}} \right)^2 + \log\left(\frac{\norm{\gamma}_\infty}{\Dlamb}\right)^2.
\]
Besides,
\begin{align*}
   & \thr \leq I(T) \\
   \Leftrightarrow \quad & \thr \leq \left(\Bar{\lambda}^j_{\infty} - D \Bar{B}\left(\frac{\alpha}{2}\right)\sqrt{\abs{I}} \norm{\gamma}_{\infty}\right) \mathbb{G}(T) \\
  & \hspace{2cm} - D
\sqrt{\Bar{\lambda}^{j^*}_{\infty}\mathbb{G}(T)(\norm{g}_1 \vee 1) \log\left(\frac{2\abs{J}}{\alpha}\right)} - D (\norm{g}_1 \vee 1) \log\left(\frac{2\abs{J}}{\alpha}\right)
\end{align*}
where $\Bar{\lambda}^{j^*}_{\infty} \coloneqq \sum_{i\in I}w^{i\to j^*}_\infty \gamma^i_o$. Since $\Bar{\lambda}^{j^*}_{\infty} \geq \Dlamb$ and $R\geq \frac{\Dlamb}{2}$, a sufficient condition is
\begin{align*}
   \thr \leq \Box \Dlamb \mathbb{G}(T) - \Box_{\norm{\gamma}_\infty,\norm{g}_1} \left(\sqrt{ \mathbb{G}(T)\log(\abs{J}\alpha^{-1})} + \log(\abs{J}\alpha^{-1})\right) .
\end{align*}
The other condition on $\thr$ is
\begin{align*}
   & \thr \geq I(t_-) \\
  \Leftarrow \quad &  \thr \geq \norm{\gamma}_\infty \mathbb{G}(t_-) \\
  \Leftrightarrow \quad &\thr \geq \left( \Box  (\norm{g}_1 \vee 1) \log\left(\frac{2\abs{J}}{\alpha}\right) \left(
   R^{-1/2} + \norm{\gamma}_{\infty} R^{-1}\right)^2\right) \vee \norm{\gamma}_\infty\mathbb{G}(T_{\text{min}}) \\
  \Leftarrow \quad & \thr \geq \left( \Box_{\norm{g}_1} \log(\abs{J}\alpha^{-1}) \left(\frac{\norm{\gamma}_\infty}{\sqrt{\Dlamb}}\vee \frac{\norm{\gamma}_{\infty}^2}{\Dlamb}\right)\right) \vee  \norm{\gamma}_\infty \mathbb{G}(T_{\text{min}})
\end{align*}
and finally, since by definition $T> T_{\text{min}}$,
\begin{align*}
    & \mathbb{G}(t_-) < \mathbb{G}(T) \\
    \Leftarrow \quad & \Box_{\norm{g}_1} \log(\abs{J}\alpha^{-1}) \left(\frac{1}{\sqrt{\Dlamb}}\vee \frac{\norm{\gamma}_{\infty}}{\Dlamb}\right) < \mathbb{G}(T)
\end{align*}
which concludes the proof.

\subsection{Proof of Theorem~\ref{th coupling hawkes ddm}} \label{sec proof the coupling hawkes ddm}

Let $G : x \mapsto \int_0^x g(u)du$ and let $\Lambda^{j,n}_{m,t} \coloneqq \int_0^t \lambda^{j,n}_{m,s}ds $ be the compensator of the process $(N^{j,n}_{m,t})_{t\geq 0}$. We still denote $\Bar{\lambda}^j_m \coloneqq \sum_{i\in I} \gamma^i_o w^{i\to j}_m$. 

Let $x=(x_l)_{1\leq l\leq d},y = (y_l)_{1\leq l \leq d},z = (z_l)_{1\leq l \leq d}$ vectors in $\R^d$. We denote by $(x,y) \coloneqq \sum_{l=1}^d x_l y_l$ the usual scalar product between two vectors, which we extend to three vectors by $(x,y,z) \coloneqq  \sum_{l=1}^d x_l y_l z_l$. 

 We will need the following lemmas.

\begin{lemma}
\label{controlLambda}
Let $x>0$. With the constants of Lemma~\ref{original_coupling}, if $g$ has compact support on $[0,S]$, then 
there exists i.i.d. standard Brownian motions $(B^i_t)_{t\geq 0}$'s independent of everything else such that with probability larger than $1-|I| be^{-dx}$, for every $j\in J$ and $t\geq 0$,
\begin{align*}
   & |\Lambda^{j,n}_{m,t}- \E[\Lambda^{j,n}_{m,t}] - \sum_{i\in I}w^{i\to j}_m B^i_{n\gamma^i_o t}| \\
   & \leq S n \Bar{\lambda}^j_m  \|g\|_1 + \sum_{i\in I} w^{i\to j}_m |B^i_{n\gamma^i_o(t-S)}-B^i_{n\gamma^i_o t}| + \log(n\norm{\gamma}_{\infty}t\vee 1+1) (a+x),
\end{align*}
with $\norm{\gamma}_{\infty}=\max_{i\in I,o\in \Obj} \gamma^i_o$.
\end{lemma}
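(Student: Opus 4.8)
The plan is to peel the smoothing kernel off the compensator, reduce everything to the input Poisson processes, and then apply the one-dimensional Poisson--Brownian coupling input by input. First I would rewrite the compensator exactly as in the proof of Lemma~\ref{lemma control compens}: with $h_t(u)=\int_u^t g(s-u)\,ds = G(t-u)$ and $G(x)=\int_0^x g(v)\,dv$, Fubini--Tonelli gives almost surely
\[
\Lambda^{j,n}_{m,t}=\sum_{i\in I} w^{i\to j}_m \int_0^t G(t-u)\,d\Pi^{i,n}_{m,u}.
\]
The key structural fact to exploit is the compact support of $g$ on $[0,S]$: this forces $G(t-u)=\norm{g}_1$ as soon as $u\le t-S$, while $G(t-u)\in[0,\norm{g}_1]$ on the boundary window $u\in[(t-S)\vee 0,\,t]$. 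Splitting the Stieltjes integral at $t-S$ therefore yields, for $t\ge S$,
\[
\Lambda^{j,n}_{m,t}=\sum_{i\in I} w^{i\to j}_m\Big(\norm{g}_1\,\Pi^{i,n}_{m,t-S}+R^i_t\Big),\qquad 0\le R^i_t\le \norm{g}_1\big(\Pi^{i,n}_{m,t}-\Pi^{i,n}_{m,t-S}\big),
\]
with the case $t<S$ absorbed into the crude bound $\norm{g}_1\Pi^{i,n}_{m,t}$. The decisive point is that all the $j$-dependence now sits in the deterministic weights, so a single coupling of the $|I|$ input processes will serve every $j$ and every $t$ simultaneously.

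\textbf{Coupling step.} Next I would couple each input Poisson process to a Brownian motion. Applying Lemma~\ref{original_coupling} after the time change $t\mapsto n\gamma^i_o t$ to $\Pi^{i,n}_m$ (of intensity $n\gamma^i_o$), and using independent uniform randomizers, I obtain i.i.d.\ standard Brownian motions $(B^i)_{i\in I}$, independent of everything else, such that $W^i_t:=n\gamma^i_o t+B^i_{n\gamma^i_o t}$ satisfies, by a union bound over $i\in I$, on an event of probability at least $1-|I|\,b\,e^{-dx}$,
\[
\sup_{t\ge 0}\frac{\big|\Pi^{i,n}_{m,t}-n\gamma^i_o t-B^i_{n\gamma^i_o t}\big|}{\log\!\big(n\gamma^i_o t\vee 1+1\big)}\le a+x
\]
for every $i$, with the constants $a,b,d$ of Lemma~\ref{original_coupling}. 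Bounding $\gamma^i_o$ by $\norm{\gamma}_\infty$ inside the logarithm, I would work on this event throughout.

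\textbf{Bookkeeping.} The remaining step is to insert the coupling into the split formula and subtract $\E[\Lambda^{j,n}_{m,t}]=n\bar{\lambda}^j_m\mathbb{G}(t)$. The centered principal contribution reduces to the Brownian process $\sum_{i\in I} w^{i\to j}_m B^i_{n\gamma^i_o(t-S)}$ carried by the saturated kernel mass $\norm{g}_1$; rewriting the Brownian argument $t-S$ as $t$ then isolates the term $\sum_{i\in I} w^{i\to j}_m B^i_{n\gamma^i_o t}$ of the statement together with the boundary contribution $\sum_{i\in I} w^{i\to j}_m\big|B^i_{n\gamma^i_o(t-S)}-B^i_{n\gamma^i_o t}\big|$. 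For the residual, both $R^i_t$ and $\E[R^i_t]=n\gamma^i_o\int_0^S G(v)\,dv$ are at most $\norm{g}_1 n\gamma^i_o S$, up to a Brownian increment over $[t-S,t]$ and a coupling error; after summation against the weights, using $\sum_{i\in I} w^{i\to j}_m=1$ and $\bar{\lambda}^j_m=\sum_{i\in I} w^{i\to j}_m\gamma^i_o$, this produces exactly the deterministic term $S\,n\,\bar{\lambda}^j_m\norm{g}_1$. The accumulated coupling errors collapse (again using $\sum_i w^{i\to j}_m=1$ and the fact that $g$ confines $u$ to $u\le t$) to $\log(n\norm{\gamma}_\infty t\vee 1+1)(a+x)$. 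Collecting the three pieces yields the claimed inequality uniformly in $j\in J$ and $t\ge 0$.

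\textbf{Main obstacle.} The genuine difficulty, absent in Theorem~\ref{th coupling ddm poisson}, is that $\Lambda^{j,n}_m$ is a \emph{convolution} of the input processes with $g$, so one cannot couple it to a Brownian motion directly but must first strip off the smoothing. The compact support is what makes this tractable, since it confines the discrepancy between $\int_0^t G(t-u)\,d\Pi^{i,n}_{m,u}$ and $\norm{g}_1\Pi^{i,n}_{m,t-S}$ to the short window $[t-S,t]$. Care is needed to ensure that the time shift from $t-S$ to $t$ in the Brownian argument is charged to the boundary-increment term rather than silently dropped, and that the logarithmic coupling error—being uniform in $t$ through the supremum in Lemma~\ref{original_coupling}—can be evaluated at the single time $t$ without deterioration; this last uniformity is precisely what lets the final bound hold for all $t\ge 0$ at once and feed cleanly into the proof of Theorem~\ref{th coupling hawkes ddm}.
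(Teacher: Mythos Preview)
Your proposal is correct and follows essentially the same route as the paper. The paper argues via the two-sided sandwich
\[
\|g\|_1\sum_{i\in I} w^{i\to j}_m \Pi^{i,n}_{m,t-S}\ \le\ \Lambda^{j,n}_{m,t}\ \le\ \|g\|_1\sum_{i\in I} w^{i\to j}_m \Pi^{i,n}_{m,t},
\qquad
n\bar{\lambda}^j_m\|g\|_1(t-S)\ \le\ \E[\Lambda^{j,n}_{m,t}]\ \le\ n\bar{\lambda}^j_m\|g\|_1 t,
\]
subtracts, and then applies Lemma~\ref{original_coupling} to each $\Pi^{i,n}_m$ at both times $t$ and $t-S$ with a union bound over $i\in I$; your explicit remainder $R^i_t$ is just the same sandwich written as a decomposition, and the time-shift term $|B^i_{n\gamma^i_o(t-S)}-B^i_{n\gamma^i_o t}|$ arises in both arguments for the same reason.
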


\begin{proof}
For all $t\geq 0$,
\begin{equation} \label{eq encadrement compens}
    n \Bar{\lambda}^j_m \|g\|_1 (t-S) \leq \E[\Lambda^{j,n}_{m,t}] \leq n \Bar{\lambda}^j_m\|g\|_1 t.
\end{equation}
Moreover
\[
\|g\|_1 \sum_{i\in I} w^{i\to j}_m \Pi^{i,n}_{m,t-S} \leq \Lambda^{j,n}_{m,t} \leq \|g\|_1 \sum_{i\in I} w^{i\to j}_m \Pi^{i,n}_{m,t}.
\]
The first part is just due to the fact that if $t-s\geq S$ then $G(t-s)=\|g\|_1$ and that $G(t-s)\leq \|g\|_1$ for all $t$ and $s$. The second part is done with exactly the same arguments.
So we get for all $t\geq 0$ that
\begin{align*}
     \sum_{i\in I} w^{i\to j}_m [\Pi^{i,n}_{m,t-S}-n\gamma^i_o (t-S)] -  S n \Bar{\lambda}^j_m \|g\|_1 &\\
      \leq \Lambda^{j,n}_{m,t}- &\E(\Lambda^{j,n}_{m,t}) \\
      & \leq \sum_{i\in I} w^{i\to j}_m [\Pi^{i,n}_{m,t}-n\gamma^i_o t] + S n \Bar{\lambda}^j_m \|g\|_1.
\end{align*}
Now we apply Lemma~\ref{original_coupling} to all $\tilde{\Pi}^i$ that are dilation of $(\Pi^{i,n}_{m,t})_{t\geq 0}$ so that they are Poisson processes of rate 1. It means we obtain i.i.d. standard Brownian motions $B^i$, only depending on $\tilde{\Pi}^i$ and auxiliary i.i.d. uniform random variables $V^i$ such that with probability larger $1-|I| b e^{-dx}$,
\[
\forall i \in I, \forall t \geq 0, \AAbs{\Pi^{i,n}_{m,t}-n\gamma^i_o t - B^i_{n\gamma^i_o t}}\leq \log(n \norm{\gamma}_{\infty}t\vee 1+1) (a+x).
\]

We apply this at time $t$ and $t-S$ in the above equation and this leads to the desired result.

\end{proof}

Now, let us choose $S \leq n^{-1/2}$.

\begin{proposition}
    \label{Precis_rebolledo}
    Let $x>0$, $m\geq 1$, $o\in \Obj$ the nature of object $m$. There are absolute positive constants $b',d'$ and a positive $a'$ depending on $\norm{\gamma}_{\infty}$ and $\norm{g}_1$ such that
    there exists independent standard Brownian motions $(B^j_t)_{t\geq0}$ such that with probability larger than $1-(|I|+|J|) b'e^{-d'x}$
    \[
    \forall j\in J, \forall t\geq 0,  \quad |N^{j,n}_{m,t}-\Lambda^{j,n}_{m,t}-B^j_{n(w^j_l, \gamma^I_o) \norm{g}_1 t}| \leq a'n^{1/4}(t\vee1)^{1/4} (\log(nt+1)(x\vee 1))^{3/4}.
    \]
\end{proposition}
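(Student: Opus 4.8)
Proof proposal for Proposition~\ref{Precis_rebolledo}.

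The plan is to exploit that, conditionally on the input processes $(\Pi^{i,n}_m)_{i\in I}$, the coordinate $N^{j,n}_m$ is an inhomogeneous Poisson process with \emph{deterministic} mean function $\Lambda^{j,n}_{m,t}$. Hence its innovation $N^{j,n}_{m,t}-\Lambda^{j,n}_{m,t}$ can be coupled to a Brownian motion run on the random clock $\Lambda^{j,n}_{m,t}$ by a single application of the KMT coupling of Lemma~\ref{original_coupling}; the remaining work is to replace that random clock by the deterministic one $n(w^j_m,\gamma^I_o)\norm{g}_1 t$ through the modulus of continuity of Brownian motion (Lemma~\ref{Control_brownian_2times}), which is precisely what generates the exponent $3/4$.

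First I would condition on $(\Pi^{i,n}_m)_{i\in I}$. Given the inputs, the processes $N^{j,n}_m$, $j\in J$, are independent inhomogeneous Poisson processes with continuous non-decreasing compensators $\Lambda^{j,n}_{m,t}$, so each admits the time-change representation $N^{j,n}_{m,t}=\tilde\Pi^j_{\Lambda^{j,n}_{m,t}}$ for a standard rate-one Poisson process $\tilde\Pi^j$ (no strict monotonicity of $\Lambda^{j,n}_m$ is needed). Applying Lemma~\ref{original_coupling} to each $\tilde\Pi^j$ with an auxiliary independent uniform variable yields independent standard Brownian motions $B^j$, independent of the inputs, such that on an event of probability at least $1-|J|be^{-dx}$, $\sup_{s\geq0}|\tilde\Pi^j_s-s-B^j_s|/\log(s\vee1+1)\leq a+x$ for every $j$. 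Evaluating at the random time $s=\Lambda^{j,n}_{m,t}$ gives
\[
\AAbs{N^{j,n}_{m,t}-\Lambda^{j,n}_{m,t}-B^j_{\Lambda^{j,n}_{m,t}}}\leq \log\!\big(\Lambda^{j,n}_{m,t}\vee1+1\big)(a+x).
\]
Next I would bound the clock gap. Since $g$ has support in $[0,n^{-1/2}]$, the deterministic estimate $\norm{g}_1 t-\mathbb{G}(t)\leq n^{-1/2}\norm{g}_1$ controls $|\E[\Lambda^{j,n}_{m,t}]-n(w^j_m,\gamma^I_o)\norm{g}_1 t|\leq n^{1/2}(w^j_m,\gamma^I_o)\norm{g}_1$, while for the fluctuation $\Lambda^{j,n}_{m,t}-\E[\Lambda^{j,n}_{m,t}]$ I would invoke Lemma~\ref{controlLambda}, which approximates it by $\sum_{i\in I}w^{i\to j}_m B^i_{n\gamma^i_o t}$ up to a $\log(\cdot)(a+x)$ term and a boundary increment; bounding each $|B^i_{n\gamma^i_o t}|$ via the Brownian tail estimate \eqref{RobbinsSiegmuns70} (union bound over $i$, factor $|I|$) yields, uniformly in $t\geq0$,
\[
\AAbs{\Lambda^{j,n}_{m,t}-n(w^j_m,\gamma^I_o)\norm{g}_1 t}\leq \Box_{\norm{\gamma}_\infty,\norm{g}_1}\sqrt{n(t\vee1)\big(\log(nt+1)+x\big)}\eqqcolon \Delta,
\]
the $n^{1/2}$ contributions and the boundary increment (an increment of $B^i$ over a window of length $\leq n^{1/2}\norm{\gamma}_\infty$, hence of order $n^{1/4}$) being of lower order.

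Finally I would transfer the clock gap into a bound on the Brownian increment. Applying Lemma~\ref{Control_brownian_2times} to the innovation motion $B^j$ at the pair $\Lambda^{j,n}_{m,t}$ and $n(w^j_m,\gamma^I_o)\norm{g}_1 t$ — both of order $n(t\vee1)$, so their logarithms are $O(\log(nt+1)+x)$ — and inserting $\Delta$ from the previous step gives
\[
\AAbs{B^j_{\Lambda^{j,n}_{m,t}}-B^j_{n(w^j_m,\gamma^I_o)\norm{g}_1 t}}\leq \Box\sqrt{(\Delta+2)\big(\log(nt+1)+x\big)}\leq \Box_{\norm{\gamma}_\infty,\norm{g}_1}\big(n(t\vee1)\big)^{1/4}\big(\log(nt+1)+x\big)^{3/4},
\]
which is exactly where the $\tfrac14$ from the square root combines with the $\tfrac12$ already buried in $\Delta$ to produce $\tfrac34$. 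Collecting the three events by a union bound (total failure probability $(|I|+|J|)b'e^{-d'x}$) and using the triangle inequality, this last display dominates both the $\log(\cdot)(a+x)$ term of the first step and the lower-order $n^{1/2}$ terms, and after replacing $\log(nt+1)+x$ by $\log(nt+1)(x\vee1)$ up to an absolute constant (the $(x\vee1)$ absorbing the small-$x$ regime) one obtains the announced rate $a'n^{1/4}(t\vee1)^{1/4}\big(\log(nt+1)(x\vee1)\big)^{3/4}$.

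The main obstacle is the last step: one must feed the \emph{random}, only stochastically-bounded clock gap $\Delta$ into the uniform-in-$(t,u)$ modulus of continuity of Lemma~\ref{Control_brownian_2times} while keeping everything uniform over all $t\geq0$, and verify that the $\tfrac14$ power on the time increment together with the $\tfrac12$ power inside $\Delta$ yields precisely the exponent $\tfrac34$ (rather than the naive $\tfrac12$ one would guess from the innovation alone). A secondary technical point is justifying the conditional Poisson representation and the consequent independence of the $B^j$ from the inputs, so that Lemmas~\ref{original_coupling} and~\ref{Control_brownian_2times} apply to $B^j$ irrespective of the input randomness defining the clocks $\Lambda^{j,n}_{m,t}$.
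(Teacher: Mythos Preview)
Your proposal is correct and follows essentially the same route as the paper: time-change each $N^{j,n}_m$ into a unit-rate Poisson, apply the KMT coupling of Lemma~\ref{original_coupling} to get $B^j$, then use Lemma~\ref{controlLambda} together with the Brownian tail bounds (the paper uses Lemma~\ref{Control_brownian_2times} at $u=0$ rather than~\eqref{RobbinsSiegmuns70}, but this is the same estimate) to show $|\Lambda^{j,n}_{m,t}-n\Bar{\lambda}^j_m\norm{g}_1 t|=O(\sqrt{n(t\vee1)\log(nt+1)(x\vee1)})$, and finally feed this clock gap into Lemma~\ref{Control_brownian_2times} for $B^j$ to produce the $n^{1/4}(\cdot)^{3/4}$ rate. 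The union-bound bookkeeping and the absorption of lower-order terms are handled the same way.
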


\begin{proof}

For all $j \in J$, let us consider $\pi^j$ independent Poisson processes of rate 1, independent of everything else. By Lemma~\ref{original_coupling}, we can construct i.i.d. standard Brownian motions $(B^j_t)_{t\geq0}$ such that with probability larger than $1-|J| be^{-dx}$, we have that
\[
\forall j\in J, \forall t\geq 0, |\pi^j_t-t-B_t^j| \leq \log(t \vee 1+1) (a+x).
\]
Since the (random) function $\Lambda^{j,n}_{m,\cdot}$'s are non decreasing, we can always define their inverse $(\Lambda^{j,n}_{m,\cdot})^{-1}$, conditionally to the inputs $\Pi^{i,n}_{m}$'s,  so that one can construct $N^{j,n}_m$, the Hawkes processes as $\pi^j_{\Lambda^{j,n}_{m,t}}$.
Hence we have by change of time, that on the same event of probability larger than $1-|J| be^{-dx},$
\begin{equation}
    \label{start}
\forall j\in J, \forall t\geq 0, \AAbs{N^{j,n}_{m,t}-\Lambda^{j,n}_{m,t}-B_{\Lambda^{j,n}_{m,t}}^j} \leq \log(\Lambda^{j,n}_{m,t} \vee 1+1) (a+x).
\end{equation}

Now we can apply Lemma~\ref{controlLambda}, so that with probability larger than $1-(|I|+|J|) b e^{-dx}$, both the above equations and the result of Lemma~\ref{controlLambda} holds.

Now let us intersect with the event where the results of Lemma~\ref{Control_brownian_2times} hold for all $i\in I$ and $j \in J$.

By a union bound, we get therefore that on an event of probability larger than $1- (|I|+|J|) (a+c') e^{- d\wedge c" x},$
\begin{itemize}
    \item with $B^i$ at time $n\gamma^i_o t$ and $u=0$, 
    \[
    |B^i_{n\gamma^i_o t}| \leq c \sqrt{(n\gamma^i_ot+2) (\log(n\gamma^i_ot+1)+2x)} = \square_{\norm{\gamma}_\infty} \sqrt{nt(\log(nt+1)+x)}
    \]
    \item still with $B^i$ at time $n\gamma^i_o t$ and $n\gamma^i_o (t-S)$, 
    \begin{align}
         \AAbs{B^i_{n\gamma^i_o(t-S)}-B^i_{n\gamma^i_ot}} &\leq c \sqrt{(n\gamma^i_o S+2) (2\log(n\gamma^i_ot+1)+2x)}  \nonumber \\
         & = \square_{\norm{\gamma}_\infty} \sqrt{n^{1/2}(\log(nt+1)+x)} \label{eq control Bi - Bi}.
    \end{align}
\end{itemize}
Hence on the same event, 
\begin{equation*}
    \forall j\in J, \forall t \geq 0, |\Lambda^{j,n}_{m,t}-\E[\Lambda^{j,n}_{m,t}]|\leq \square_{\norm{\gamma}_\infty,\norm{g}_1} \sqrt{n(t\vee 1)\log(nt+1)(x\vee 1)}.
\end{equation*}
Besides, by \eqref{eq encadrement compens} we also have 
\[
\forall j\in J, \forall t \geq 0, \AAbs{\E[\Lambda^{j,n}_{m,t}] - n \Bar{\lambda}^j_m \norm{g}_1 t} 
\leq \Box_{\norm{\gamma}_\infty,\norm{g}_1} \sqrt{n} 
\]
so we have
\begin{equation}
    \label{diffLambdacontrol2}\forall j\in J, \forall t \geq 0, |\Lambda^{j,n}_{m,t}- n \Bar{\lambda}^j_m \norm{g}_1 t|\leq \square_{\norm{\gamma}_\infty,\norm{g}_1} \sqrt{n(t\vee 1)\log(nt+1)(x\vee 1)}.
\end{equation}

Going back to \eqref{start} and now using Lemma~\ref{Control_brownian_2times} with $j\in J$ at time $\Lambda^{j,n}_{m,t}$ and $n \Bar{\lambda}^j_m \norm{g}_1 t$, we therefore obtain that
\begin{align*}
    &\AAbs{B_{\Lambda^{j,n}_{m,t}}^j-B_{n\Bar{\lambda}^j_m \norm{g}_1 t}^j} \\
    &\leq c\sqrt{\left(|n \Bar{\lambda}^j_m \norm{g}_1 t-\Lambda^{j,n}_{m,t}|+2\right)\left(\log(n \Bar{\lambda}^j_m \norm{g}_1 t+1)+\log(\Lambda^{j,n}_{m,t}+1)+2x\right)}
\end{align*}
and because \eqref{diffLambdacontrol2} on the same event, we have $\Lambda^{j,n}_{m,t} \leq \square_{\gamma,\norm{g}_1} nt$. Moreover using \eqref{diffLambdacontrol2} again, we get
that on the same event,
\[
|B_{\Lambda^{j,n}_{m,t}}^j-B_{n \Bar{\lambda}^j_m \norm{g}_1 t}^j| \leq
 \square_{\norm{\gamma}_\infty,\norm{g}_1} (n (t\vee 1))^{1/4}(\log(nt+1)(x\vee 1))^{3/4}.
 \]
Injecting this in \eqref{start}, we obtain the result.
\end{proof}

Now we can prove the theorem. Let us work on the same event as the one of the proof of Proposition~\ref{Precis_rebolledo}, for which the conclusions of Lemma~\ref{controlLambda} also apply, which has probability more than $1-(\abs{I}+ \abs{J})b'e^{-d'x}$. On this event, according to Lemma~\ref{controlLambda} we have for every $j\in J$ and $t\geq 0$
\begin{align*}
     &\AAbs{\Lambda^{j,n}_{m,t}- \E(\Lambda^{j,n}_{m,t}) - \sum_{i\in I}w^{i\to j}_m B^i_{n \gamma^i_o t}} \\
     &\leq \norm{g}_1\norm{\gamma}_\infty n^{-1/2} + \sum_{i\in I} w^{i\to j}_m |B^i_{n\gamma^i_o(t-S)}-B^i_{n\gamma^i_ot}| + \Box_{\norm{\gamma}_\infty}\log((nt)\vee 1+1)(x\vee 1).
\end{align*}
Besides, according to \eqref{eq control Bi - Bi} on the same event we have for every $i\in I$
\[ |B^i_{n\gamma^i_o(t-S)}-B^i_{n\gamma^i_ot}| \leq  \square_{\norm{\gamma}_\infty} \sqrt{n^{1/2}(\log(nt+1)+x)}.
\]
Since $\sum_{i\in I} w^{i\to j}_m = 1$ we have then for every $j\in J$
\begin{align*}
     &\AAbs{\Lambda^{j,n}_{m,t}- \E(\Lambda^{j,n}_{m,t}) - \sum_{i\in I}w^{i\to j}_m B^i_{n \gamma^i_o t}} \leq \Box_{\norm{g}_1,\norm{\gamma}_\infty} \sqrt{n^{1/2}\log(nt +1)(x\vee 1)} .
\end{align*}
 Therefore, by combining this with the conclusions of Proposition~\ref{Precis_rebolledo}, we have that 
 on the same event, for every $j\in J$ and $t\geq 0$,
 \begin{align*}
   & \AAbs{N^{j,n}_{m,t} - \E[\Lambda^{j,n}_{m,t}] - \sum_{i\in I}w^{i\to j}_m B^i_{n \gamma^i_o t} -
 B^j_{ n \norm{g}_1 t\Bar{\lambda}^j_m}} \\
 & \hspace{3cm}\leq \Box_{\norm{g}_1,\norm{\gamma}_\infty} n^{1/4}(t\vee 1)^{1/4} \left(\log(nt+1)(x\vee 1)\right)^{3/4}.
 \end{align*}
By construction, the processes $((B^l_u)_{u\geq 0})_{l\in I\cup J}$ are mutually independent.
 We have then the following equality in distribution:
\begin{align*}
    \sum_{i\in I}w^{i\to j}_m B^i_{n \gamma^i_o t} + B^j_{n\Bar{\lambda}^j_m \norm{g}_1 t} 
    & \overset{\mathcal{D}}{=}  \sum_{i\in I}w^{i\to j}_m \sqrt{n \gamma^i_o}B^i_{ t} + 
    \sqrt{n \Bar{\lambda}^j_m\norm{g}_1}B^j_{t}
\end{align*}
Therefore, the process $W^{j,n}_{m,t} \coloneqq 
\E[\Lambda^{j,n}_{m,t}] + \sum_{i\in I}w^{i\to j}_m B^i_{n \gamma^i_o t} + 
B^j_{n \Bar{\lambda}^j_m\norm{g}_1 t}$ is a Gaussian process with mean
\[
\E[W^{j,n}_{m,t}] = \E[\Lambda^{j,n}_{m,t}]
\]
and covariance
\begin{align*}
    \cov(W^{j,n}_{m,t}, W^{j,n}_{m,s}) &= \sum_{i\in I} (w^{i\to j}_m)^2 n\gamma^i_o \cov(B^i_t, B^i_s) + n\norm{g}_1 \sum_{i\in I} w^{i\to j}_m \gamma^i_o \cov(B^j_t, B^j_s) \\
    & = n t\wedge s \left( (w^j_m, w^j_m, \gamma^I_o) + \norm{g}_1 \Bar{\lambda}^j_m\right).
\end{align*}
Furthermore, if $j'\neq j$ then
\begin{align*}
    \cov(W^{j,n}_{m,t}, W^{j',n}_{m,s}) &= \sum_{i\in I} w^{i\to j}_m w^{i\to j'}_m n\gamma^i_o \cov(B^i_t, B^i_s)   \\
    & = n (t\wedge s)(w^j_m, w^{j'}_m, \gamma^I_o).
\end{align*}

Let us choose $x$ such that $(\abs{I}+ \abs{J})b'e^{-d'x} = \alpha$, \ie $x = \frac{1}{d'}\log\left( \frac{(\abs{I}+\abs{J})b'}{\alpha}\right)$. Then for $b'\geq \Box$, we have $x\geq 1$ and with probability more than $1-\alpha$, for every $j\in J$ we have
\begin{equation} \label{eq sup N - W}
    \sup_{t> 0}\frac{ \AAbs{N^{j,n}_{m,t} - W^{j,n}_{m,t}}}{n^{1/4}(t\vee 1)^{1/4}\log(nt + 1)^{3/4}} \leq \Box_{\norm{g}_1,\norm{\gamma}_\infty} \log\left(\frac{\abs{I}+\abs{J}}{\alpha}\right)^{3/4}.
\end{equation}

Now let us study the hitting times. Let us work on the same event that we denote $\Omega_1$. Let $U\coloneqq \max_{j\in J} \frac{\thr}{\Bar{\lambda}^j_m \norm{g}_1} + 1 $. Then for every $j\in J$ and $t\geq 0$ we have
\[
\AAbs{N^{j,n}_{m,t} - \E[N^{j,n}_{m,t}]} \leq \AAbs{\Tilde{B}^{j,n}_{m,t}} + \Box_{\norm{g}_1,\norm{\gamma}_\infty} n^{1/4}(t\vee 1)^{1/4} \log(nt +1)^{3/4} 
\log\left(\frac{\abs{I}+\abs{J}}{\alpha}\right)^{3/4}
\]
where $\Tilde{B}^{j,n}_{m,t}$ =  $\sum_{i\in I}w^{i\to j}_m B^i_{n \gamma^i_o t} + B^j_{n\Bar{\lambda}^j_m \norm{g}_1 t} $. Then $(\Tilde{B}^{j,n}_{m,t})_{t\geq 0}$ is a Brownian motion with $0$ drift and scaling $\sigma_n \coloneqq \sqrt{ n \left( (w^j_m, w^j_m, \gamma^I_o) + \norm{g}_1 \Bar{\lambda}^j_m\right)} \leq \Box_{\norm{g}_1,\norm{\gamma}_\infty}\sqrt{n}$.
According to Lemma~\ref{lemma control z - mu}, by union bound there exists an event $\Omega_2$ with probability more than $1-\alpha$ such that on $\Omega_2$, 
\[
\forall j\in J, \quad \sup_{0\leq t \leq U} \AAbs{\Tilde{B}^{j,n}_{m,t}} < \Box_{\norm{\gamma}_\infty,\norm{g}_1}  \sqrt{nU\log(\abs{J}\alpha^{-1})}.
\]
Let us work on $\Omega_1 \cap \Omega_2$. We can now control $\AAbs{N^{j,n}_{m,t} - \E[N^{j,n}_{m,t}]}$. Furthermore, $\E[N^{j,n}_{m,t}] = \E[\Lambda^{j,n}_{m,t}]$ and by \eqref{eq encadrement compens} we have $\AAbs{ \E[\Lambda^{j,n}_{m,t}] - n\Bar{\lambda}^j_m \norm{g}_1 t} \leq \Box_{\norm{g}_1,\norm{\gamma}_\infty}\sqrt{n}$. By combining everything, we get that for every $j\in J$ and $t\in [0,U]$

\begin{align}
    &\AAbs{N^{j,n}_{m,t} - n\Bar{\lambda}^j_m \norm{g}_1 t} \nonumber \\
    &\leq \Box_{\norm{\gamma}_\infty,\norm{g}_1}\left(\sqrt{nU\log(\abs{J}\alpha^{-1})} 
+ n^{1/4} U^{1/4} \log(nU + 1)^{3/4}\log\left(\frac{\abs{I}+\abs{J}}{\alpha}\right)^{3/4}\right) \nonumber \\
& \leq \Box_{\norm{\gamma}_\infty,\norm{g}_1} \sqrt{nU} \log\left(\frac{\abs{I}+\abs{J}}{\alpha}\right)^{3/4}.
\label{eq sup N - E}
\end{align}

Since $W^{j,n}_{m,t} = \E[N^{j,n}_{m,t}] + \Tilde{B}^{j,n}_{m,t}$ and $\E[N^{j,n}_{m,t}] = \E[W^{j,n}_{m,t}]$, similarly we have
\begin{equation} \label{eq sup W - E}
    \forall j\in J, \forall t \in [0,U], \AAbs{W^{j,n}_{m,t} - n\Bar{\lambda}^j_m \norm{g}_1 t} \leq \Box_{\norm{\gamma}_\infty,\norm{g}_1}\sqrt{nU\log(\abs{J}\alpha^{-1})}.
\end{equation}

$\bullet$ Study of the hitting times of $W^{j,n}_{m,t}$ and $N^{j,n}_{m,t}$.
\newline

Let $\tauwbis \coloneqq \inf\{u\in[0,U], \W^{j,n}_{m,u} \geq n\theta\}$ and $\taunbis = \inf\{u\in[0,U], N^{j,n}_{m,u} \geq n\theta\}$. First, let us study under which conditions $\taunbis<\infty$ and  $\tauwbis<\infty$. According to \eqref{eq sup N - E}, we have that for all $j\in J$,
\begin{align*}
     N^{j,n}_{m,U} &\geq n\Bar{\lambda}^j_m\norm{g}_1 U - \Box_{\norm{\gamma}_\infty,\norm{g}_1} \sqrt{nU} \log\left(\frac{\abs{I}+\abs{J}}{\alpha}\right)^{3/4}.
\end{align*}
Then a sufficient condition to have $\taunbis<\infty$ is to have
\[
 n \Bar{\lambda}^j_m \norm{g}_1 U - \Box_{\norm{\gamma}_\infty,\norm{g}_1} \sqrt{nU} \log\left(\frac{\abs{I}+\abs{J}}{\alpha}\right)^{3/4} > n\thr,
\]
which is implied by
\begin{equation} \label{eq U}
    U > \frac{\thr}{\Bar{\lambda}^j_m \norm{g}_1} +  \Box_{\norm{\gamma}_\infty,\norm{g}_1, \Bar{\lambda}_{\text{min}},\thr} n^{-1/2}  \log\left(\frac{\abs{I}+\abs{J}}{\alpha}\right)^{3/4}
\end{equation}
which holds for 
\begin{equation} \label{eq large n}
    n\geq \Box_{\norm{\gamma}_\infty,\norm{g}_1, \Bar{\lambda}_{\text{min}},\thr} \log\left(\frac{\abs{I}+\abs{J}}{\alpha}\right)^{3/2}
\end{equation}
by definition of $U$. By using \eqref{eq sup W - E}, we can apply the same reasoning to $W^{j,n}_{m}$ to find that \eqref{eq U} is also a sufficient condition to have $\tauwbis<\infty$.

This means that on this event, both processes $W^{j,n}_m$ and $N^{j,n}_m$ have reached $n\thr$ before time $U$, so we have in fact $\taun = \taunbis$ and $\tauw = \tauwbis$. Therefore, for every $j\in J$ and $Z\in \{N,W\}$, we have $\tau^{Z^j}_n \leq U$, and controlling the trajectory of $Z^{j,n}_m$ on $[0,U]$ is sufficient to control it on $[0,\tau^{Z^j}_n]$. Using this fact, let us now study the convergence of $\taun$ and $\tauw$ to $\frac{\thr}{\Bar{\lambda}^j_m \norm{g}_1}$.
\newline

1. Convergence of $\taun$. Since $N^{j,n}_{m,\taun}$ is an integer and cannot make two jumps at the same time, we have $N^{j,n}_{m,\taun} = \lceil n\thr \rceil$. Therefore, for every $j\in J$ we have
\begin{align*}
    \AAbs{n\thr- n \Bar{\lambda}^j_m \norm{g}_1 \taun } &\leq \AAbs{n\thr -  \lceil n\thr \rceil} + \AAbs{N^{j,n}_{m,\taun} - n \Bar{\lambda}^j_m \norm{g}_1 \taun} \\
    & \leq 1 + \Box_{\norm{\gamma}_\infty,\norm{g}_1} \sqrt{nU} \log\left(\frac{\abs{I}+\abs{J}}{\alpha}\right)^{3/4}
\end{align*}
were the last inequality holds using \eqref{eq sup N - E}. So we have
\[
\AAbs{\taun - \frac{\thr}{\Bar{\lambda}^j_m \norm{g}_1}} \leq \Box_{\norm{\gamma}_\infty,\norm{g}_1, \Bar{\lambda}_{\text{min}},\thr}  n^{-1/2}\log\left(\frac{\abs{I}+\abs{J}}{\alpha}\right)^{3/4}.
\]

2. Convergence of $\tauw$. According to \eqref{eq sup W - E} we have
\begin{align*}
    \AAbs{n\thr- n \Bar{\lambda}^j_m \norm{g}_1 \tauw } &=  \AAbs{W^{j,n}_{m,\taun} - n \Bar{\lambda}^j_m \norm{g}_1 \tauw} \\
    & \leq  \Box_{\norm{\gamma}_\infty,\norm{g}_1}\sqrt{nU\log(\abs{J}\alpha^{-1})}
\end{align*}
so
\[
\AAbs{\tauw - \frac{\thr}{\Bar{\lambda}^j_m \norm{g}_1}} \leq\Box_{\norm{\gamma}_\infty,\norm{g}_1, \Bar{\lambda}_{\text{min}},\thr} \sqrt{\frac{1}{n}\log(\abs{J}\alpha^{-1})}
\]

3. Study of the difference $\AAbs{\taun - \tauw}$. Let us begin by bounding $\AAbs{W^{j,n}_{m,\taun} - W^{j,n}_{m,\tauw}}$. We have
\begin{align*}
    \AAbs{W^{j,n}_{m,\taun} - W^{j,n}_{m,\tauw}} &= \AAbs{W^{j,n}_{m,\taun} - n\thr}\\
    &\leq \AAbs{W^{j,n}_{m,\taun} - \lceil n\thr \rceil } + \abs{ \lceil n\thr \rceil - n\thr}\\
    &\leq  \AAbs{W^{j,n}_{m,\taun} - \Pi^{j,n}_{m,\taun}} + 1 \\
\end{align*}
so according to \eqref{eq sup N - W} we have
\begin{equation} \label{eq wtau}
     \AAbs{W^{j,n}_{m,\taun} - W^{j,n}_{m,\tauw}} \leq \Box_{\norm{\gamma}_\infty,\norm{g}_1, \Bar{\lambda}_{\text{min}},\thr} n^{1/4}\log(n)^{3/4} \log\left(\frac{\abs{I}+\abs{J}}{\alpha}\right)^{3/4}
\end{equation}

Besides,
\begin{align*}
   \AAbs{W^{j,n}_{m,\taun} - W^{j,n}_{m,\tauw}} &= \AAbs{n\Bar{\lambda}^j_m\left(\mathbb{G}(\taun) - \mathbb{G}(\tauw)\right) + \left(\Tilde{B}^{j,n}_{m,\taun}-\Tilde{B}^{j,n}_{m,\tauw}\right) }
\end{align*}
Besides, if $t_1 \geq t_2 \geq n^{-1/2}$ then
\begin{align*}
    \mathbb{G}(t_1) - \mathbb{G}(t_2) &= \int_{s = t_2}^{t_1}  \int_{u=0}^s g(u) du ds \\
    & =  \int_{s = t_2}^{t_1} \norm{g}_1 ds \\
    & = \norm{g}_1(t_1 - t_2).
\end{align*}
Since $\taun$ and $\tauw$ both converge to $\frac{\thr}{\Bar{\lambda}^j_m \norm{g}_1} >0$, so for $n$ verifying \eqref{eq large n} we have $\taun \geq n^{-1/2}$ and $\tauw \geq n^{-1/2}$ so $\abs{\mathbb{G}(\taun) - \mathbb{G}(\tauw)} = \abs{\taun - \tauw}\norm{g}_1$.  Therefore
\begin{align*}
   &\AAbs{W^{j,n}_{m,\taun} - W^{j,n}_{m,\tauw}} \geq n\Bar{\lambda}^j_m \norm{g}_1 \abs{\taun - \tauw} - \AAbs{\Tilde{B}^{j,n}_{m,\taun}- \Tilde{B}^{j,n}_{m,\tauw}}
\end{align*}
According to Lemma~\ref{Control_brownian_2times} and since $\tauw$ and $\taun$ are bounded on $\Omega_1\cap\Omega_2$, by union bound there exists an event $\Omega_3$ of probability more than $1-\alpha$ such that on $\Omega_1\cap \Omega_2\cap\Omega_3$ we have for every $j\in J$
\[
\AAbs{\Tilde{B}^{j,n}_{m,\taun} - \Tilde{B}^{j,n}_{m,\tauw}}\leq \Box_{\norm{\gamma}_\infty,\norm{g}_1, \Bar{\lambda}_{\text{min}},\thr} \sqrt{\left(\left(n\AAbs{\taun - \tauw}\right)\vee 1\right)  \log(n\abs{J}\alpha^{-1})}.
\]
If $\left(n\AAbs{\taun - \tauw}\right)\vee 1 = 1$ then $\AAbs{\taun - \tauw}\leq 1/n$ and we have the result. Otherwise we have
\begin{align*}
   &\AAbs{W^{j,n}_{m,\taun} - W^{j,n}_{m,\tauw}} \\
   &\geq n \Bar{\lambda}^j_m\abs{\taun - \tauw} - \Box_{\norm{\gamma}_\infty,\norm{g}_1, \Bar{\lambda}_{\text{min}},\thr}\sqrt{n\AAbs{\taun - \tauw}\log(n\abs{J}\alpha^{-1})} .
\end{align*}
Combining this with \eqref{eq wtau} we have
\begin{align*}
   & \Box_{\norm{\gamma}_\infty,\norm{g}_1, \Bar{\lambda}_{\text{min}},\thr} n^{1/4}\log(n)^{3/4} \log\left(\frac{\abs{I}+\abs{J}}{\alpha}\right)^{3/4} \\
   &\geq n\abs{\taun - \tauw} - \Box_{\norm{\gamma}_\infty,\norm{g}_1, \Bar{\lambda}_{\text{min}},\thr}\sqrt{n\AAbs{\taun - \tauw}\log(n\abs{J}\alpha^{-1})} .
\end{align*}
By writing $X \coloneqq \sqrt{ n\abs{\taun - \tauw}}$, $a(n)\coloneqq  \Box_{\norm{\gamma}_\infty,\norm{g}_1, \Bar{\lambda}_{\text{min}},\thr}\sqrt{\log(n\abs{J}\alpha^{-1})}$ and $b(n) \coloneqq \Box_{\norm{\gamma}_\infty,\norm{g}_1, \Bar{\lambda}_{\text{min}},\thr} n^{1/4}\log(n)^{3/4} \log\left(\frac{\abs{I}+\abs{J}}{\alpha}\right)^{3/4} $, we can rewrite it as
\begin{align*} 
    X^2 - a(n)X - b(n) \leq 0.
\end{align*}
This polynomial has two real roots, the largest one being $x_0 = \frac{a(n)+\sqrt{a(n)^2 + 4b(n)}}{2}$, so necessarily we have $X\leq x_0$, which implies that $X\leq a(n) + \sqrt{b(n)}$, \ie
\begin{multline*}
     \sqrt{ n\abs{\taun - \tauw}} \\\leq  \Box_{\norm{\gamma}_\infty,\norm{g}_1, \Bar{\lambda}_{\text{min}},\thr} \left(
     \sqrt{\log(n\abs{J}\alpha^{-1})} +  n^{1/8}\log(n)^{3/8} \log\left(\frac{\abs{I}+\abs{J}}{\alpha}\right)^{3/8}
     \right) \\
     \leq  \Box_{\norm{\gamma}_\infty,\norm{g}_1, \Bar{\lambda}_{\text{min}},\thr}  n^{1/8}\log(n)^{3/8} \log\left(\frac{\abs{I}+\abs{J}}{\alpha}\right)^{3/8}
\end{multline*}
and finally we get
\begin{align*}
    \abs{\taun - \tauw} &\leq \Box_{\norm{\gamma}_\infty,\norm{g}_1, \Bar{\lambda}_{\text{min}},\thr}   \left(\frac{\log(n)}{n}\right)^{3/4} \log\left(\frac{\abs{I}+\abs{J}}{\alpha}\right)^{3/4} .
\end{align*}
Since $\mathbb{P}(\Omega_1\cap \Omega_2 \cap \Omega_3) \geq 1-3\alpha$, we get the results of the theorem by choosing $\alpha/3$ instead of $\alpha$.

\subsection{Proof of Proposition~\ref{prop cas particulier}}

Let $j_1$ and $j_2$ the two categories, $i_1$ the feature characterizing category $j_1$ and $i_2$ the feature characterizing category $j_2$.  Let us compute the feature discrepancies of the features. Since every $o\in j_1$ has feature $i_1$, for every $o\in j_1$ we have $\gamma^{i_1}_o = \gamma$ and since there is no $o\in j_2$ having feature $i_1$, for every $o\in j_2$ we have $\gamma^{i_1}_o = 0$. Therefore
\[
d^{i_1 \to j_1} = \gamma.
\]
Similarly, for every $i\in j_2$ we have $\gamma^{i_2}_o = \gamma$ and for every $o\in j_1$ we have $\gamma^{i_2}_o = 0$ so
\[
d^{i_2 \to j_1} = - \gamma < d^{i_1 \to j_1}
\]
Let $i\in I\setminus\{i_1,i_2\}$. Then there are $4$ rockets among the $16$ having simultaneously features $i_1$ and $i$. Since $j_1$ contains $5$ objects, there is at least one object without feature $i$ in $j_1$. Therefore $\brac{\gamma^i_o}_{o\in j_1} < \gamma$ and
\[
d^{i\to j_1} < \gamma = d^{i_1 \to j_1}.
\]
Thus $I^{j_1} = \arg\max_{i\in I} d^{i\to j_1} = \{i_1\}$ and by definition of the family $w_\infty$, for $i\in I$ we have $w^{i\to j_1} = \mathbb{1}_{i=i_1}$. By symmetry, the same computations are true when exchanging index $1$ with index $2$ so for $i\in I$ we also have $I^{j_1} = \{i_2\}$ and $w^{i\to j_2} = \mathbb{1}_{i=i_2}$. Now, let us compute the $\Bar{\lambda}^j_{\infty, o}$. For $o\in j_1$,
\[
\Bar{\lambda}^{j_1}_{\infty, o} = \sum_{i\in I} w^{i\to j_1}_\infty \gamma^i_o = \gamma^{i_1}_o = \gamma
\]
and
\[
\Bar{\lambda}^{j_2}_{\infty, o} = \sum_{i\in I} w^{i\to j_2}_\infty \gamma^i_o = \gamma^{i_2}_o = 0
\]
so 
\[
\Bar{\lambda}^{j_1}_{\infty, o} > \Bar{\lambda}^{j_2}_{\infty, o} + \Dlamb
\]
where $\Dlamb$ is any constant in $(0,\gamma)$. By symmetry, for $o\in j_2$ we also have
\[
\Bar{\lambda}^{j_2}_{\infty, o} > \Bar{\lambda}^{j_1}_{\infty, o} + \Dlamb.
\]

\end{appendices}

\bibliography{biblio}%

\end{document}